\newcommand{\rev}[1]{{\leavevmode\color{black}#1}}
\theoremstyle{plain}
\newtheorem{lemma}{Lemma}[section]
\theoremstyle{definition}
\newtheorem{definition}{Definition}[section]
\theoremstyle{remark}
\def\AA{\mathcal{A}}\def\BB{\mathcal{B}}\def\CC{\mathcal{C}}
\def\DD{\mathcal{D}}
\def\LL{\mathcal{L}}
\def\MM{\mathcal{M}}
\def\SS{\mathcal{S}}
\def\Ebb{\mathbb{E}}
\def\Rbb{\mathbb{R}}
\def\R{\Rbb}
\def\one{{\mathbbm1}}
\def\*{\star}
\DeclareMathSymbol{\mhef}{\mathord}{operators}{`\-}
\newcommand{\norm}[1]{ \| #1 \|  }
\DeclareMathOperator*{\argmax}{arg\,max}
\newcommand{\E}{\Ebb}
\def\h{h}
\def\Const{\frac{(1-\lambda)\gamma }{1-\gamma\lambda}}
\def\algo{HuRL\xspace}
\def\algofull{Heuristic-Guided Reinforcement Learning\xspace}
\title{Heuristic-Guided Reinforcement Learning}
\author{%
  Ching-An Cheng \\
  Microsoft Research\\
  Redmond, WA\\
  \texttt{chinganc@microsoft.com} \\
  \And
  Andrey Kolobov \\
  Microsoft Research\\
  Redmond, WA\\
  \texttt{akolobov@microsoft.com}\\
   \And
   Adith Swaminathan \\
   Microsoft Research \\
   Redmond, WA\\
   \texttt{adswamin@microsoft.com} 
}
\begin{document}

\maketitle

\begin{abstract}
  We provide a framework for accelerating reinforcement learning (RL) algorithms by heuristics constructed from domain knowledge or offline data.
  Tabula rasa RL algorithms require environment interactions or computation that scales with the horizon of the sequential decision-making task.
  Using our framework, we show how heuristic-guided RL induces a much shorter-horizon subproblem that provably solves the original task. 
  Our framework can be viewed as a horizon-based regularization for controlling bias and variance in RL under a finite interaction budget.
  On the theoretical side, we characterize properties of a good heuristic and its impact on RL acceleration. In particular, we introduce the novel concept of an {improvable heuristic}, a heuristic that allows an RL agent to extrapolate beyond its prior knowledge.
  On the empirical side, we instantiate our framework to accelerate several state-of-the-art algorithms in simulated robotic control tasks and procedurally generated games.
  Our framework complements the rich literature on warm-starting RL with expert demonstrations or exploratory datasets, and introduces a principled method for injecting prior knowledge into RL.
\end{abstract}

\section{Introduction} 
\label{sec:intro}

Many recent empirical successes of reinforcement learning (RL) require solving problems with very long decision-making horizons.
OpenAI Five~\citep{berner2019dota} used episodes that were $20000$ timesteps on average, while AlphaStar~\citep{vinyals2019grandmaster} used roughly $5000$ timesteps. 
Long-term credit assignment is a very challenging statistical problem, with the sample complexity 
growing quadratically (or worse) with the horizon~\citep{dann2015sample}. Long horizons (or, equivalently, large discount factors) also increase RL's computational burden, leading to slow optimization convergence~\citep{sidford2018near}. 
This makes RL algorithms require prohibitively large amounts of interactions and compute: even with tuned hyperparameters, AlphaStar needed over $10^8$ samples and OpenAI Five needed over $10^7$ PFLOPS of compute. 

A popular approach to mitigate the statistical and computational issues of tabula rasa RL methods is to warm-start or regularize learning with prior knowledge~\citep{tessler2020maximizing,NEURIPS2019_eba237ec,bhardwaj2020blending,farahmand2016truncated,vinyals2019grandmaster,berner2019dota,nair2020accelerating,hester2018deep}. 
For instance, AlphaStar learned a policy and value function from human demonstrations and regularized the RL agent using imitation learning (IL).
AWAC~\citep{nair2020accelerating} warm-started a policy using batch policy optimization on exploratory datasets. 
While these approaches have been effective in different domains, none of them explicitly address RL's complexity dependence on horizon.

In this paper, we propose a complementary regularization 
technique that relies on  heuristic value functions, or \emph{heuristics}\footnote{\rev{We borrow this terminology from the planning literature to refer to guesses of $V^*$ in an MDP~\citep{kolobov2012}. 
}} for short, to effectively shorten the problem horizon faced by an online RL agent \rev{for fast learning}. 
We call this approach \algofull (\algo). The core idea is simple: given a Markov decision process (MDP) $\MM=(\SS, \AA, P, r, \gamma)$ and a heuristic $\h:\SS\to\R$, we select a mixing coefficient $\lambda\in[0,1]$ and have the agent solve a new MDP $\widetilde{\MM}=(\SS, \AA, P, \widetilde{r}, \widetilde{\gamma})$ with a reshaped reward and a smaller discount \rev{(i.e. a shorter horizon)}:
\begin{align} \label{eq:short summary}
    \widetilde{r}(s,a) \coloneqq r(s,a) + (1-\lambda)\gamma \E_{s'\sim P(\cdot| s,a)}[\h(s')]
    \quad
    \text{and}
    \quad
    \widetilde{\gamma} \coloneqq \lambda\gamma.
\end{align}

%
\algo effectively introduces horizon-based regularization that determines whether long-term value information should come from collected experiences or the heuristic. 
By modulating the effective horizon via $\lambda$, we trade off the bias and the complexity of solving the reshaped MDP. \algo with $\lambda=1$ recovers the original problem and with $\lambda=0$ creates an easier contextual bandit problem~\citep{pmlr-v119-foster20a}.


A heuristic $\h$ in \algo represents a prior guess of the desired long-term return of states, which ideally is the optimal value function $V^*$ of the unknown MDP $\MM$. 
\rev{When the heuristic $\h$ captures the state ordering of $V^*$ well, conceptually, it becomes possible to make good long-term decisions by short-horizon planning or even acting greedily.}
How do we construct a good heuristic? In the planning literature, this is typically achieved by solving a relaxation of the original problem \cite{Hoffmann_2001,Richter_2010,akolobov-aaai10}. 
Alternatively, one can learn it from batch data collected by exploratory behavioral policies (as in offline RL~\citep{gulcehre2021regularized}) or from expert policies (as in IL~\citep{cheng2020policy}).\footnote{We consider the RL setting for imitation where we suppose the rewards of expert trajectories are available.}
\rev{For some dense reward problems, 
a zero heuristic can be effective in reducing RL complexity, as exploited by the guidance discount framework~\citep{blackwell1962discrete,petrik2008biasing,jiang2015dependence,jiang2016structural,chen2018improving,amit2020discount}.}
In this paper, we view heuristics as a unified representation of various forms of prior knowledge, such as expert demonstrations, exploratory datasets, and engineered guidance. 

Although the use of heuristics to accelerate search has been popular in planning and control algorithms, e.g., A*~\citep{hart1968formal}, MCTS~\citep{browne2012survey}, and MPC~\citep{zhong2013value,bhardwaj2020blending,hoeller2020deep,bejjani2018planning}, its theory is less developed for settings where the MDP is \emph{unknown}.
\rev{The closest work in RL is  
potential-based reward shaping (PBRS)~\citep{ng1999policy}, which reshapes the reward into  $\bar{r}(s,a) = r(s,a)+\gamma \E_{s'|s,a}[\h(s')]-\h(s)$ while keeping the original discount.
PBRS can use any heuristic to reshape the reward while preserving the ordering of policies. 
However, giving PBRS rewards to an RL algorithm does not necessarily lead to faster learning, 
because the base RL algorithm 
would still seek to explore to resolve long-term credit assignment.
\algo~allows common RL algorithms to leverage the short-horizon potential provided by a heuristic to learn faster.
}

In this work, we provide a theoretical foundation of \algo to enable adopting heuristics and horizon reduction for accelerating RL, \rev{combining advances from the PBRS and the guidance discount literatures.}
On the theoretical side, we derive a bias-variance decomposition of \algo's horizon-based regularization in order to characterize the solution quality as a function of $\lambda$ and $\h$. 
Using this insight, we provide sufficient conditions for achieving an effective trade-off, including properties required of a base RL algorithm that solves the reshaped MDP $\widetilde{\MM}_\lambda$. 
Furthermore, we define the novel concept of an \emph{improvable} heuristic and prove that good heuristics for \algo can be constructed from data using existing \emph{pessimistic} offline RL algorithms (such as pessimistic value iteration~\citep{jin2020pessimism,liu2020provably}).

%


The effectiveness of \algo depends on the heuristic quality, so we design \algo to employ a sequence of mixing coefficients (i.e. $\lambda$s) that increases as the agent gathers more data from the environment.
Such a strategy induces a learning curriculum that enables \algo to remain robust to non-ideal heuristics.  
\algo starts off by guiding the agent's search direction with a heuristic. 
As the agent becomes more experienced, it gradually removes the guidance and lets the agent directly optimize the true long-term return.
%
We empirically validate \algo in MuJoCo~\citep{todorov2012mujoco} robotics control problems and Procgen games~\citep{procgen} 
with various heuristics and base RL algorithms.
The experimental results demonstrate the versatility and effectiveness of \algo in accelerating RL algorithms.

%
%


\vspace{-1mm}
\section{Preliminaries}
\vspace{-1mm}

\subsection{Notation}
\vspace{-1mm}

We focus on discounted infinite-horizon Markov Decision Processes (MDPs) for ease of exposition. The technique proposed here can be extended to other MDP settings.\footnote{The results here can be readily applied to finite-horizon MDPs; for other infinite-horizon MDPs, we need further, e.g., mixing assumptions for limits to exist.}
A discounted infinite-horizon MDP is denoted as a 5-tuple  $\MM=(\SS, \AA, P, r, \gamma)$,
where $\SS$ is the state space, $\AA$ is the action space, $P(s'|s,a)$ is the transition dynamics, $r(s,a)$ is the reward function, and $\gamma \in [0,1)$ is the discount factor.
Without loss of generality, we assume $r:\SS\times\AA\to[0,1]$. We allow the state and action spaces $\SS$ and $\AA$ to be either discrete or continuous.
Let $\Delta(\cdot)$ denote the space of probability distributions. A decision-making policy $\pi$ is a conditional distribution $\pi:\SS\to\Delta(\AA)$, which can be deterministic.
We define some shorthand for writing expectations:
For a state distribution $d\in\Delta(\SS)$ and a function ${V : \SS\to\R}$, we define $V(d) \coloneqq \E_{s\sim d}[V(s)]$; similarly, for a policy $\pi$ and a function $Q :\SS\times\AA\to\R$, we define ${Q(s, \pi) \coloneqq \E_{a\sim\pi(\cdot|s)}[Q(s,a)]}$. Lastly, we define $\E_{s'|s,a} \coloneqq \E_{s'\sim P(\cdot|s,a)}$.

Central to solving MDPs are the concepts of value functions and average distributions.
For a policy $\pi$, we define its state value function $V^\pi$ as
$
    V^\pi(s) \coloneqq \E_{\rho_s^\pi} \left[ \sum_{t=0}^\infty \gamma^t r(s_t, a_t)  \right],
$
where $\rho_s^\pi$ denotes the trajectory distribution of $s_0, a_0, s_1, \dots$
induced by running $\pi$ starting from $s_0 =s$.
We define the state-action value function (or the Q-function) as
$Q^\pi(s,a) \coloneqq r(s,a) + \gamma \E_{s'|s,a} [V^\pi(s')]$.
We denote the optimal policy as $\pi^*$ and its state value function as $V^* \coloneqq V^{\pi^*}$. Under the assumption that rewards are in $[0,1]$, we have $V^\pi(s), Q^\pi(s,a)\in[0, \frac{1}{1-\gamma}]$ for all $\pi$, $s\in\SS$, and $a\in\AA$.
We denote the initial state distribution of interest as $d_0 \in \Delta(\SS)$ and the state distribution of policy $\pi$ at time $t$ as $d_t^\pi$, with $d_0^\pi = d_0$.
Given $d_0$, we define the average state distribution of a policy $\pi$ as
$
    d^\pi \coloneqq (1-\gamma) \sum_{t=0}^\infty \gamma^t  d_t^\pi
$.
With a slight abuse of notation, we also write $d^\pi(s,a) \coloneqq d^\pi(s)\pi(a|s)$.
%

\vspace{-1mm}
\subsection{Setup: Reinforcement Learning with Heuristics} \vspace{-1mm} 
\label{sec:RL with heuristic setup}

We consider RL with prior knowledge expressed in the form of a \rev{heuristic value function}.
The goal is to find a policy $\pi$ that has high return through interactions with an unknown MDP $\MM$, i.e., 
    $\max_\pi V^\pi(d_0)$.
While the agent here does not fully know 
$\MM$, we suppose that, before interactions start the agent is provided with a heuristic $\h:\SS\to\R$ which the agent can query throughout learning.

The heuristic $\h$ represents a prior guess of the optimal value function $V^*$ of $\MM$. 
Common sources of heuristics are domain knowledge as typically employed in planning, and logged data collected by exploratory or by expert behavioral policies.
In the latter, a heuristic guess of $V^*$ can be computed from the data by offline RL algorithms. For instance, when we have trajectories of an expert behavioral policy, 
Monte-Carlo regression estimate of the observed returns 
may be a good guess of $V^*$.


Using heuristics to solve MDP problems has been popular in planning and control, but its usage is rather limited in RL.  The closest provable technique in RL is PBRS~\citep{ng1999policy}, where the reward is modified into 
    $\overline{r}(s,a) \coloneqq r(s,a) + \gamma \E_{s'|s,a}[\h(s')] - \h(s)$.
It can be shown that this transformation  does not introduce bias into the policy ordering, and therefore solving the new MDP $\overline{\MM} \coloneqq (\SS,\AA,P,\overline{r},\gamma)$ would yield the same optimal policy $\pi^*$ of $\MM$.
%
%

Conceptually when the heuristic is the optimal value function $\h = V^*$, the agent should be able to find the optimal policy $\pi^*$ of $\MM$ by acting myopically, as $V^*$ already contains all necessary long-term information for good decision making.
\rev{However, running an RL algorithm with the PBRS reward (i.e. solving $\overline{\MM} \coloneqq (\SS,\AA,P,\overline{r},\gamma)$) does not take advantage of this shortcut. To make learning efficient, we need to also let the base RL algorithm know that acting greedily (i.e., using a smaller discount) with the shaped reward can yield good policies. 
An intuitive idea 
is to run the RL algorithm to maximize $\overline{V}^\pi_\lambda(d_0)$, where $\overline{V}^\pi_\lambda$ denotes the value function of $\pi$ in an MDP $\overline{\MM}_\lambda \coloneqq (\SS,\AA,P,\overline{r},\lambda\gamma)$ for some $\lambda\in[0,1]$.
%
However this does not always work. For example, when $\lambda=0$, $\max_\pi \overline{V}^\pi_\lambda(d_0)$ only optimizes for the initial states $d_0$, but obviously the agent is going to encounter other states in $\MM$. 
We next propose a provably correct version, \algo, to leverage this short-horizon insight. 
}

\vspace{-1mm}
\section{Heuristic-Guided Reinforcement Learning}
\vspace{-1mm}

We propose a general framework, \algo, for leveraging heuristics to accelerate RL. 
In contrast to tabula rasa RL algorithms that attempt to directly solve the long-horizon MDP $\MM$, \algo uses a heuristic to guide the agent in solving a sequence of short-horizon MDPs so as to amortize the complexity of long-term credit assignment.
In effect, \algo creates a heuristic-based learning curriculum to help the agent learn faster. 

\vspace{-1mm}
\subsection{Algorithm}
\vspace{-1mm}

\algo takes a reduction-based approach to realize the idea of heuristic guidance. 
As summarized in \cref{alg:ouralg}, \algo takes a heuristic $\h:\SS\to\R$ and a base RL algorithm $\LL$ as input, and outputs an approximately optimal policy for the original MDP $\MM$.
During training, \algo iteratively runs the base algorithm $\LL$ to collect data from the MDP $\MM$ and then uses the heuristic $\h$ to modify the agent's collected experiences. 
Namely, in iteration $n$, the agent interacts with the original MDP $\MM$ and saves the raw transition tuples\footnote{If $\LL$ learns only with trajectories, we transform each tuple and assemble them to get the modified trajectory.} $\DD_n = \{(s,a,r,s')\}$ (line \ref{line:data collection}).  \algo\ then defines a reshaped MDP $\widetilde{\MM}_n \coloneqq (\SS,\AA,P,\widetilde{r}_n, \widetilde{\gamma}_n)$ (line \ref{line:reshape mdp}) by changing the rewards and lowering the discount factor:
\begin{align} \label{eq:reshaped MDP}
    \widetilde{r}_n(s,a) \coloneqq r(s,a) + (1-\lambda_n)\gamma \E_{s'|s,a}[\h(s')] \qquad \text{and}\qquad 
    \widetilde{\gamma}_n \coloneqq \lambda_n \gamma, 
\end{align}
where $\lambda_n\in[0,1]$ is the mixing coefficient. 
The new discount $\widetilde{\gamma}_n$ effectively gives  $\widetilde{\MM}_n$ a shorter horizon than $\MM$'s, 
while the heuristic $\h$ is blended into the new reward in \eqref{eq:reshaped MDP} 
to account for the missing long-term information.
We call $\widetilde{\gamma}_n =\lambda_n \gamma $ in \eqref{eq:reshaped MDP} the \emph{guidance discount} to be consistent with prior literature~\citep{jiang2015dependence}, which can be viewed in terms of our framework as using a zero heuristic.
%
%
%
In the last step (line \ref{line:train}), \algo calls the base algorithm $\LL$ to perform updates with respect to the reshaped MDP $\widetilde{\MM}_n$.
This is realized by
\begin{enumerate*}[label=\textit{\arabic*)}] 
    \item setting the discount factor used in $\LL$ to $\widetilde{\gamma}_n $, and 
    \item setting the sampled reward to $r + (\gamma  - \widetilde{\gamma}_n)  \h(s')$ for every transition tuple $(s,a,r,s')$ collected from $\MM$.
\end{enumerate*}
We remark that  the base algorithm $\LL$ in line \ref{line:data collection} always collects trajectories of lengths proportional to the original discount $\gamma$, while internally the optimization is done with a lower discount $\widetilde{\gamma}_n$ in line \ref{line:train}.

Over the course of training, \algo repeats the above steps with a sequence of increasing mixing coefficients $\{\lambda_n\}$.
From \eqref{eq:reshaped MDP} we see that as the agent interacts with the environment, the effects of the heuristic in MDP reshaping decrease and the effective horizon of the reshaped MDP increases.

\begin{algorithm}[t]
    \caption{\algofull (\algo)} \label{alg:ouralg}
\begin{algorithmic}[1]
    \REQUIRE MDP $\MM=(\SS,\AA,P,r,\gamma)$, RL algorithm $\LL$, heuristic $\h$,  mixing coefficients $\{\lambda_n\}$.
    
    \FOR{$n=1,\dots,N$}
    \STATE $\DD_n \gets \LL$.CollectData($\MM$) \label{line:data collection}
    \STATE Get $\lambda_n$  from $\{\lambda_n\}$ and construct $\widetilde{\MM}_n = (\SS,\AA,P,\widetilde{r}_n, \widetilde{\gamma}_n)$ according to \eqref{eq:reshaped MDP} using $\h$ and $\lambda_n$ \label{line:reshape mdp}
    \STATE $\pi_n \gets \LL$.Train($\DD_n$, $\widetilde{\MM}_n$) \label{line:train}
    \ENDFOR
    \RETURN $\pi_N$
\end{algorithmic}
\end{algorithm}

\vspace{-1mm}
\subsection{\algo as Horizon-based Regularization}
\vspace{-1mm}

We can think of \algo as introducing a horizon-based \emph{regularization} for RL, where the regularization center is defined by the heuristic and its strength diminishes as the mixing coefficient increases.
As the agent collects more experiences, \algo gradually removes the effects of regularization and the agent eventually optimizes for the original MDP. 


\algo's regularization is designed to reduce learning variance, similar to the role of regularization in supervised learning. 
Unlike the typical weight decay imposed on function approximators (such as the agent's policy or value networks), our proposed regularization leverages the structure of MDPs to regulate the complexity of the MDP the agent faces, which scales with the MDP's discount factor (or, equivalently, the horizon). When the guidance discount $\widetilde{\gamma}_n$ is lower than the original discount $\gamma$ (i.e. $\lambda_n<1$), the reshaped MDP $\widetilde{\MM}_n$ given by \eqref{eq:reshaped MDP} has a shorter horizon and requires fewer samples to solve.
However, the reduced complexity comes at the cost of bias, because the agent is now incentivized toward maximizing the performance with respect to the heuristic rather than the original long-term returns of $\MM$.
In the extreme case of $\lambda_n=0$, 
\algo would solve a zero-horizon contextual bandit problem with contexts (i.e. states) sampled from $d^\pi$ of $\MM$. 

%

\begin{figure*}[t]
	\centering
    \begin{subfigure}[b]{0.3\textwidth}
		\includegraphics[height=0.65\textwidth]{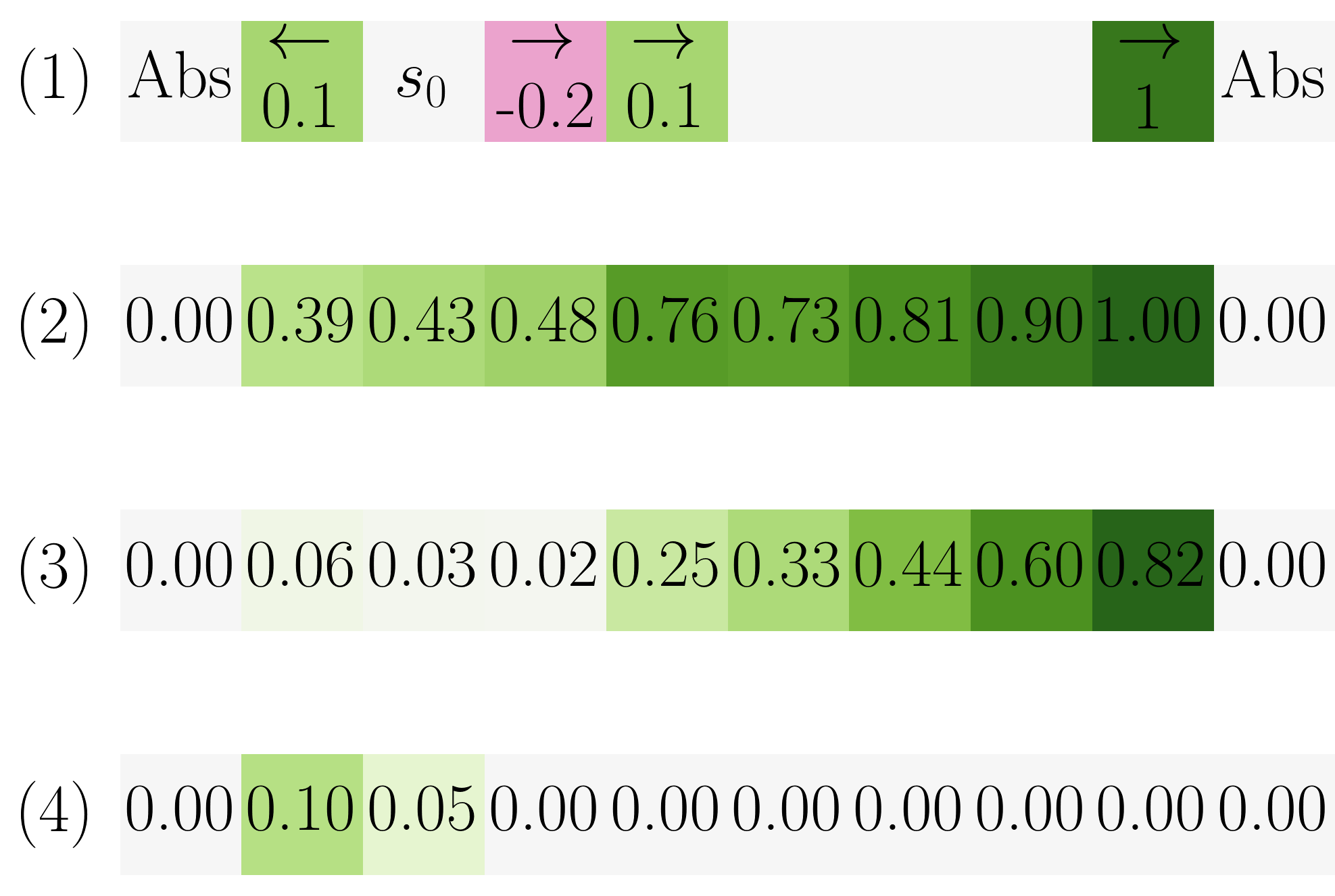}
		\caption{Heatmap of different values.}
		\label{fig:toy_value_heatmap}
	\end{subfigure}
	\hspace{2mm}
	\begin{subfigure}[b]{0.3\textwidth}
	    \centering
		\includegraphics[height=0.65\textwidth]{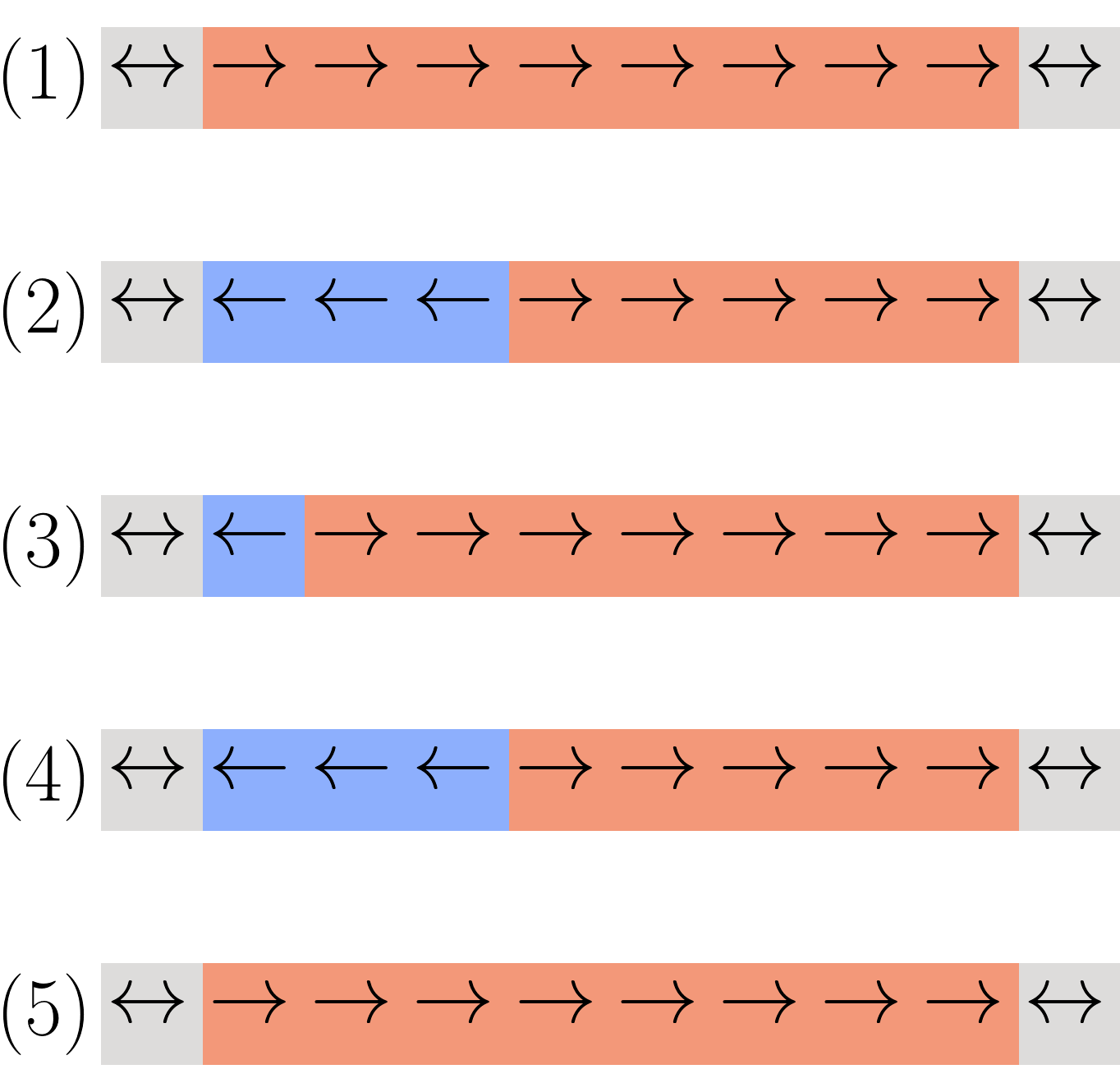}
		\caption{Different policy behaviors.}
		\label{fig:toy_policy_heatmap}
	\end{subfigure}
	\hspace{2mm}
	\begin{subfigure}[b]{0.3\textwidth}
		\includegraphics[height=0.65\textwidth]{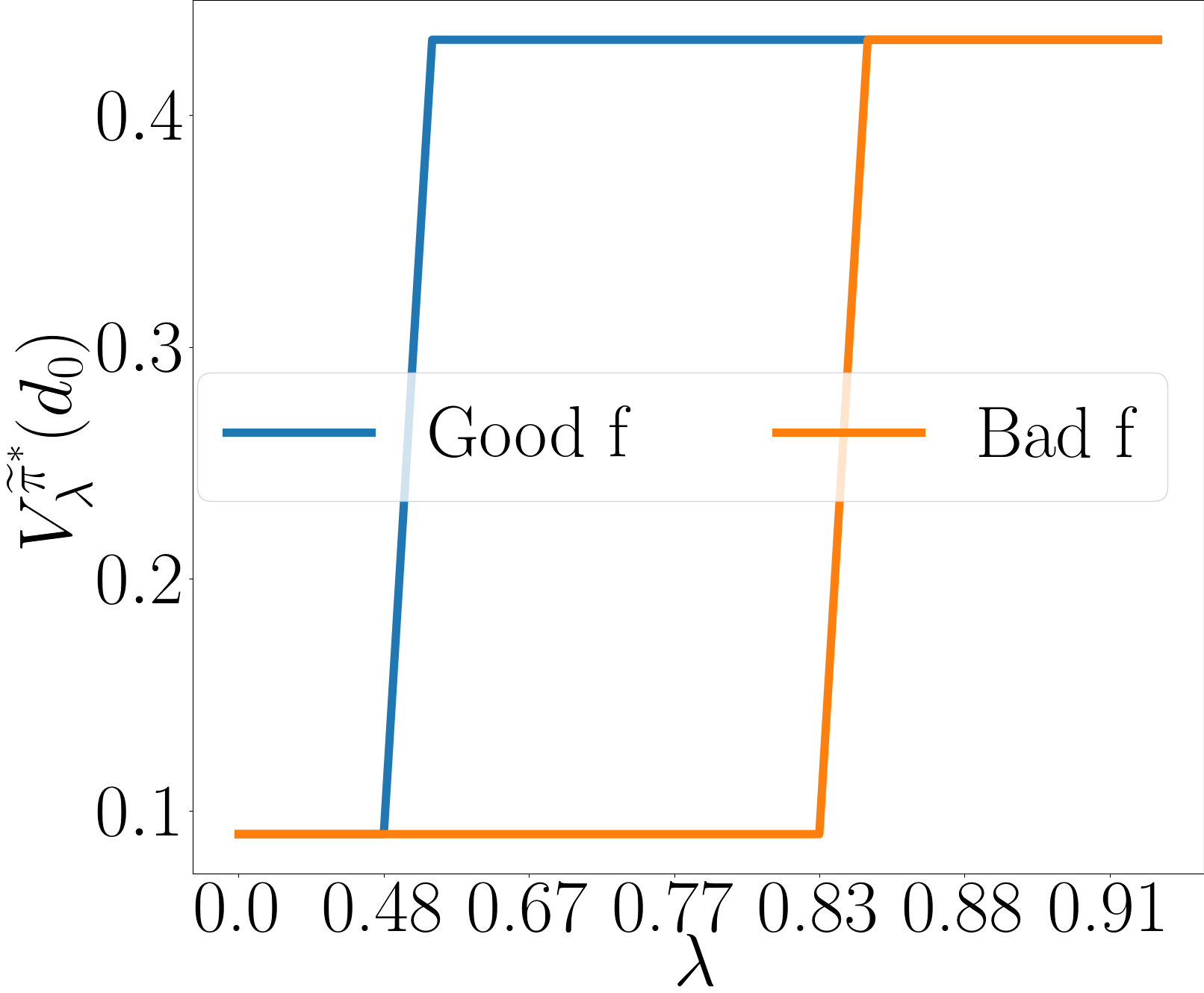}
		\caption{\algo with different $\h$ and $\lambda$.}
		\label{fig:toy_plot}
	\end{subfigure}
\caption{\small{\textbf{Example of \algo in a chain MDP.} Each cell in a row in each diagram represents a state from $\SS=\{1,\dots,10\}$. The agent starts at state $3$ ($s_0$), and states $1$ and $10$ are absorbing ($Abs$ in subfigure a-(1)). Actions $\AA=\{\leftarrow, \rightarrow\}$  move the agent left or right in the chain unless the agent is in an absorbing state. \textbf{Subfig. a-(1)} shows the reward function: $r(2,\leftarrow) =0.1, r(4,\rightarrow) =-0.2,r(5,\rightarrow)=0.1$, and all state-action pairs not shown in a-(1) yield $r=0$. \textbf{Subfig. a-(2)} shows $V^*$ for $\gamma = 0.9$. \textbf{Subfig. a-(3)} shows a good heuristic $\h$ --- $V(\text{random }\pi)$. \textbf{Subfig. a-(4)} shows a bad heuristic $\h$ --- $V(\text{myopic }\pi)$. \textbf{Subfig. b-(1)}: $\pi^*$ for $V^*$ from a-(2). \textbf{Subfig. b-(2)}: $\tilde{\pi}^*$ from \algo with $\h=0,\lambda=0.5$. \textbf{Subfig. b-(3)}:  $\tilde{\pi}^*$ from \algo with the good $\h$ from (a).(3) and $\lambda=0.5$. \textbf{Subfig. b-(4)}: $\tilde{\pi}^*$ from the bad $\h$ from a-(4), $\lambda=0.5$. \textbf{Subfig. b-(5)}: $\tilde{\pi}^*$ from the bad $\h$ and $\lambda=1$. \textbf{Subfig. (c) illustrates the takeaway message}: \emph{using \algo with a good $\h$  can find $\pi^*$ from $s_0$ even with a small $\lambda$ (see the $x$-axis), while \algo with a bad $\h$ requires a much higher $\lambda$ to discover $\pi^*$.}
}}
\label{fig:toy_example}
\end{figure*}

\vspace{-1mm}
\subsection{A Toy Example}
\vspace{-1mm}

We illustrate this idea in a chain MDP environment in \cref{fig:toy_example}. The optimal policy $\pi^*$ for this MDP's original $\gamma = 0.9$ always picks action $\rightarrow$, as shown in \cref{fig:toy_policy_heatmap}-(1), giving the optimal value $V^*$  in \cref{fig:toy_value_heatmap}-(2). 
Suppose we used a smaller guidance discount $\widetilde{\gamma} = 0.5\gamma$ to accelerate learning. This is equivalent to \algo with a zero heuristic $\h=0$ and $\lambda=0.5$. Solving this reshaped MDP yields a policy $\widetilde{\pi}^*$ that acts very myopically in the original MDP, as shown in \cref{fig:toy_policy_heatmap}-(2); the value function of $\widetilde{\pi}^*$ in the original MDP is visualized in \cref{fig:toy_value_heatmap}-(4). 

Now, suppose we use \cref{fig:toy_value_heatmap}-(4) as a heuristic in \algo instead of $\h=0$. This is a bad choice of heuristic (Bad $\h$)
as it introduces a large bias with respect to $V^*$ (cf. \cref{fig:toy_value_heatmap}-(2)). On the other hand, we can roll out a random policy in the original MDP and use its value function as the heuristic (Good $\h$), shown in \cref{fig:toy_value_heatmap}-(3). 
Though the random policy has an even \emph{lower} return at the initial state $s=3$, it gives a \emph{better} heuristic because this heuristic shares the same trend as $V^*$ in \cref{fig:toy_value_heatmap}-(1).
 \algo run with Good $\h$ and Bad $\h$ yields policies in \cref{fig:toy_policy_heatmap}-(3,4), and the quality of the resulting solutions in the original MDP, $V_{\lambda}^{\widetilde{\pi}^*}(d_0)$, is reported in \cref{fig:toy_plot} for different $\lambda$. Observe that \algo with a good heuristic can achieve $V^*(d_0)$ with a much smaller horizon $\lambda \le 0.5$. Using a bad $\h$ does not lead to $\pi^*$ at all when $\lambda = 0.5$ (\cref{fig:toy_policy_heatmap}-(4)) but is guaranteed to do so when $\lambda$ converges to $1$. (\cref{fig:toy_policy_heatmap}-(5)).

\vspace{-1mm}
\section{Theoretical Analysis}
\vspace{-1mm}

When can \algo accelerate learning?
Similar to typical regularization techniques, the horizon-based regularization of \algo leads to a bias-variance decomposition that can be optimized for better finite-sample performance compared to directly solving the original MDP.
However, a non-trivial trade-off is possible only when the regularization can bias the learning toward a good direction. 
In \algo's case this is determined by the heuristic, which resembles a prior we encode into learning.

In this section we provide \algo's theoretical foundation.
We first describe the bias-variance trade-off induced by \algo. Then we show how suboptimality in solving the reshaped MDP translates into performance in the original MDP, and identify the assumptions \algo needs the base RL algorithm to satisfy. 
In addition, we 
\rev{explain how \algo relates to PBRS}, and characterize the quality of heuristics and sufficient conditions for constructing good heuristics from batch data using offline RL. 

For clarity, we will focus on the reshaped MDP $\widetilde{\MM}=(\SS, \AA, P, \widetilde{r}, \widetilde{\gamma})$ for a fixed  $\lambda\in[0,1]$, where $\widetilde{r}, \widetilde{\gamma}$ are defined in \eqref{eq:short summary}.
We can view this MDP as the one in a single iteration of \algo.
For a policy $\pi$, we denote its state value function in $\widetilde{\MM}$ as $\widetilde{V}^\pi$, and the optimal policy and value function of $\widetilde{\MM}$ as $\widetilde{\pi}^*$ and $\widetilde{V}^*$, respectively.  
The missing proofs of the results from this section can be found in \cref{app:proofs}.

\vspace{-1mm}
\subsection{Short-Horizon Reduction: Performance Decomposition}
\vspace{-1mm}

Our main result is a performance decomposition, which characterizes how a heuristic $\h$ and suboptimality in solving the reshaped MDP $\widetilde{\MM}$ relate to performance in the original MDP $\MM$.

\begin{restatable}{theorem}{PerformanceDecomposition}
 \label{th:performance decomposition}
    For any policy $\pi$, heuristic $f:\SS\to\R$, and mixing coefficient $\lambda\in[0,1]$,
    \begin{align*}
        V^*(d_0) - V^\pi(d_0) &= \mathrm{Regret}(\h,\lambda,\pi) + \mathrm{Bias}(\h,\lambda,\pi) 
    \end{align*}
    where we define
    \begin{align}
        \mathrm{Regret}(\h,\lambda,\pi) &\coloneqq
        \lambda \left( \widetilde{V}^*(d_0) - \widetilde{V}^\pi(d_0) \right)  + \frac{1-\lambda}{1-\gamma}  \left( \widetilde{V}^*(d^{\pi}) - \widetilde{V}^\pi(d^{\pi}) \right)
        \label{eq:regret} \\
        \mathrm{Bias}(\h,\lambda,\pi) &\coloneqq
        \left( V^*(d_0) - \widetilde{V}^*(d_0) \right) + \frac{\gamma(1-\lambda)}{1-\gamma} \E_{s,a\sim d^\pi}   \E_{s'|s,a} \left[ \h(s')  - \widetilde{V}^*(s')\right]  \label{eq:bias} 
    \end{align}
    Furthermore, $\forall b\in\R$, $\mathrm{Bias}(\h,\lambda,\pi) = \mathrm{Bias}(\h+b,\lambda,\pi)$ 
    and 
    $\mathrm{Regret}(\h,\lambda,\pi) = \mathrm{Regret}(\h+b,\lambda,\pi)$.
\end{restatable}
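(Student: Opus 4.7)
The plan is to express $V^\pi(d_0) - \widetilde V^\pi(d_0)$ in a closed form, plug it into the telescoping identity
\begin{equation*}
V^*(d_0) - V^\pi(d_0) = \bigl(V^*(d_0) - \widetilde V^*(d_0)\bigr) + \bigl(\widetilde V^*(d_0) - \widetilde V^\pi(d_0)\bigr) + \bigl(\widetilde V^\pi(d_0) - V^\pi(d_0)\bigr),
\end{equation*}
and then use the flow equation for the discounted occupancy to reshuffle terms into the claimed $\mathrm{Bias}+\mathrm{Regret}$ form. The shift invariance will follow from an explicit calculation of how $\widetilde V^\pi$ and $\widetilde V^*$ transform when $\h$ is translated by a constant.

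First I would establish the key identity: for any policy $\pi$,
\begin{equation*}
V^\pi(d_0) - \widetilde V^\pi(d_0) = \frac{\gamma(1-\lambda)}{1-\gamma}\,\E_{s,a\sim d^\pi}\E_{s'|s,a}\bigl[\widetilde V^\pi(s') - \h(s')\bigr].
\end{equation*}
The Bellman equations for $V^\pi$ (with reward $r$ and discount $\gamma$) and $\widetilde V^\pi$ (with reward $r + (1-\lambda)\gamma\E_{s'|s,a}[\h(s')]$ and discount $\lambda\gamma$) show that the difference $\Delta^\pi\coloneqq V^\pi - \widetilde V^\pi$ satisfies a Bellman-type recursion $\Delta^\pi(s) = \gamma(1-\lambda)\E_{s'|s,\pi}[\widetilde V^\pi(s') - \h(s')] + \gamma\E_{s'|s,\pi}[\Delta^\pi(s')]$. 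Unrolling against $d_0$ and converting the discounted sum to an expectation under $d^\pi$ yields the identity.

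Next, I would split the bracketed integrand as $\widetilde V^\pi(s') - \h(s') = (\widetilde V^*(s') - \h(s')) + (\widetilde V^\pi(s') - \widetilde V^*(s'))$, substitute into the telescoping identity, and collect terms. The $\widetilde V^* - \h$ piece combines with $V^*(d_0) - \widetilde V^*(d_0)$ to form $-\mathrm{Bias}(\h,\lambda,\pi)$ (up to sign bookkeeping). It remains to show that
\begin{equation*}
\bigl(\widetilde V^*(d_0) - \widetilde V^\pi(d_0)\bigr) + \frac{\gamma(1-\lambda)}{1-\gamma}\E_{s,a\sim d^\pi}\E_{s'|s,a}\bigl[\widetilde V^*(s') - \widetilde V^\pi(s')\bigr] = \lambda\bigl(\widetilde V^*(d_0) - \widetilde V^\pi(d_0)\bigr) + \frac{1-\lambda}{1-\gamma}\bigl(\widetilde V^*(d^\pi) - \widetilde V^\pi(d^\pi)\bigr).
\end{equation*}
Writing $f\coloneqq \widetilde V^* - \widetilde V^\pi$, this reduces to the identity $f(d^\pi) = (1-\gamma)f(d_0) + \gamma\E_{s,a\sim d^\pi}\E_{s'|s,a}[f(s')]$, which is the flow equation for the discounted state occupancy $d^\pi = (1-\gamma)d_0 + \gamma (P^\pi)^\top d^\pi$ tested against $f$. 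This closes the decomposition.

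For the shift invariance, under $\h\mapsto \h+b$ the reshaped reward gains the constant $(1-\lambda)\gamma b$, so by standard constant-reward calculation every policy's reshaped value shifts by $c_b\coloneqq \frac{(1-\lambda)\gamma b}{1-\lambda\gamma}$; in particular $\widetilde V^*$ and $\widetilde V^\pi$ both shift by $c_b$. Differences $\widetilde V^*-\widetilde V^\pi$ are therefore unchanged, so $\mathrm{Regret}$ is invariant. For $\mathrm{Bias}$, the first summand changes by $-c_b$, while $\h - \widetilde V^*$ shifts by $b - c_b = \frac{b(1-\gamma)}{1-\lambda\gamma}$, so the second summand changes by $\frac{\gamma(1-\lambda)}{1-\gamma}\cdot\frac{b(1-\gamma)}{1-\lambda\gamma} = c_b$, and the two changes cancel.

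The main obstacle is bookkeeping the algebra in step two and applying the occupancy flow equation in the correct direction; once the auxiliary identity $V^\pi - \widetilde V^\pi = \frac{\gamma(1-\lambda)}{1-\gamma}\E_{d^\pi}[\widetilde V^\pi - \h]$ is in hand, everything reduces to substitution and one application of the flow equation, and the invariance is a short direct computation.
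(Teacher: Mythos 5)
Your proof is correct. It arrives at the same decomposition but assembles it along a genuinely different (if closely related) route: the paper first proves a general performance difference lemma (Lemma A.1) for an \emph{arbitrary} comparator $V:\SS\to\R$ --- via the standard performance difference lemma in the original MDP, a Bellman-backup difference computation (Lemma B.1), and a conversion of the online Bellman error into the $\lambda$-weighted value gaps (Lemma B.4) --- and then simply sets $V=\widetilde{V}^*$. You instead telescope $V^*(d_0)-V^\pi(d_0)$ through $\widetilde{V}^*(d_0)$ and $\widetilde{V}^\pi(d_0)$, derive the closed form $V^\pi(d_0)-\widetilde{V}^\pi(d_0)=\frac{\gamma(1-\lambda)}{1-\gamma}\E_{s,a\sim d^\pi}\E_{s'|s,a}[\widetilde{V}^\pi(s')-\h(s')]$ directly from the two Bellman recursions, split the integrand at $\widetilde{V}^*$, and close with the occupancy flow equation $f(d^\pi)=(1-\gamma)f(d_0)+\gamma\,\E_{s,a\sim d^\pi}\E_{s'|s,a}[f(s')]$ applied to $f=\widetilde{V}^*-\widetilde{V}^\pi$ (trivial at $\lambda=1$, so the division by $1-\lambda$ is harmless). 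The underlying machinery is the same --- your flow-equation step is precisely the content of the paper's Lemma B.4, and your auxiliary identity is a repackaging of its Lemmas B.1--B.2 --- but your assembly is more self-contained and avoids carrying a generic comparator, whereas the paper's formulation buys a reusable lemma that it also invokes elsewhere (e.g.\ in the proof of Proposition 4.2). Your constant-shift argument matches the paper's almost verbatim: both compute the uniform shift $\frac{(1-\lambda)\gamma b}{1-\lambda\gamma}$ of all reshaped values and verify the cancellation inside the bias.
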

The theorem shows that suboptimality of a policy $\pi$ in the original MDP $\MM$  can be decomposed into 
\begin{enumerate*} [label=\textit{\arabic*)}]
    \item  a \emph{bias} term due to solving a reshaped MDP $\widetilde{\MM}$ instead of the original MDP $\MM$, and 
    \item a \emph{regret} term (i.e. the learning variance) due to $\pi$ being suboptimal in the reshaped MDP $\widetilde{\MM}$.
\end{enumerate*}
Moreover, it shows that heuristics are equivalent up to constant offsets. In other words, only the relative ordering between states that a heuristic induces matters in learning, not the absolute values.

Balancing the two terms trades off bias and variance in learning.
Using a smaller $\lambda$ 
replaces the long-term information with the heuristic and make the horizon of the reshaped MDP $\widetilde{\MM}$ shorter. 
Therefore, given a finite interaction budget, the regret term in \eqref{eq:regret} can be more easily minimized, though the bias term in \eqref{eq:bias} can potentially be large if the heuristic is bad. 
On the contrary, with $\lambda=1$, the bias is completely removed, as the agent solves the original MDP $\MM$ directly.
%

\vspace{-1mm}
\subsection{Regret,  Algorithm Requirement, and Relationship with PBRS}
\vspace{-1mm}

The regret term in \eqref{eq:regret} characterizes the performance gap due to $\pi$ being suboptimal in the reshaped MDP $\widetilde{\MM}$, because    $\mathrm{Regret}(\h,\lambda,\widetilde{\pi}^*) = 0$ for any $\h$ and $\lambda$.
For learning, we need the base RL algorithm $\LL$ to find a policy $\pi$ such that the regret term in \eqref{eq:regret} is small. 
By the definition in \eqref{eq:regret}, the base RL algorithm $\LL$ is required not only to find a policy $\pi$ such that   $\widetilde{V}^*(s) - \widetilde{V}^\pi(s)$ is small for states from $d_0$,
\emph{but also for states $\pi$ visits when rolling out in the original MDP $\MM$}.
In other words, it is  insufficient for the base RL algorithm to only optimize for $\widetilde{V}^\pi(d_0)$ (the performance in the reshaped MDP with respect to the initial state distribution; \rev{see \cref{sec:RL with heuristic setup}}).
For example, suppose $\lambda=0$ and $d_0$ concentrates on a single state $s_0$. Then maximizing $\widetilde{V}^\pi(d_0)$ alone would only optimize $\pi(\cdot|s_0)$ and the policy $\pi$ need not know how to act in other parts of the state space.

To use \algo, we need the base algorithm to learn a policy $\pi$ that has small \emph{action gaps} in the reshaped MDP $\widetilde{\MM}$ \emph{but along trajectories in the original MDP $\MM$}, as we show below. This property is satisfied by off-policy RL algorithms such as Q-learning~\citep{jin2018q}.

\begin{restatable}{proposition}{RegretAsActionGap} \label{th:regret as action gap}
 For \emph{any} policy $\pi$, heuristic $f:\SS\to\R$ and mixing coefficient $\lambda\in[0,1]$,
    \begin{align*}
    \textstyle
        \mathrm{Regret}(\h,\lambda,\pi)
        =  - \E_{\rho^\pi(d_0)} \left[ \sum_{t=0}^\infty \gamma^t  \widetilde{A}^*(s_t,a_t) \right]
    \end{align*}
    where $\rho^\pi(d_0)$ denotes the trajectory distribution of running $\pi$ from $d_0$, and 
    $\widetilde{A}^*(s,a) = \widetilde{r}(s,a)+\widetilde{\gamma}\E_{s'|s,a}[\widetilde{V}^*(s')] -  \widetilde{V}^*(s) \leq 0 $ is the action gap 
    with respect to the optimal policy $\widetilde{\pi}^*$ of $\widetilde{\MM}$.
\end{restatable}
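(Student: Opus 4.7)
The plan is to derive the identity by reducing both terms in $\mathrm{Regret}(\h,\lambda,\pi)$ to trajectory rollouts of $\pi$ in the original MDP and then combining the two weighted sums into a single geometric series in $\gamma$.

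First I would establish a ``performance difference'' identity in the reshaped MDP: for any state $s$,
\begin{align*}
\widetilde{V}^\pi(s) - \widetilde{V}^*(s) = \E_{\rho^\pi(s)}\left[\sum_{t=0}^\infty \widetilde{\gamma}^t\, \widetilde{A}^*(s_t,a_t)\right].
\end{align*}
This follows by a standard one-step telescoping argument: write $\widetilde{V}^\pi(s) = \E_{a\sim\pi}[\widetilde{r}(s,a) + \widetilde{\gamma}\E_{s'|s,a}\widetilde{V}^\pi(s')]$ and $\widetilde{V}^*(s) = \widetilde{r}(s,a) + \widetilde{\gamma}\E_{s'|s,a}\widetilde{V}^*(s') - \widetilde{A}^*(s,a)$ for any $a$, subtract, and unroll. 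Crucially, since the transition kernel $P$ of $\widetilde{\MM}$ is identical to that of $\MM$, the resulting trajectory distribution $\rho^\pi(s)$ is the one induced in $\MM$; only the discount in the sum becomes $\widetilde{\gamma} = \lambda\gamma$. Nonnegativity of $-\widetilde{A}^*$ and boundedness of value functions make the exchange of sum/expectation routine.

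Next I would substitute this identity into \eqref{eq:regret}. The first term $\lambda(\widetilde{V}^*(d_0) - \widetilde{V}^\pi(d_0))$ becomes $-\E_{\rho^\pi(d_0)}\bigl[\sum_{t\ge 0}\lambda^{t+1}\gamma^t\widetilde{A}^*(s_t,a_t)\bigr]$. For the second term I would use the definition $d^\pi = (1-\gamma)\sum_{k\ge 0}\gamma^k d_k^\pi$ together with the Markov property of $\pi$ in $\MM$, which gives $\E_{s\sim d_k^\pi}\E_{\rho^\pi(s)}[\widetilde{A}^*(s_t,a_t)] = \E_{\rho^\pi(d_0)}[\widetilde{A}^*(s_{k+t},a_{k+t})]$. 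After relabeling $m = k+t$, the second contribution becomes
\begin{align*}
-\,\E_{\rho^\pi(d_0)}\left[\sum_{m=0}^\infty \widetilde{A}^*(s_m,a_m)\,(1-\lambda)\sum_{k=0}^m \gamma^k(\lambda\gamma)^{m-k}\right].
\end{align*}

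The main calculation—and the step I expect to be the least mechanical—is evaluating this weight. A direct geometric-series computation gives $(1-\lambda)\sum_{k=0}^m \gamma^k(\lambda\gamma)^{m-k} = \gamma^m(1-\lambda^{m+1})$. Adding back the first-term weight $\lambda^{m+1}\gamma^m$ then collapses the combined coefficient to exactly $\gamma^m$, yielding
\begin{align*}
\mathrm{Regret}(\h,\lambda,\pi) = -\,\E_{\rho^\pi(d_0)}\left[\sum_{t=0}^\infty \gamma^t \widetilde{A}^*(s_t,a_t)\right].
\end{align*}
Finally, $\widetilde{A}^*(s,a)\le 0$ is immediate from the Bellman optimality equation $\widetilde{V}^*(s) = \max_a\{\widetilde{r}(s,a)+\widetilde{\gamma}\E_{s'|s,a}[\widetilde{V}^*(s')]\}$, which closes the proof. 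The sanity check at $\lambda=1$ (where $\widetilde{\MM}=\MM$ and both sides reduce to the classical performance difference lemma) and at $\lambda=0$ (where only the $t=0$ term survives, matching the contextual-bandit interpretation from \cref{sec:RL with heuristic setup}) confirms the combinatorial identity.
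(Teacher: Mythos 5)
Your proof is correct and follows essentially the same route as the paper: the paper proves the identity via its Lemma B.5 (applied with $V=\widetilde{V}^*$, using $\widetilde{V}^*(s)-(\widetilde{\BB}\widetilde{V}^*)(s,a)=-\widetilde{A}^*(s,a)$), and that lemma's proof is the same reduction of both regret terms to trajectory expectations under $\pi$ in $\MM$, with the discount weights recombining to $\gamma^t$ exactly as in your geometric-series calculation $\lambda^{t+1}\gamma^t+\gamma^t(1-\lambda^{t+1})=\gamma^t$. The only cosmetic difference is that the paper phrases the per-state step as a telescoping of online Bellman errors for a general $V$ rather than invoking the performance difference lemma in $\widetilde{\MM}$ directly with $\widetilde{A}^*$.
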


Another way to comprehend the regret term is through studying its dependency on $\lambda$.
When $\lambda=1$, $\mathrm{Regret}(\h,0,\pi) = V^*(d_0) - V^\pi(d_0)$, which is identical to the policy regret in $\MM$ for a \emph{fixed} initial distribution $d_0$.
On the other hand, when $\lambda=0$, 
$
\mathrm{Regret}(\h,0,\pi)=
        \max_{\pi'} \frac{1}{1-\gamma}  \E_{s\sim d^{\pi}}[ \widetilde{r}(s,\pi') - \widetilde{r}(s,\pi)] 
$, which is the regret of a \emph{non-stationary} contextual bandit problem where the context distribution is $d^\pi$ (the average state distribution of $\pi$).
In general, for $\lambda\in(0,1)$, the regret notion mixes a short-horizon non-stationary problem and a long-horizon stationary problem.

One natural question is whether the reshaped MDP $\widetilde{\MM}$ has a more complicated and larger value landscape than the original MDP $\MM$, because these characteristics may affect the regret rate of a base algorithm.
We show that $\widetilde{\MM}$ preserves the value bounds and linearity of the original MDP. 
\begin{restatable}{proposition}{PreservedProperties} \label{th:preserved MDP properties}
    Reshaping the MDP as in \eqref{eq:short summary} preserves the following characteristics:
    \begin{enumerate*}[label=\textit{\arabic*)}]
        \item If $\h(s) \in [0,\frac{1}{1-\gamma}]$, then $\widetilde{V}^\pi(s)   \in [0,\frac{1}{1-\gamma}]$ for all $\pi$ and $s\in\SS$.
        \item If $\widetilde{\MM}$ is a linear MDP with feature vector  $\phi(s,a)$ (i.e. $r(s,a)$ and $\E_{s'|s,a}[g(s')]$ for any $g$ can be linearly parametrized in $\phi(s,a)$), then $\widetilde{\MM}$ is also a linear MDP with feature vector $\phi(s,a)$.
    \end{enumerate*}
\end{restatable}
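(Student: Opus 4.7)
The proposition splits cleanly into two independent claims, so I would handle them separately and each in a short calculation.

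For the first claim, the plan is to bound the reshaped reward $\widetilde{r}$ and then apply the standard value-function bound in the reshaped MDP. Concretely, since $r(s,a)\in[0,1]$ and the hypothesis gives $\h(s')\in[0,\tfrac{1}{1-\gamma}]$, I would show that
\begin{equation*}
0 \;\le\; \widetilde{r}(s,a) \;=\; r(s,a) + (1-\lambda)\gamma\,\E_{s'|s,a}[\h(s')] \;\le\; 1 + \frac{(1-\lambda)\gamma}{1-\gamma} \;=\; \frac{1-\lambda\gamma}{1-\gamma}.
\end{equation*}
Because rewards are nonnegative, $\widetilde{V}^\pi(s)\ge 0$ for all $\pi$ and $s$. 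For the upper bound, I would use $\widetilde{\gamma}=\lambda\gamma$ and the geometric sum:
\begin{equation*}
\widetilde{V}^\pi(s) \;\le\; \frac{\sup_{s,a} \widetilde{r}(s,a)}{1-\widetilde{\gamma}} \;\le\; \frac{(1-\lambda\gamma)/(1-\gamma)}{1-\lambda\gamma} \;=\; \frac{1}{1-\gamma},
\end{equation*}
which is exactly the bound we want. This is essentially a one-line calculation once the reward bound is established, and I do not anticipate any obstacle.

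For the second claim, I read the statement as ``$\MM$ is a linear MDP with feature $\phi$ implies $\widetilde{\MM}$ is also linear with feature $\phi$'' (the converse follows by the same argument since reshaping is invertible given $\h$ and $\lambda$). I would unpack the linear-MDP definition: $r(s,a)=\phi(s,a)^\top\theta_r$ and $P(\cdot\mid s,a) = \phi(s,a)^\top\mu(\cdot)$ for some signed measure $\mu$, so that $\E_{s'|s,a}[g(s')] = \phi(s,a)^\top \int g(s')\mu(\mathrm{d}s')$ is linear in $\phi(s,a)$ for every integrable $g$. Applying this with $g=\h$ gives
\begin{equation*}
\widetilde{r}(s,a) \;=\; \phi(s,a)^\top\theta_r + (1-\lambda)\gamma\,\phi(s,a)^\top\!\!\int \h(s')\mu(\mathrm{d}s') \;=\; \phi(s,a)^\top\widetilde{\theta}_r,
\end{equation*}
where $\widetilde{\theta}_r \coloneqq \theta_r + (1-\lambda)\gamma \int \h(s')\mu(\mathrm{d}s')$. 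The transition kernel $P$ is unchanged by reshaping, so the linear transition structure carries over verbatim. Only the discount factor differs ($\lambda\gamma$ in place of $\gamma$), which does not affect the linear-MDP definition since it does not depend on $\gamma$. Both the reward and dynamics of $\widetilde{\MM}$ are therefore linear in the same feature map $\phi$.

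Neither part requires heavy machinery, and I do not see a meaningful obstacle. The only subtlety worth flagging explicitly is that the reward bound in Part 1 relies crucially on the coefficient $(1-\lambda)\gamma$ (rather than $\gamma$) in front of $\E_{s'|s,a}[\h(s')]$ cancelling against $1-\widetilde{\gamma}=1-\lambda\gamma$; this is precisely why the particular choice of $\widetilde{r}$ and $\widetilde{\gamma}$ in \eqref{eq:short summary} preserves the bound $[0,\tfrac{1}{1-\gamma}]$ exactly, and it would be worth highlighting this cancellation when writing the final proof.
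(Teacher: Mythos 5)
Your proof is correct and follows essentially the same route as the paper's: bound the reshaped reward by $\frac{1-\lambda\gamma}{1-\gamma}$ and divide by $1-\widetilde{\gamma}=1-\lambda\gamma$ for part 1, and observe that $\E_{s'|s,a}[\h(s')]$ is linear in $\phi(s,a)$ for part 2. You also correctly identify the typo in the statement (the hypothesis should refer to $\MM$ rather than $\widetilde{\MM}$), and your remark about the cancellation being the reason the bound is preserved exactly is a nice touch, though not needed.
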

\vspace{-1mm}

\rev{
On the contrary, the MDP $\overline{\MM}_\lambda \coloneqq (\SS,\AA,P,\overline{r},\lambda\gamma)$ in \cref{sec:RL with heuristic setup} does not have these properties. We can show that $\overline{\MM}_\lambda$ is equivalent to $\widetilde{\MM}$ up to a PBRS transformation (i.e., $\bar{r}(s,a) = \tilde{r}(s,a)+ \tilde{\gamma} \E_{s'|s,a}[\h(s')]-\h(s)$). 
Thus, \algo~incorporates guidance discount into PBRS 
with nicer properties. 
}


\subsection{Bias and Heuristic Quality}
\label{sec:heuristic_quality}
\vspace{-1mm}

The bias term in \eqref{eq:bias} characterizes suboptimality due to using a heuristic $\h$ in place of long-term state values in $\MM$. 
What is the best heuristic in this case?
From the definition of the bias term in \eqref{eq:bias}, we see that the ideal heuristic is the optimal value $V^*$, as  $ \mathrm{Bias}(V^*,\lambda,\pi) = 0$ for all $\lambda\in[0,1]$. 
By continuity, we can expect that if $\h$ deviates from $V^*$ a little, then the bias is small.
\begin{restatable}{corollary}{LinfBiasBound}\label{th:l-inf bias bound}
    If  $\inf_{b\in\R} \|\h + b - V^*\|_\infty \leq \epsilon$, then $\mathrm{Bias}(\h,\lambda,\pi) 
    \leq \frac{(1-\lambda\gamma)^2}{(1-\gamma)^2} \epsilon
    $.
\end{restatable}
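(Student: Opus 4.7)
The plan is to reduce the corollary to a perturbation bound on $\widetilde{V}^*$ viewed as a function of the heuristic, and then plug into the bias formula from Theorem \ref{th:performance decomposition}.

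First, I would invoke the constant-shift invariance noted in Theorem \ref{th:performance decomposition} to assume, without loss of generality, that $\|\h - V^*\|_\infty \le \epsilon$ (replace $\h$ by $\h + b^*$ for the optimal offset, which leaves $\mathrm{Bias}$ unchanged). This lets me work with a genuine $\ell_\infty$ perturbation rather than an infimum.

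The core step is to control $\|\widetilde{V}^* - V^*\|_\infty$. I would introduce the Bellman optimality operator for the reshaped MDP with a generic heuristic $f$,
\begin{align*}
    \widetilde{\TT}_f V(s) = \max_a \bigl\{r(s,a) + (1-\lambda)\gamma\, \E_{s'|s,a}[f(s')] + \lambda\gamma\, \E_{s'|s,a}[V(s')]\bigr\},
\end{align*}
and note the key algebraic identity $\widetilde{\TT}_{V^*} V^* = V^*$, since the two shaping terms collapse back to the original Bellman equation for $\MM$ when $f = V^*$. The operator $\widetilde{\TT}_\h$ is a $\lambda\gamma$-contraction in $\ell_\infty$, and pointwise $|\widetilde{\TT}_\h V^*(s) - \widetilde{\TT}_{V^*}V^*(s)| \le (1-\lambda)\gamma\,\|\h - V^*\|_\infty \le (1-\lambda)\gamma\,\epsilon$. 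A standard fixed-point perturbation argument then gives
\begin{align*}
    \|\widetilde{V}^* - V^*\|_\infty \;\le\; \frac{(1-\lambda)\gamma}{1-\lambda\gamma}\,\epsilon.
\end{align*}

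With this in hand I would bound the two summands of $\mathrm{Bias}(\h,\lambda,\pi)$ separately. The first summand is at most $\|V^* - \widetilde{V}^*\|_\infty$, and the expectation in the second is at most $\|\h - \widetilde{V}^*\|_\infty \le \|\h - V^*\|_\infty + \|V^* - \widetilde{V}^*\|_\infty$. Substituting the perturbation bound, factoring $\epsilon$, and collecting over a common denominator yields
\begin{align*}
    \mathrm{Bias}(\h,\lambda,\pi) \le \frac{(1-\lambda)\gamma}{1-\lambda\gamma}\,\epsilon \cdot \left[1 + \frac{(1-\lambda\gamma)+(1-\lambda)\gamma}{1-\gamma}\right] = \frac{2(1-\lambda)\gamma}{1-\gamma}\,\epsilon,
\end{align*}
which is in fact sharper than the claimed bound. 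To finish, I would verify the elementary inequality $2\gamma(1-\lambda)(1-\gamma) \le (1-\lambda\gamma)^2$; expanding both sides shows the difference equals $(1-\gamma)^2 + \gamma^2(1-\lambda)^2 \ge 0$, which closes the argument.

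The main obstacle is really just the bookkeeping in the last step: keeping the $(1-\lambda)$, $(1-\gamma)$, and $(1-\lambda\gamma)$ factors straight when combining the two bias summands, and recognizing that the tighter intermediate bound $\tfrac{2(1-\lambda)\gamma}{1-\gamma}\epsilon$ can be cleanly relaxed to the stated form $\tfrac{(1-\lambda\gamma)^2}{(1-\gamma)^2}\epsilon$. The rest is a textbook application of Bellman contraction plus a triangle inequality.
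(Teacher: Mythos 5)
Your proof is correct, but it reaches the bound by a genuinely different route than the paper. After the common first step (using the constant-shift invariance from \cref{th:performance decomposition} to assume $\norm{\h - V^*}_\infty \le \epsilon$), the paper never bounds $\norm{\widetilde{V}^* - V^*}_\infty$ directly: it instead passes through the suboptimal policy $\pi^*$ in the reshaped MDP, using the on-policy value-difference machinery of \cref{lm:value change} to control $\h - \widetilde{V}^{\pi^*}$, and then plugs into the trajectory-based bias bound of \cref{th:bias bound} (together with $\widetilde{V}^* \ge \widetilde{V}^{\pi^*}$), arriving at the intermediate bound $\frac{2(1-\lambda)\gamma}{1-\gamma}\epsilon + \frac{(1-\lambda)^2\gamma^2}{(1-\gamma)^2}\epsilon$. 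You instead exploit the identity $\widetilde{\TT}_{V^*}V^* = V^*$ and the fact that $\widetilde{\TT}_{\h}$ is a $\lambda\gamma$-contraction to get the fixed-point perturbation bound $\norm{\widetilde{V}^* - V^*}_\infty \le \frac{(1-\lambda)\gamma}{1-\lambda\gamma}\epsilon$, then bound the two summands of \eqref{eq:bias} directly by $\ell_\infty$ norms; your arithmetic checks out (the $(1-\lambda\gamma)$ factors cancel to give $\frac{2(1-\lambda)\gamma}{1-\gamma}\epsilon$, which is in fact slightly sharper than the paper's intermediate expression, and your final inequality $2\gamma(1-\lambda)(1-\gamma) \le (1-\lambda\gamma)^2$ is verified by writing $1-\lambda\gamma = (1-\gamma) + \gamma(1-\lambda)$). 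Your argument is more self-contained and textbook-standard, needing only contraction of the optimality operator; the paper's detour through \cref{th:bias bound} exists because that proposition is the vehicle for the finer, non-$\ell_\infty$ analysis (e.g., the pessimism results), and the corollary is just the $\ell_\infty$ specialization of it. The only (shared, minor) technicality is that the infimum over $b$ need not be attained, which is handled by taking $b^*$ achieving $\epsilon + \delta$ and letting $\delta \to 0$.
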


To better understand how the heuristic $\h$ affects the bias, we derive an upper bound on the bias by replacing the first term in \eqref{eq:bias} with an upper bound that depends only on $\pi^*$.
 \begin{restatable}{proposition}{BiasBound} \label{th:bias bound}
For $g:\SS\to\R$ and $\eta\in[0,1]$, define $\CC(\pi,g,\eta) \coloneqq \E_{\rho^{\pi}(d_0)} \left[ \sum_{t=1}^\infty \eta^{t-1}  g(s_t) \right] $. 
    Then 
    $
        \mathrm{Bias}(\h,\lambda,\pi) \leq
         (1-\lambda)\gamma (  \CC(\pi^*,V^*-\h, \lambda\gamma) + 
         \CC(\pi,\h-\widetilde{V}^*, \gamma)
         )
    $.
\end{restatable}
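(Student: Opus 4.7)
The plan is to bound the two pieces of $\mathrm{Bias}(\h,\lambda,\pi)$ from Theorem~\ref{th:performance decomposition} separately and add them. Recall
\begin{align*}
\mathrm{Bias}(\h,\lambda,\pi) = \bigl(V^*(d_0) - \widetilde{V}^*(d_0)\bigr) + \frac{\gamma(1-\lambda)}{1-\gamma}\E_{s,a\sim d^\pi}\E_{s'|s,a}\bigl[\h(s') - \widetilde{V}^*(s')\bigr].
\end{align*}
The second term is essentially already in the desired form. First I would rewrite the average state distribution $d^\pi$ as a discounted sum over the trajectory distribution: for any $g:\SS\to\R$,
\begin{align*}
\frac{1}{1-\gamma}\E_{s,a\sim d^\pi}\E_{s'|s,a}[g(s')] = \E_{\rho^\pi(d_0)}\Bigl[\sum_{t=0}^\infty \gamma^t g(s_{t+1})\Bigr] = \E_{\rho^\pi(d_0)}\Bigl[\sum_{t=1}^\infty \gamma^{t-1} g(s_t)\Bigr].
\end{align*}
Applying this with $g = \h - \widetilde{V}^*$ turns the second term into exactly $(1-\lambda)\gamma\, \CC(\pi,\h-\widetilde{V}^*,\gamma)$.

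For the first term, the key observation is that $\widetilde{\pi}^*$ is optimal in $\widetilde{\MM}$, so $\widetilde{V}^*(d_0)\ge \widetilde{V}^{\pi^*}(d_0)$, giving
\begin{align*}
V^*(d_0) - \widetilde{V}^*(d_0) \le V^{\pi^*}(d_0) - \widetilde{V}^{\pi^*}(d_0).
\end{align*}
Define $\Delta(s) \coloneqq V^{\pi^*}(s) - \widetilde{V}^{\pi^*}(s)$. Using the Bellman equations under $\pi^*$ with the two discounts $\gamma$ and $\widetilde{\gamma} = \lambda\gamma$, together with $\widetilde{r}(s,a) = r(s,a) + (1-\lambda)\gamma\E_{s'|s,a}[\h(s')]$, I would split $\gamma = (1-\lambda)\gamma + \lambda\gamma$ in the $\gamma\E_{s'|s,\pi^*}[V^{\pi^*}(s')]$ term to obtain the recursion
\begin{align*}
\Delta(s) = (1-\lambda)\gamma\,\E_{s'|s,\pi^*}\bigl[V^*(s') - \h(s')\bigr] + \lambda\gamma\,\E_{s'|s,\pi^*}[\Delta(s')].
\end{align*}

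Unrolling this geometric recursion along the trajectory distribution $\rho^{\pi^*}(d_0)$ yields
\begin{align*}
\E_{s_0\sim d_0}[\Delta(s_0)] = (1-\lambda)\gamma\,\E_{\rho^{\pi^*}(d_0)}\Bigl[\sum_{t=1}^\infty(\lambda\gamma)^{t-1}\bigl(V^*(s_t) - \h(s_t)\bigr)\Bigr] = (1-\lambda)\gamma\,\CC(\pi^*,V^*-\h,\lambda\gamma),
\end{align*}
which is precisely the first term of the claimed bound. Adding the two contributions completes the argument. The main obstacle is the derivation of the $\Delta$ recursion: one has to carefully split the discount $\gamma$ as $(1-\lambda)\gamma + \lambda\gamma$ so that one piece pairs with $\h$ (creating the $V^*-\h$ residual) and the other piece drives the geometric series with ratio $\lambda\gamma$; beyond that step the proof is just rearrangement and unrolling.
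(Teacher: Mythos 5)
Your proposal is correct and follows essentially the same route as the paper: the second term is rewritten verbatim as $(1-\lambda)\gamma\,\CC(\pi,\h-\widetilde{V}^*,\gamma)$ via the definition of $d^\pi$, and the first term is bounded using $\widetilde{V}^*(d_0)\ge\widetilde{V}^{\pi^*}(d_0)$ followed by the identity $V^{\pi^*}(d_0)-\widetilde{V}^{\pi^*}(d_0)=(1-\lambda)\gamma\,\CC(\pi^*,V^*-\h,\lambda\gamma)$, which the paper establishes as a standalone lemma (via the performance difference lemma applied in the reshaped MDP) and which you derive equivalently by unrolling the Bellman recursion for $\Delta(s)=V^{\pi^*}(s)-\widetilde{V}^{\pi^*}(s)$. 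The only difference is this minor presentational choice in proving the intermediate identity; the decomposition, the key inequality, and the final bound are identical.
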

In \cref{th:bias bound}, the term $(1-\lambda)\gamma \CC(\pi^*,V^*-\h, \lambda\gamma)$ is the underestimation error of the heuristic $\h$ under the states visited by the optimal policy $\pi^*$ in the original MDP $\MM$.
Therefore, to minimize the first term in the bias, we would want the heuristic $\h$ to be large along the paths that $\pi^*$ generates.

However, \cref{th:bias bound} also discourages the heuristic from being arbitrarily large, because the second term in the bias in \eqref{eq:bias} (or, equivalently, the second term in  \cref{th:bias bound}) incentivizes the heuristic to underestimate the optimal value of the reshaped MDP $\widetilde{V}^*$.
More precisely, the second term requires the heuristic to obey some form of spatial consistency. A quick intuition is the observation that if $\h(s)=V^{\pi'}(s)$ for some $\pi'$ or $\h(s)=0$, then $\h(s) \leq  \widetilde{V}^*(s)$ for all $s\in\SS$.
More generally, we show that if the heuristic is \emph{improvable} with respect to the original MDP $\MM$ (i.e. the heuristic value is lower than that of the max  of Bellman backup), then $\h(s) \leq  \widetilde{V}^*(s)$.
By \cref{th:bias bound}, learning with an improvable heuristic in \algo has a much smaller bias. 
\begin{definition}
    Define the Bellman operator $(\BB\h)(s,a) \coloneqq r(s,a) +\gamma \E_{s'|s,a}[\h(s')]$. 
    A heuristic function $\h:\SS\to\R$ is said to be \emph{improvable} with respect to an MDP $\MM$ if $\max_a (\BB \h)(s,a) \geq \h(s)$.
\end{definition}
\begin{restatable}{proposition}{ImprovableHeuristic} \label{th:improvable heuristic}
    If $\h$ is improvable with respect to $\MM$, then $\widetilde{V}^*(s) \geq \h(s) $, for all $\lambda\in[0,1]$.
\end{restatable}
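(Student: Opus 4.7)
The plan is to apply the Bellman optimality operator of the reshaped MDP $\widetilde{\MM}$ directly to $\h$, observe that the mixing coefficient $\lambda$ cancels out in a convenient way, and then conclude via monotonicity and contraction of the operator.

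First, I would write down the Bellman optimality operator $\widetilde{\BB}^*$ for $\widetilde{\MM}$: for any $V:\SS\to\R$,
\begin{align*}
    (\widetilde{\BB}^* V)(s) = \max_a \left\{ \widetilde{r}(s,a) + \widetilde{\gamma}\,\E_{s'|s,a}[V(s')] \right\}
    = \max_a \left\{ r(s,a) + (1-\lambda)\gamma\,\E_{s'|s,a}[\h(s')] + \lambda\gamma\,\E_{s'|s,a}[V(s')] \right\}.
\end{align*}
Next, I would evaluate this operator at $V=\h$, in which case the $(1-\lambda)$ and $\lambda$ factors on the $\E_{s'|s,a}[\h(s')]$ term merge into a single $\gamma\,\E_{s'|s,a}[\h(s')]$, producing exactly the original Bellman operator $\BB$ applied to $\h$:
\begin{align*}
    (\widetilde{\BB}^* \h)(s) = \max_a\left\{ r(s,a) + \gamma\,\E_{s'|s,a}[\h(s')] \right\} = \max_a (\BB\h)(s,a).
\end{align*}

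By the improvability assumption, this last quantity is at least $\h(s)$, so $\widetilde{\BB}^* \h \geq \h$ pointwise. Then I would invoke the two standard facts that $\widetilde{\BB}^*$ is monotone (if $V_1\geq V_2$ pointwise then $\widetilde{\BB}^* V_1 \geq \widetilde{\BB}^* V_2$ pointwise) and that it is a $\widetilde{\gamma}$-contraction with unique fixed point $\widetilde{V}^*$, so that $(\widetilde{\BB}^*)^n \h \to \widetilde{V}^*$ pointwise as $n\to\infty$. Iterating monotonicity starting from $\widetilde{\BB}^* \h \geq \h$ yields $(\widetilde{\BB}^*)^{n+1}\h \geq (\widetilde{\BB}^*)^n \h \geq \h$ for every $n$, and passing to the limit gives $\widetilde{V}^*(s) \geq \h(s)$ for all $s$, uniformly in $\lambda\in[0,1]$.

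There is no serious obstacle — the only observation that makes the argument go through is the algebraic cancellation $(1-\lambda)\gamma + \lambda\gamma = \gamma$ when $V=\h$ is plugged in, which reduces the reshaped Bellman update to the original one. Everything else is standard dynamic programming. The only minor care needed is at the endpoints: at $\lambda=0$ the operator becomes a contextual-bandit update and the fixed-point argument collapses to a single application of $\widetilde{\BB}^*$, which still gives the same bound, while at $\lambda=1$ the statement reduces to $V^*\geq \h$, itself a classical consequence of improvability.
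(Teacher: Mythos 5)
Your proof is correct, and it shares with the paper the one essential observation: when the reshaped Bellman backup is applied to $\h$ itself, the coefficients $(1-\lambda)\gamma$ and $\lambda\gamma$ recombine into $\gamma$, so $(\widetilde{\BB}\h)(s,a)=(\BB\h)(s,a)$ and improvability gives $\max_a(\widetilde{\BB}\h)(s,a)\geq \h(s)$. Where you diverge is in how you convert this one-step inequality into $\widetilde{V}^*\geq \h$. You use the textbook dynamic-programming route: monotonicity of the optimality operator plus its $\widetilde{\gamma}$-contraction property, so that $(\widetilde{\BB}^*)^n\h$ increases to $\widetilde{V}^*$. The paper instead takes $\pi$ to be the greedy policy for $\argmax_a(\BB\h)(s,a)$ and applies its performance difference lemma (\cref{lm:pdl}) in $\widetilde{\MM}$ to get $\widetilde{V}^\pi(s_0)-\h(s_0)=\frac{1}{1-\lambda\gamma}\E_{\widetilde{d}_{s_0}^{\pi}}[(\BB\h)(s,a)-\h(s)]\geq 0$, then concludes via $\widetilde{V}^*\geq\widetilde{V}^\pi$. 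Your argument is more self-contained and elementary (it needs no machinery beyond standard MDP facts), while the paper's version reuses the lemma infrastructure it has already built and is slightly more constructive, exhibiting an explicit policy whose reshaped value already dominates $\h$. One minor point worth stating if you write this up: the sup-norm contraction and limit-passing step require $\h$ to be bounded (the paper's route needs a comparable integrability condition for its infinite sums, so neither approach escapes this); your endpoint remarks for $\lambda=0$ and $\lambda=1$ are both accurate but unnecessary, since the uniform argument already covers them.
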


%

\vspace{-1mm}
\subsection{Pessimistic Heuristics are Good Heuristics }
\vspace{-1mm}

While \cref{th:l-inf bias bound} shows that \algo can handle  an imperfect heuristic, this result is not ideal. The corollary depends on the $\ell_\infty$ approximation error, which can be difficult to control in large state spaces.
Here we provide a more refined sufficient condition of good heuristics. We show that the concept of \emph{pessimism} in the face of uncertainty provides a finer mechanism for controlling the approximation error of a heuristic and would allow us to remove the $\ell_\infty$-type error.
This result is useful for constructing heuristics from data that does not have sufficient support.

From \cref{th:bias bound} we see that the source of the $\ell_\infty$ error is the second term in the bias upper bound, as it depends on the states that the agent's policy visits which can change during learning.
To remove this dependency, we can use improvable heuristics (see \cref{th:improvable heuristic}), as they satisfy $\h(s) \leq  \widetilde{V}^*(s)$.
Below we show that Bellman-consistent pessimism  
yields improvable heuristics.

\begin{restatable}{proposition}{BellmanPessimismAndImprovableHeuristic}  \label{th:bellman pessimism and improvable heuristic}
    Suppose $\h(s) = Q(s,\pi')$ for some policy $\pi'$ and function $Q:\SS\times\AA\to\R$ such that 
    $    Q(s,a) \leq (\BB \h) (s,a) $,
    $\forall s\in\SS$, $a\in\AA$.
    Then $\h$ is improvable and  $f(s)\leq V^{\pi'}(s)$ for all $s\in\SS$.
\end{restatable}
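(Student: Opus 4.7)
The proposition claims two separate conclusions, and my plan is to handle them in sequence, starting with improvability (which is essentially by inspection) and then building up to the pointwise domination $\h(s) \le V^{\pi'}(s)$ via a standard monotone-operator iteration argument.

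For improvability, I would unfold the definition of $\h$ and apply the hypothesis inside the expectation. Concretely,
\begin{align*}
\h(s) \;=\; Q(s,\pi') \;=\; \E_{a\sim\pi'(\cdot|s)}[Q(s,a)] \;\le\; \E_{a\sim\pi'(\cdot|s)}[(\BB\h)(s,a)] \;\le\; \max_{a}\,(\BB\h)(s,a),
\end{align*}
which is exactly the improvability condition. No further work is needed for this half.

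For the domination claim, the key step is to read the Bellman-consistency hypothesis as a one-step inequality under $\pi'$ in the \emph{original} MDP $\MM$. Define the policy Bellman operator $(\TT^{\pi'} g)(s) \coloneqq r(s,\pi') + \gamma \E_{s'\sim P^{\pi'}(\cdot|s)}[g(s')]$ on the original MDP. Averaging the assumed inequality $Q(s,a)\le (\BB\h)(s,a) = r(s,a) + \gamma\E_{s'|s,a}[\h(s')]$ against $\pi'(\cdot|s)$ and using $\h(s) = Q(s,\pi')$ gives $\h(s) \le (\TT^{\pi'}\h)(s)$ for every $s$.

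From here I would invoke two standard facts about $\TT^{\pi'}$: it is a monotone operator (since $P$ is a stochastic kernel and $\gamma\ge 0$), and its unique fixed point in the space of bounded functions is $V^{\pi'}$, with $(\TT^{\pi'})^n g \to V^{\pi'}$ as $n\to\infty$ for any bounded $g$. Iterating the one-step inequality yields $\h \le \TT^{\pi'}\h \le (\TT^{\pi'})^2 \h \le \cdots \le (\TT^{\pi'})^n \h$, and passing to the limit gives $\h(s) \le V^{\pi'}(s)$.

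The only mild subtlety is the boundedness/limit step: I would note that since $r\in[0,1]$ and $\gamma<1$, $V^{\pi'}$ is bounded in $[0,1/(1-\gamma)]$, and $\h = Q(\cdot,\pi')$ is finite by hypothesis, so $(\TT^{\pi'})^n\h \to V^{\pi'}$ pointwise by the contraction property in $\ell_\infty$. This is the main (though still routine) obstacle; everything else is either definitional or immediate from the hypothesis.
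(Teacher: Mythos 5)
Your proposal is correct and follows essentially the same route as the paper: the improvability half is the identical one-line computation, and your monotone-operator iteration $\h \le \TT^{\pi'}\h \le (\TT^{\pi'})^2\h \le \cdots \to V^{\pi'}$ is just a formalized version of the paper's explicit "chaining" of the inequality $\h(s) \le r(s,\pi') + \gamma\E_{s'|s,\pi'}[\h(s')]$. Your treatment of the limit step is in fact slightly more careful than the paper's, which unrolls twice and writes $\le V^{\pi'}(s)$ without comment.
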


The Bellman-consistent pessimism in \cref{th:bellman pessimism and improvable heuristic} essentially says that $\h$ is pessimistic with respect to the Bellman backup.
This condition has been used as the foundation for designing pessimistic off-policy RL algorithms, such as pessimistic value iteration~\citep{jin2020pessimism} and algorithms based on pessimistic absorbing MDPs~\citep{liu2020provably}.
In other words, these pessimistic algorithms can be used to construct good heuristics with small bias in \cref{th:bias bound} from offline data.
With such a heuristic, the bias upper bound would be simply 
$\mathrm{Bias}(\h,\lambda,\pi) \leq
 (1-\lambda)\gamma  \CC(\pi^*,V^*-\h, \lambda\gamma)$. 
Therefore, as long as enough batch data are sampled from a distribution that covers states that $\pi^*$ visits, 
these pessimistic algorithms can construct good heuristics with nearly zero bias for \algo with high probability.






\vspace{-1mm}
\section{Experiments} \label{sec:exps}
\vspace{-1mm}

\def\sac{SAC\xspace}
\def\sacw{SAC w/ BC\xspace}
\def\zeroalgo{\algo-zero\xspace}
\def\mcalgo{\algo-MC\xspace}
\def\bc{BC\xspace}

We validate our framework \algo experimentally in MuJoCo (commercial license)~\citep{todorov2012mujoco} robotics control problems and Procgen games (MIT License)~\citep{procgen}, 
where soft actor critic (SAC)~\cite{haarnoja2018soft} and proximal policy optimization (PPO)~\cite{schulman2017proximal} were used as the base RL algorithms, respectively\footnote{Code to replicate all experiments is available at \href{https://github.com/microsoft/HuRL}{https://github.com/microsoft/HuRL}.}.  
The goal is to study whether \algo can accelerate learning by shortening the horizon with heuristics.
In particular, we conduct studies to investigate the effects of different heuristics and mixing coefficients. 
%
%
Since the main focus here is on the possibility of leveraging a \emph{given} heuristic to accelerate RL algorithms, in these experiments we used vanilla techniques 
to construct heuristics for \algo. 
Experimentally studying the design of heuristics for a domain or a batch of data is beyond the scope of the current paper but are important future research directions. 
%
For space limitation, here we report only the results of the MuJoCo experiments. The results on Procgen games along with other experimental details can also be found in \cref{app:experiments}.
%

\vspace{-1mm}
\subsection{Setup} \label{ssec:setup}
\vspace{-1mm}

We consider four MuJoCo environments with dense rewards (Hopper-v2, HalfCheetah-v2, Humanoid-v2, and Swimmer-v2) \rev{and a sparse reward version of Reacher-v2 (denoted as Sparse-Reacher-v2)\footnote{The reward is zero at the goal and $-1$ otherwise.}.}
\rev{We design the experiments to simulate two learning scenarios.
First, we use Sparse-Reacher-v2 to simulate the setting where an engineered heuristic based on domain knowledge is available; since this is a goal reaching task, we designed a heuristic $\h(s) = r(s,a) - 100 \norm{e(s)-g(s)} $, where $e(s)$ and $g(s)$ denote the robot's end-effector position and the goal position, respectively.
Second, we use the dense reward environments to model scenarios} where a batch of data collected by multiple behavioral policies is available before learning, and a heuristic is constructed by an offline policy evaluation algorithm from the batch data \rev{(see Appendix~\ref{app:details} for details). In brief}, we generated these behavioral policies by running \sac from scratch and saved the intermediate policies generated in training.
We then use least-squares regression to fit a 
neural network to predict empirical Monte-Carlo returns of the trajectories in the sampled batch of data.
We also use behavior cloning (\bc) to warm-start \rev{all} RL agents based on the same batch dataset \rev{in the dense reward experiments.}

The base RL algorithm here, SAC, is based on the standard implementation in Garage (MIT License)~\cite{garage}.
The policy and value networks are fully connected independent neural networks. The policy is Tanh-Gaussian and the value network has a linear head. 

\vspace{-2mm}
\paragraph{Algorithms.}
We compare the performance of different algorithms below.
\begin{enumerate*}[label=\textit{\arabic*)}]    
\item \bc
\item SAC 
\item SAC with \bc warm start (\sacw)
\item \rev{\algo with the engineered heuristic (\algo)}
\item \algo with a zero heuristic and \bc warm start (\zeroalgo)
\item \algo with the Monte-Carlo heuristic and \bc warm start (\mcalgo)
\item \rev{SAC with PBRS reward (and \bc warm start, if applicable) (PBRS)}.
\end{enumerate*}
For the \algo algorithms, the mixing coefficient was scheduled as $\lambda_n = \lambda_0 + (1-\lambda_0) c_\omega
\tanh(\omega (n-1) )$, for $n=1,\dots,N$, where $\lambda_0 \in [0,1]$, $\omega>0$ controls the increasing rate, and $c_\omega$ is a normalization constant such that $\lambda_\infty=1$ and $\lambda_n \in[0,1]$.
We chose these algorithms to study the effect of each additional warm-start component (\bc and heuristics) added on top of vanilla \sac.
\zeroalgo is \sacw but with an extra $\lambda$ schedule above that further lowers the discount, whereas \sac and \sacw keep a constant discount factor.

\vspace{-2mm}
\paragraph{Evaluation and Hyperparameters.}
In each iteration, the RL agent has a fixed sample budget for environment interactions, and its performance is measured in terms of undiscounted cumulative returns of the deterministic mean policy extracted from \sac.
The hyperparameters used in the algorithms above were selected as follows. First, the learning rates and the discount factor of the base RL algorithm, \sac, were tuned for each environment. The tuned discount factor was used as the discount factor $\gamma$ of the original MDP $\MM$.
Fixing the hyperparameters above, we additionally tune $\lambda_0$ and $\omega$ for the $\lambda$ schedule of \algo for each environment and each heuristic.
Finally, after all these hyperparameters were fixed, we conducted additional testing runs with 30 different random seeds and report their statistics here. 
Sources of randomness included the data collection process of the behavioral policies, training the heuristics from batch data, BC, and online RL. However, the behavioral policies were fixed across all testing runs. 
\rev{
We chose this hyperparameter tuning procedure to make sure that the baselines (i.e. SAC) compared in these experiments are their best versions. 
}

\begin{figure*}[t]
	\centering
	\begin{subfigure}{0.29\textwidth}
		\includegraphics[width=\textwidth]{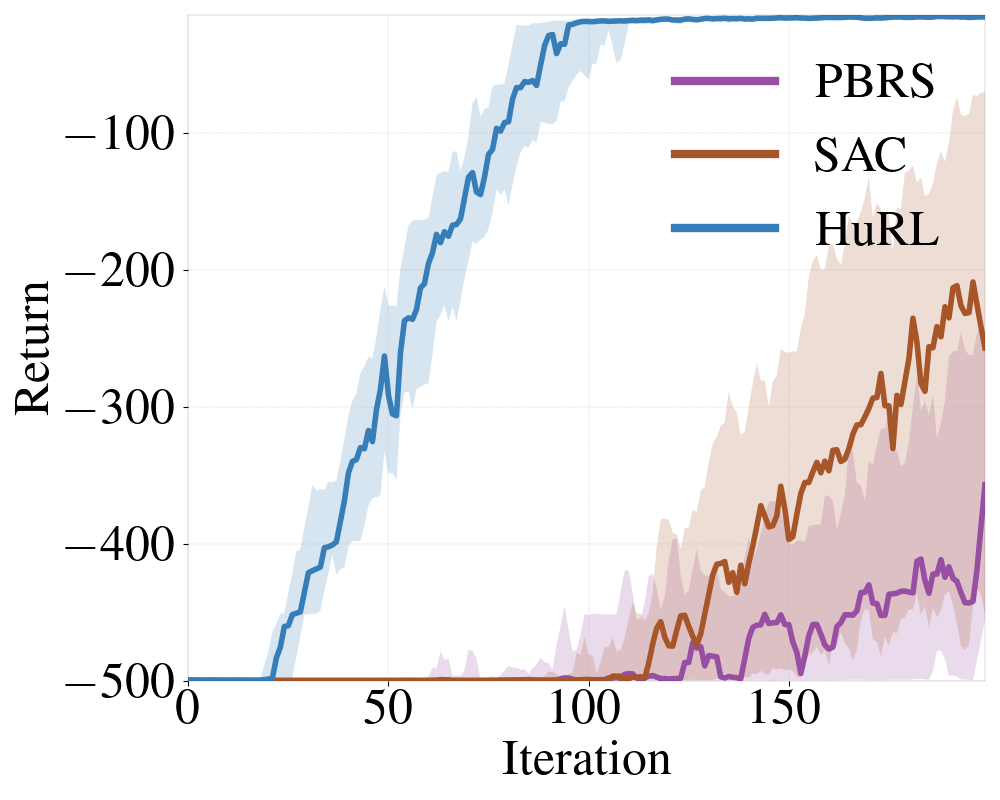}
		\caption{Sparse-Reacher-v2}
		\label{fig:sparse reacher}
	\end{subfigure}
	\begin{subfigure}{0.29\textwidth}
		\includegraphics[width=\textwidth]{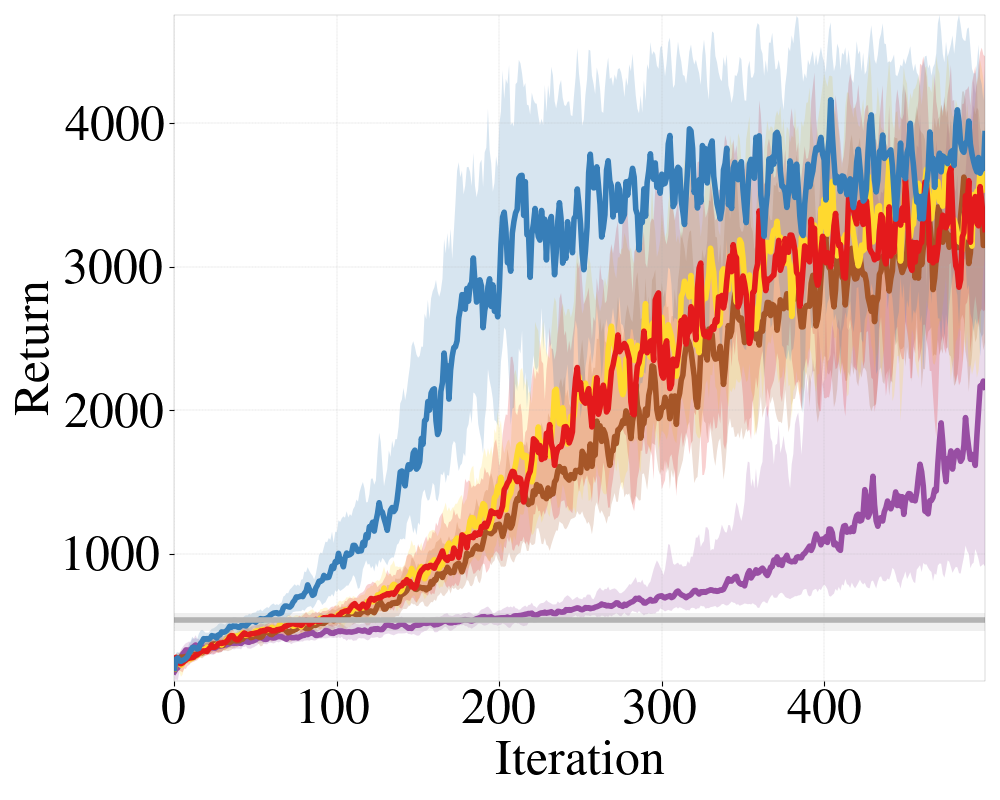}
		\caption{Humanoid-v2}
		\label{fig:main humanoid}
	\end{subfigure}
    \begin{subfigure}{0.29\textwidth}
		\includegraphics[width=\textwidth]{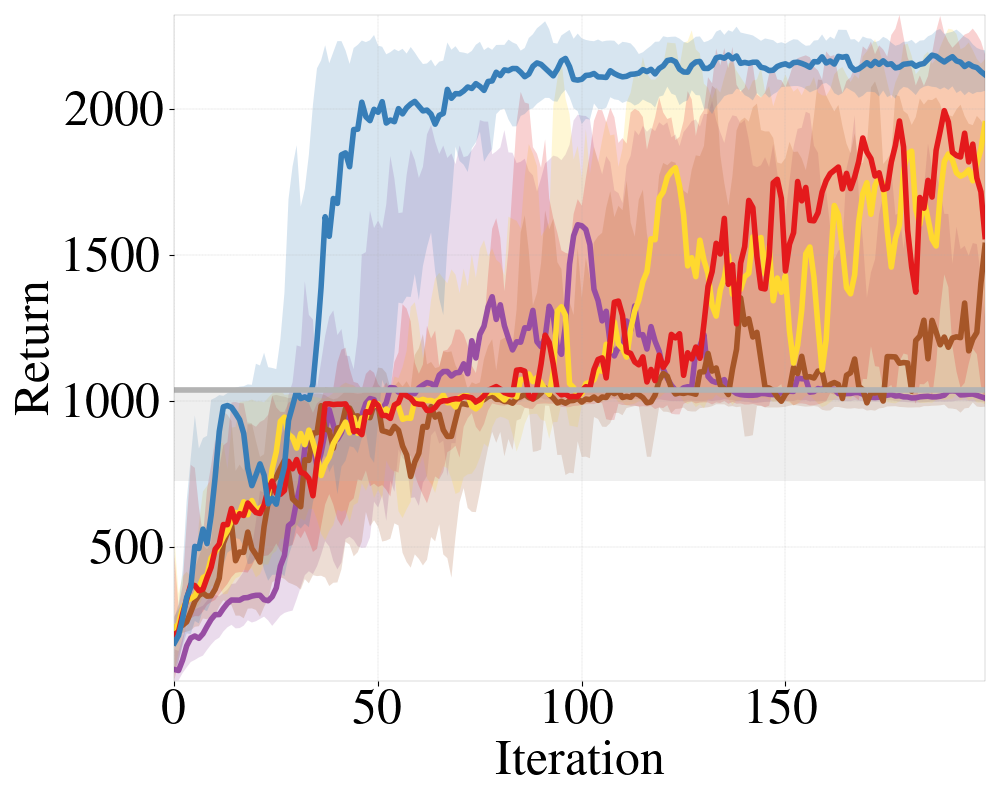}
		\caption{Hopper-v2}
		\label{fig:main hopper}
	\end{subfigure}
	\begin{subfigure}{0.29\textwidth}
		\includegraphics[width=\textwidth]{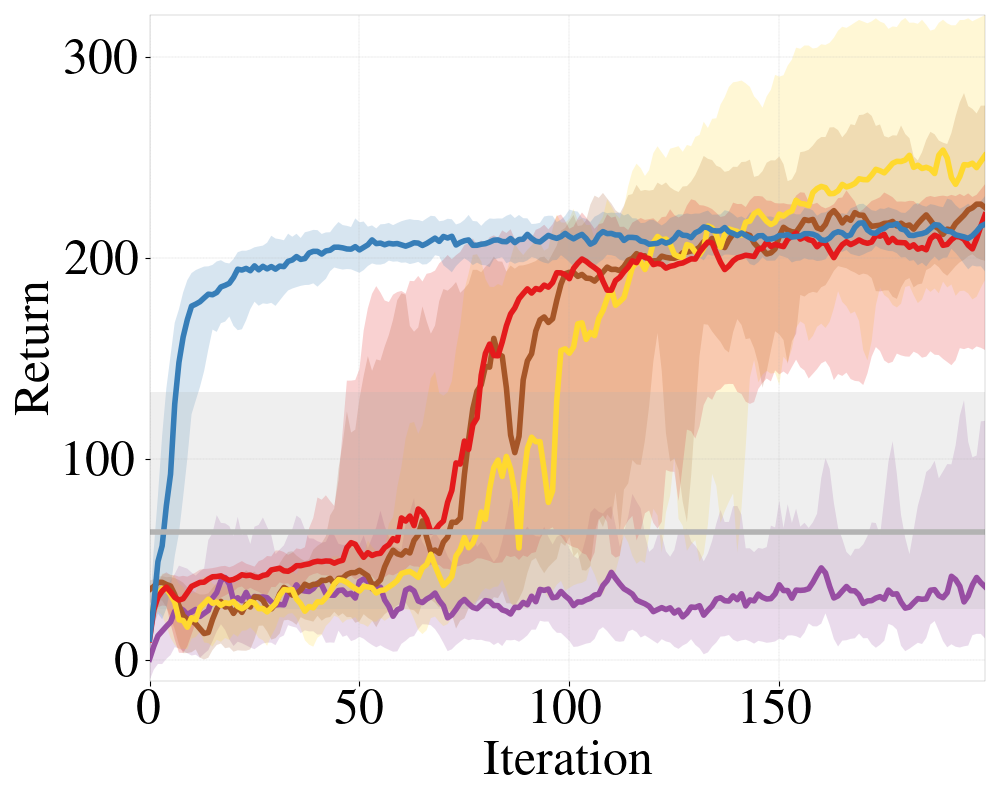}
		\caption{Swimmer-v2}
		\label{fig:main swimmer}
	\end{subfigure}
	\begin{subfigure}{0.29\textwidth}
		\includegraphics[width=\textwidth]{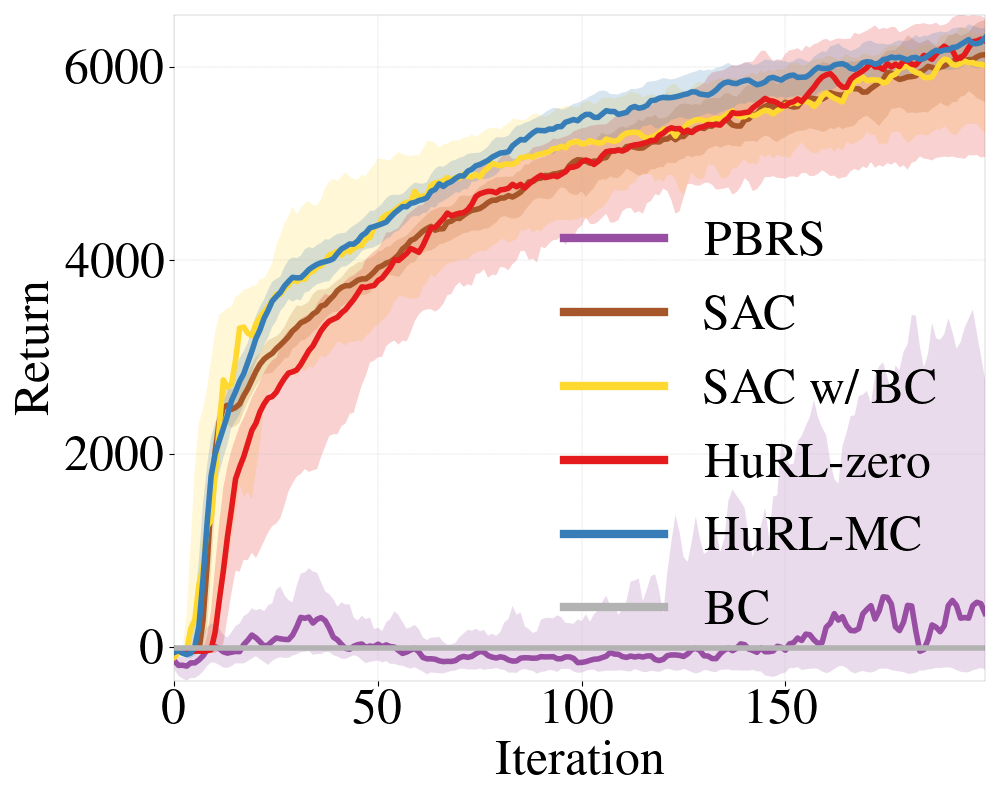}
		\caption{HalfCheetah-v2}
		\label{fig:main halfcheetah}
	\end{subfigure}
	\begin{subfigure}{0.29\textwidth}
        \includegraphics[width=\textwidth]{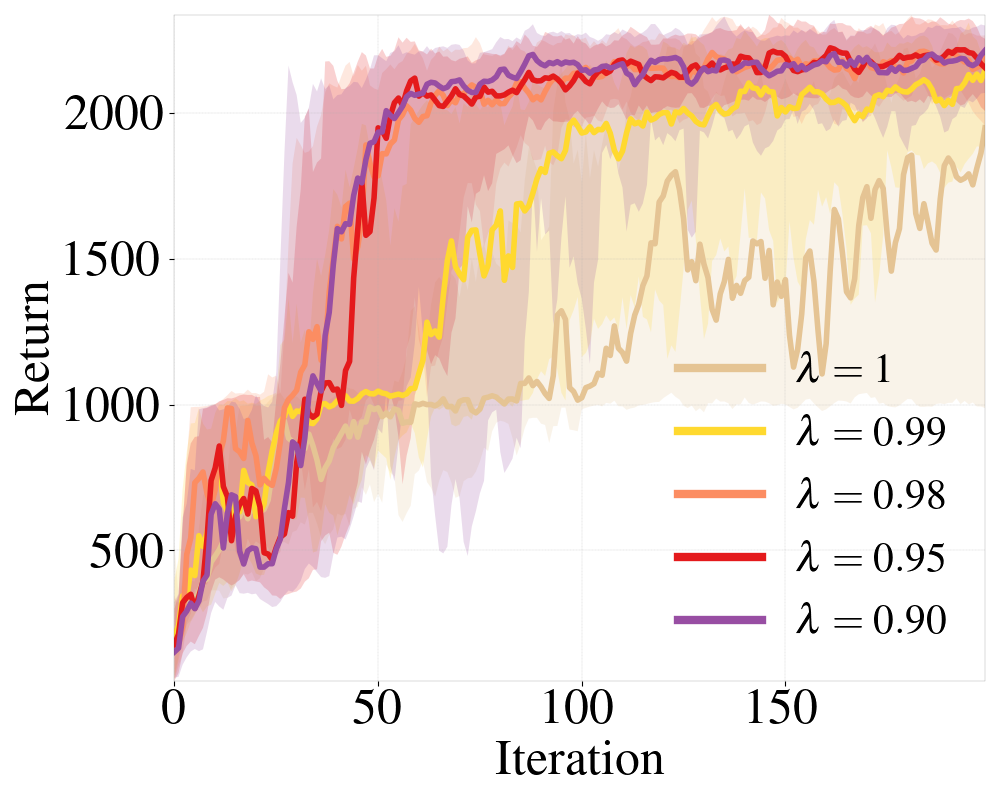}
        \caption{ $\lambda_0$ ablation.}
        \label{fig:lambda ablation (hopper)}
	\end{subfigure}
\caption{\small{Experimental results.
(a) uses an engineered heuristic for a sparse reward problem;
(b)-(e) use heuristics learned from offline data and share the same legend; (e) shows ablation results of different initial $\lambda_0$ in Hopper-v2.
The plots show the $25$th, $50$th, $75$th percentiles of algorithm performance over 30 random seeds.}}
\label{fig:main results}
\vspace{-2mm}
\end{figure*}



\vspace{-1mm}
\subsection{Results Summary}
\vspace{-1mm}

\cref{fig:main results} shows the results on the MuJoCo environments. \rev{Overall, we see that \algo is able to leverage engineered and learned heuristics to significantly improve the learning efficiency. This trend is consistent across all environments that we tested on.}

\rev{For the sparse-reward experiments, we see that SAC and PBRS struggle to learn, while \algo~is able to converge to the optimal performance much faster.
For the dense reward experiments, similarly \mcalgo converges much faster, though the gain in HalfCheetah-v2 is minor and it might have converged to a worse local maximum in Swimmer-v2.
}
In addition, we see that warm-starting \sac using BC (i.e. \sacw) can improve the learning efficiency compared with the vanilla \sac, but using \bc alone does not result in a good policy. Lastly, we see that using the zero heuristic (\zeroalgo) with extra $\lambda$-scheduling does not further improve the performance of \sacw. This comparison verifies that the learned Monte-Carlo heuristic provides non-trivial information.

\rev{
Interestingly, we see that applying PBRS to SAC leads to even worse performance than running SAC with the original reward. There are two reasons why SAC+PBRS is less desirable than SAC+HuRL as we discussed before: 
\begin{enumerate*}[label=\textit{\arabic*)}]    
\item
PBRS changes the reward/value scales in the induced MDP, and popular RL algorithms like SAC are very sensitive to such changes. In contrast, \algo~induces values on the same scale as we show in \cref{th:preserved MDP properties}. 
\item In \algo, we are effectively providing the algorithm some more side-information to let SAC shorten the horizon when the heuristic is good.
\end{enumerate*} 
}

The results in \cref{fig:main results} also have another notable aspect. 
Because the datasets used in the dense reward experiments contain trajectories collected by a range of policies, it is  likely that \bc suffers from disagreement in action selection among different policies. 
Nonetheless, training a heuristic using a basic Monte-Carlo regression seems to be less sensitive to these conflicts and still results in a helpful heuristic for \algo. 
One explanation can be that heuristics are only functions of states, not of states and actions, and therefore the conflicts are minor. 
%
%
%
Another plausible explanation is that \algo only uses the heuristic to \emph{guide} learning, and does not completely rely on it to make decisions 
%
Thus, \algo can be more robust to the heuristic quality, or, equivalently, to the quality of prior knowledge.




\vspace{-1mm}
\subsection{Ablation: Effects of Horizon Shortening}
\vspace{-1mm}
%
To further verify that the acceleration in \cref{fig:main results} is indeed due to horizon shortening, we conducted an ablation study for \mcalgo on Hopper-v2, whose results are presented in \cref{fig:lambda ablation (hopper)}. 
\mcalgo's best $\lambda$-schedule hyperparameters on Hopper-v2, which are reflected in its performance in the aforementioned \cref{fig:main hopper}, induced a near-constant schedule at $\lambda=0.95$; to obtain the curves in \cref{fig:lambda ablation (hopper)}, we ran \mcalgo with constant-$\lambda$ schedules for several more $\lambda$ values.
%
\cref{fig:lambda ablation (hopper)} shows that increasing $\lambda$ above $0.98$ leads to a performance drop. Since using a large $\lambda$ decreases bias and makes the reshaped MDP more similar to the original MDP, we conclude that the increased learning speed on Hopper-v2 is due to \algo's horizon shortening (coupled with the guidance provided by its heuristic).

\vspace{-1mm}
\section{Related Work}
\label{sec:related}
\vspace{-1mm}


%

\vspace{-2mm}
\paragraph{Discount regularization.}
The horizon-truncation idea can be traced back to Blackwell optimality in the known MDP setting~\citep{blackwell1962discrete}.
%
Reducing the discount factor amounts to running \algo with a zero heuristic. 
\citet{petrik2008biasing,jiang2015dependence,jiang2016structural} study the MDP setting; \citet{chen2018improving} study  POMDPs. \citet{amit2020discount} focus on discount regularization for Temporal Difference (TD) methods, while \citet{NEURIPS2019_eba237ec} use a logarithmic mapping 
to lower the discount for online RL.

\vspace{-2mm}
\paragraph{Reward shaping.}
Reward shaping has a long history in RL, from the seminal PBRS work~\citep{ng1999policy} to recent bilevel-optimization approaches~\citep{hu2020learning}. 
\citet{tessler2020maximizing} consider a complementary problem to \algo: given a discount $\gamma'$, they find a reward $r'$ that preserves trajectory ordering in the original MDP.
Meanwhile there is a vast literature on bias-variance trade-off for online RL with horizon truncation. TD($\lambda$)~\citep{seijen2014true,efroni2018beyond} and Generalized Advantage Estimates~\citep{schulman2015high} blend value estimates across discount factors, while~\citet{sherstan2020gamma} use the discount factor as an input to the value function estimator. TD($\Delta$)~\citep{romoff2019separating} computes differences between value functions across discount factors. 

\paragraph{Heuristics in model-based methods.}
Classic uses of heuristics include A*~\citep{hart1968formal}, Monte-Carlo Tree Search (MCTS)~\citep{browne2012survey}, and Model Predictive Control (MPC)~\citep{richalet1978model}.
\citet{zhong2013value} shorten the horizon in MPC using a value function approximator. 
\citet{hoeller2020deep} additionally use an estimate for the running cost to trade off solution quality and amount of computation. 
\citet{bejjani2018planning} show heuristic-accelerated truncated-horizon MPC on actual robots and tune the value function throughout learning. 
\citet{bhardwaj2020blending} augment MPC with a terminal value heuristic, which can be viewed as an instance of \algo where the base algorithm is MPC.
\rev{\citet{asai2020} learn an MDP expressible in the STRIPS formalism that can  benefit from relaxation-based planning heuristics.}
\rev{But \algo is more general, as it does not assume model knowledge and can work in unknown environments.}

\vspace{-2mm}
\paragraph{Pessimistic extrapolation.} 
Offline RL techniques employ pessimistic extrapolation for robustness~\citep{jin2020pessimism}, and their learned value functions can be used as heuristics in \algo. 
\citet{kumar2020conservative} penalize out-of-distribution actions in off-policy optimization while \citet{liu2020provably} additionally use a variational auto-encoder (VAE) to detect out-of-distribution states. We experimented with VAE-filtered pessimistic heuristics in \cref{app:experiments}.
Even pessimistic offline evaluation techniques~\citep{gulcehre2021regularized} can be useful in \algo, since function approximation often induces extrapolation errors~\cite{lu2018non}. 

\vspace{-2mm}
\paragraph{Heuristic pessimism vs. admissibility.} 
\rev{Our concept of heuristic pessimism can be easily confused for the well-established notion of \emph{admissibility}~\citep{russell2020}, but in fact they are opposites. Namely, an admissible heuristic never \emph{underestimates} $V^*$ (in the return-maximization setting), 
while a pessimistic one never \emph{overestimates}  $V^*$. Similarly, our notion of improvability is distinct from \emph{consistency}: they express related ideas, but with regards to pessimistic and admissible value functions, respectively. Thus, counter-intuitively from the planning perspective, our work shows that for policy \emph{learning}, \emph{in}admissible heuristics are desirable. \citet{pearl1981} is one of the few works that has analyzed desirable implications of heuristic inadmissibility in planning.
}

\vspace{-2mm}
\paragraph{Other warm-starting techniques.}
\algo is a new way to warm-start online RL methods.
\citet{bianchi2013heuristically} use a heuristic policy to initialize agents' policies. 
\citet{hester2018deep,vinyals2019grandmaster} train a value function and policy using batch IL and then used them as regularization in  online RL. 
\citet{nair2020accelerating} use off-policy RL on  batch data and fine-tune the learned policy. 
Recent approaches of hybrid IL-RL have strong connections to \algo~\citep{sun2017deeply,cheng2020policy,sun2018truncated}. 
In particular, \citet{cheng2020policy} is a special case of \algo with a max-aggregation heuristic.
%
\citet{farahmand2016truncated} use several related tasks to learn a task-dependent heuristic 
and perform shorter-horizon planning or RL. \rev{Knowledge distillation approaches~\citep{hinton2015distilling} can also be used to warm-start learning, but in contrast to them, \algo\ expects prior knowledge in the form of state value estimates, not features, and doesn't attempt to make the agent internalize this knowledge. A 
\algo\ agent learns from its own environment interactions, using prior knowledge only as guidance. 
Reverse Curriculum approaches~\cite{florensa2017reverse} create short horizon RL problems by initializing the agent close to the goal, and moving it further away as the agent improves. This gradual increase in the horizon inspires the \algo approach. However, \algo does not require the agent to be initialized on expert states and can work with many different base RL algorithms. 
}

\vspace{-1mm}
\section{Discussion and Limitations} \label{sec:disclim}
\vspace{-1mm}

This work is an early step towards theoretically understanding the role and potential of heuristics in guiding RL algorithms. 
We propose a framework, \algo, that can accelerate RL when an informative heuristic is provided. 
\algo induces a horizon-based regularization of RL, complementary to existing warm-starting schemes, and we provide theoretical and empirical analyses to support its effectiveness.
%
While this is a conceptual work without foreseeable societal impacts yet, we hope that it will help counter some of AI's risks by making RL more predictable via incorporating prior into learning.
%

%
We remark nonetheless that the effectiveness of \algo depends on the available heuristic. While \algo can eventually solve the original RL problem even with a non-ideal heuristic, using a bad heuristic can slow down learning. 
Therefore, an important future research direction is to adaptively tune the mixing coefficient based on the heuristic quality with curriculum or meta-learning techniques.
In addition, while our theoretical analysis points out a strong connection between good heuristics for \algo and pessimistic offline RL, techniques for the latter are not yet scalable and robust enough for high-dimensional problems.
Further research on offline RL can unlock the full potential of \algo.

\bibliographystyle{unsrtnat}
\bibliography{references}


\clearpage
\section*{Checklist}

\begin{enumerate}
\item For all authors...
\begin{enumerate}
  \item Do the main claims made in the abstract and introduction accurately reflect the paper's contributions and scope?
    \answerYes{}
  \item Did you describe the limitations of your work?
    \answerYes{\Cref{sec:disclim}.}
  \item Did you discuss any potential negative societal impacts of your work?
    \answerYes{\Cref{sec:disclim}.} It is a conceptual work that doesn't have foreseeable societal impacts yet.
  \item Have you read the ethics review guidelines and ensured that your paper conforms to them?
    \answerYes{}
\end{enumerate}

\item If you are including theoretical results...
\begin{enumerate}
  \item Did you state the full set of assumptions of all theoretical results?
    \answerYes{The assumptions are in the theorem, proposition, and lemma statements throughout the paper.}
	\item Did you include complete proofs of all theoretical results?
    \answerYes{\Cref{app:proofs} and \Cref{app:tech}.}
\end{enumerate}

\item If you ran experiments...
\begin{enumerate}
  \item Did you include the code, data, and instructions needed to reproduce the main experimental results (either in the supplemental material or as a URL)? \answerYes{In the supplemental material.}
  \item Did you specify all the training details (e.g., data splits, hyperparameters, how they were chosen)?
    \answerYes{\Cref{app:experiments}.}
	\item Did you report error bars (e.g., with respect to the random seed after running experiments multiple times)?
    \answerYes{\Cref{sec:exps} and \Cref{app:experiments}.}
	\item Did you include the total amount of compute and the type of resources used (e.g., type of GPUs, internal cluster, or cloud provider)?
     \answerYes{\Cref{app:experiments}.}
\end{enumerate}

\item If you are using existing assets (e.g., code, data, models) or curating/releasing new assets...
\begin{enumerate}
  \item If your work uses existing assets, did you cite the creators?
    \answerYes{References \cite{garage}, \cite{todorov2012mujoco}, and \cite{procgen} in \Cref{ssec:setup}.}
  \item Did you mention the license of the assets?
    \answerYes{In \Cref{ssec:setup}.}
  \item Did you include any new assets either in the supplemental material or as a URL?
    \answerYes{Heuristic computation and scripts to run training.}
  \item Did you discuss whether and how consent was obtained from people whose data you're using/curating?
    \answerNA{}
  \item Did you discuss whether the data you are using/curating contains personally identifiable information or offensive content?
    \answerNA{}
\end{enumerate}

\item If you used crowdsourcing or conducted research with human subjects...
\begin{enumerate}
  \item Did you include the full text of instructions given to participants and screenshots, if applicable?
    \answerNA{}
  \item Did you describe any potential participant risks, with links to Institutional Review Board (IRB) approvals, if applicable?
    \answerNA{}
  \item Did you include the estimated hourly wage paid to participants and the total amount spent on participant compensation?
    \answerNA{}
\end{enumerate}

\end{enumerate}

\clearpage
\appendix

\section{Missing Proofs}  \label{app:proofs}


We provide the complete proofs of the theorems stated in the main paper. 
We defer the proofs of the technical results to \cref{app:tech}.

\subsection{Proof of \cref{th:performance decomposition}}
\PerformanceDecomposition*

First we prove the equality using a new performance difference lemma that we will prove in \cref{app:tech}. This result may be of independent interest.
\begin{restatable}[General Performance Difference Lemma]{lemma}{GPDL} \label{lm:gpdl}
    Consider the reshaped MDP $\widetilde{\MM}$ defined by some $f:\SS\to\R$ and $\lambda\in[0,1]$.
    For any policy $\pi$, any state distribution $d_0$ and any $V:\SS\to\R$, it holds that
    \begin{align*}
        V(d_0) - V^\pi(d_0)  
        &= \frac{\gamma(1-\lambda)}{1-\gamma} \E_{s,a\sim d^\pi}   \E_{s'|s,a} \left[ \h(s')  - V(s')\right] \\
        &\quad
        + \lambda \left( V(d_0) - \widetilde{V}^\pi(d_0) \right)  + \frac{1-\lambda}{1-\gamma}  \left( V(d^{\pi}) - \widetilde{V}^\pi(d^{\pi}) \right)
    \end{align*}
\end{restatable}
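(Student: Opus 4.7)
The plan is to reduce the statement to two simple auxiliary identities.

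First, I derive an expression for $V^\pi - \widetilde{V}^\pi$ purely in terms of the heuristic. Rearranging the Bellman equation for $\widetilde{V}^\pi$ in $\widetilde{\MM}$ gives
\[
\widetilde{V}^\pi(s) \;=\; r(s,\pi) + \gamma\,\E_{s'|s,\pi}[\widetilde{V}^\pi(s')] \;-\; (1-\lambda)\gamma\,\E_{s'|s,\pi}[\widetilde{V}^\pi(s') - \h(s')],
\]
so $\widetilde{V}^\pi$ is the unique fixed point of the $\gamma$-discounted Bellman operator for $\pi$ in the \emph{original} MDP $\MM$ under a heuristic-deflated reward. Unrolling along the $\gamma$-discounted trajectory of $\pi$ starting from $d_0$ and using $d^\pi = (1-\gamma)\sum_t \gamma^t d_t^\pi$ collapses the sum into the key identity
\[
V^\pi(d_0) - \widetilde{V}^\pi(d_0) \;=\; \frac{(1-\lambda)\gamma}{1-\gamma}\,\E_{s,a\sim d^\pi,\, s'|s,a}\bigl[\widetilde{V}^\pi(s') - \h(s')\bigr]. \quad (\star)
\]
The second tool is an elementary one-step-shift identity: for any $U:\SS\to\R$,
\[
\gamma\,\E_{s,a\sim d^\pi,\, s'|s,a}[U(s')] \;=\; U(d^\pi) - (1-\gamma)\,U(d_0),
\]
which follows from $d^\pi = (1-\gamma)\sum_t \gamma^t d_t^\pi$ and reindexing the sum by one timestep.

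Given these, the derivation is short. Write $V(d_0) - V^\pi(d_0) = [V(d_0) - \widetilde{V}^\pi(d_0)] + [\widetilde{V}^\pi(d_0) - V^\pi(d_0)]$ and apply $(\star)$ to the second bracket, turning it into an expectation of $\h(s') - \widetilde{V}^\pi(s')$. Inserting $\pm V(s')$ inside this expectation peels off the target summand $\frac{(1-\lambda)\gamma}{1-\gamma}\E[\h(s') - V(s')]$ appearing on the RHS of the lemma. The leftover piece $\frac{(1-\lambda)\gamma}{1-\gamma}\E[V(s') - \widetilde{V}^\pi(s')]$ is rewritten using the one-step-shift identity with $U = V - \widetilde{V}^\pi$, which produces $\frac{1-\lambda}{1-\gamma}[V(d^\pi) - \widetilde{V}^\pi(d^\pi)] - (1-\lambda)[V(d_0) - \widetilde{V}^\pi(d_0)]$. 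Combining this with the initial $[V(d_0) - \widetilde{V}^\pi(d_0)]$ gives the coefficient $\lambda$ on $V(d_0) - \widetilde{V}^\pi(d_0)$ and reproduces the stated decomposition exactly.

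The main obstacle is identifying $(\star)$. Without it, one must expand directly in terms of $\{d_t^\pi\}$ and reconcile two interleaved geometric series, one weighted by $\gamma^t$ and the other by $\widetilde{\gamma}^t = (\lambda\gamma)^t$; doing so requires the combinatorial identity $(1-\lambda)\sum_{k=0}^n \lambda^{n-k} = 1 - \lambda^{n+1}$ and is far more tedious. Recognizing $\widetilde{V}^\pi$ as a $\gamma$-discounted value of $\pi$ in $\MM$ under a modified reward collapses that bookkeeping into a two-line rearrangement and renders the rest of the proof mechanical.
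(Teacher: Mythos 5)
Your proof is correct — I verified the key identity $(\star)$, the one-step-shift identity, and the final assembly, which reproduces the stated decomposition exactly — but it is organized differently from the paper's argument. The paper starts from the standard performance difference lemma in the original MDP, $V(d_0)-V^\pi(d_0)=\frac{1}{1-\gamma}\E_{d^\pi}[V(s)-(\BB V)(s,a)]$, splits the residual of the \emph{arbitrary} $V$ as $(\widetilde{\BB}V-\BB V)+(V-\widetilde{\BB}V)$, evaluates the first piece via the backup-difference computation, and dispatches the second piece with a separate lemma (\cref{lm:online value difference}) whose proof telescopes mixed $\gamma^t$ and $\lambda\gamma^{t+1}$ terms along the trajectory. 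You instead concentrate the entire effect of the reshaping into the single identity $(\star)$ about $V^\pi-\widetilde{V}^\pi$, obtained by recognizing $\widetilde{V}^\pi$ as the $\gamma$-discounted value of $\pi$ in $\MM$ under the deflated reward $r(s,a)-(1-\lambda)\gamma\,\E_{s'|s,a}[\widetilde{V}^\pi(s')-\h(s')]$; note this is the $\gamma$-weighted dual of the paper's \cref{lm:value difference}, which applies the performance difference lemma in $\widetilde{\MM}$ and hence gets $(\lambda\gamma)^{t-1}$ weights instead. The arbitrary $V$ then enters your argument only through trivial add-and-subtract steps, which is arguably cleaner and more self-contained. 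What the paper's route buys in exchange is the intermediate \cref{lm:online value difference}, which it reuses verbatim in the proof of \cref{th:regret as action gap}; your route would need a separate argument there. Both proofs ultimately rest on the same two ingredients — the Bellman-backup difference $(1-\lambda)\gamma\,\E_{s'|s,a}[V(s')-\h(s')]$ and the reindexing $\gamma\,\E_{s,a\sim d^\pi}\E_{s'|s,a}[U(s')]=U(d^\pi)-(1-\gamma)U(d_0)$ — so the difference is one of bookkeeping rather than of substance, but your bookkeeping is the lighter of the two.
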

Now take $V$ as $\widetilde{V}^*$ in the equality above.
Then we can write 
 \begin{align*}
        V^*(d_0) - V^\pi(d_0) &= \left( V^*(d_0) - \widetilde{V}^*(d_0) \right) + \frac{\gamma(1-\lambda)}{1-\gamma} \E_{s,a\sim d^\pi}   \E_{s'|s,a} \left[ \h(s')  - \widetilde{V}^*(s')\right] \\
        &\quad + \lambda \left( \widetilde{V}^*(d_0) - \widetilde{V}^\pi(d_0) \right)  + \frac{1-\lambda}{1-\gamma}  \left( \widetilde{V}^*(d^{\pi}) - \widetilde{V}^\pi(d^{\pi}) \right)
\end{align*}
which is the regret-bias decomposition.

Next we prove that these two terms are independent of constant offsets. For the regret term, this is obvious because shifting the heuristic by a constant would merely shift the reward by a constant. 
For the bias term, we prove the invariance below.

\begin{restatable}{proposition}{BiasShift}
    $\mathrm{Bias}(\h,\lambda,\pi) = \mathrm{Bias}(\h+b,\lambda,\pi)$ for any $b\in\R$.
\end{restatable}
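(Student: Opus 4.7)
The plan is to exploit two elementary observations. First, replacing $\h$ with $\h+b$ in \eqref{eq:short summary} changes the reshaped reward only by a constant: $\widetilde{r}_{\h+b}(s,a) = \widetilde{r}_{\h}(s,a) + (1-\lambda)\gamma b$, while the discount $\widetilde{\gamma} = \lambda\gamma$ is unaffected. Second, adding a constant $c_0$ to every reward in a discounted MDP with discount $\widetilde{\gamma}$ shifts \emph{every} value function (and hence the optimal one) by the same constant $c_0/(1-\widetilde{\gamma})$. Combining these, the new optimal value in the reshaped MDP is
\begin{equation*}
    \widetilde{V}^*_{\h+b}(s) = \widetilde{V}^*_{\h}(s) + \kappa\, b, \qquad \text{where } \kappa \coloneqq \frac{(1-\lambda)\gamma}{1-\lambda\gamma}.
\end{equation*}

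With this identity in hand, I would simply plug $\h+b$ and $\widetilde{V}^*_{\h+b}$ into the definition \eqref{eq:bias} and collect the terms proportional to $b$. The first bracket contributes $-\kappa b$, while inside the expectation the replacement yields an extra term $b - \kappa b = (1-\kappa) b$, giving an overall $b$-dependent contribution of
\begin{equation*}
    -\kappa b + \frac{\gamma(1-\lambda)}{1-\gamma}(1-\kappa)\,b.
\end{equation*}
The final step is a one-line algebraic check that $1-\kappa = (1-\gamma)/(1-\lambda\gamma)$, so that $\frac{\gamma(1-\lambda)}{1-\gamma}(1-\kappa) = \kappa$, and the $b$-dependent contributions cancel exactly. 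This proves $\mathrm{Bias}(\h,\lambda,\pi) = \mathrm{Bias}(\h+b,\lambda,\pi)$.

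There is no genuine obstacle here; the only thing to be careful about is to apply the constant-reward shift to $\widetilde{V}^*$ (the optimal value in the \emph{reshaped} MDP with discount $\lambda\gamma$), rather than to $V^*$, which is unchanged by the transformation. The regret-side invariance is even simpler: a constant reward shift leaves the policy ordering and the action-gap identity of \cref{th:regret as action gap} intact, so $\mathrm{Regret}(\h+b,\lambda,\pi) = \mathrm{Regret}(\h,\lambda,\pi)$. Since the performance difference $V^*(d_0)-V^\pi(d_0)$ on the left-hand side of \cref{th:performance decomposition} manifestly does not depend on $b$, one could alternatively deduce the bias invariance from the regret invariance via the decomposition itself — this gives a one-line alternative proof worth mentioning as a sanity check.
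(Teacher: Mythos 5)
Your proof is correct and follows essentially the same route as the paper's: both rest on the identity $\widetilde{V}^{\pi}_{\h+b} = \widetilde{V}^{\pi}_{\h} + \frac{(1-\lambda)\gamma}{1-\lambda\gamma}b$ (your $\kappa$ is exactly the paper's constant) followed by the same cancellation of the $b$-dependent terms. Your explicit remark that the shift must be applied to $\widetilde{V}^*$ rather than $V^*$, and the alternative derivation via the regret invariance, are sensible additions but do not change the substance of the argument.
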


\begin{proof}
    Notice that any $b\in\R$ and $\pi$,
    $
        \widetilde{V}^\pi(s;f+b) - \widetilde{V}^\pi(s;f) = \sum_{t=0}^\infty (\lambda\gamma)^t (1-\lambda)\gamma b = \frac{(1-\lambda)\gamma }{1-\lambda\gamma}b
    $.
    Therefore, we can derive
    \begin{align*}
        \mathrm{Bias}(\h+b,\lambda,\pi) - \mathrm{Bias}(\h,\lambda,\pi)
        &=  - \Const b  + \frac{\gamma(1-\lambda)}{1-\gamma} \E_{s,a\sim d^\pi}   \E_{s'|s,a} \left[  b -  \Const b \right] \\
        &= \frac{\gamma(1-\lambda)}{1-\gamma} b - \left(  1+  \frac{\gamma(1-\lambda)}{1-\gamma} \right) \Const b
    \end{align*}
    Since
    \begin{align*}
        \left(  1+  \frac{\gamma(1-\lambda)}{1-\gamma} \right) \Const b
        = \frac{1-\gamma + \gamma(1-\lambda)}{1-\gamma}   \Const b
        = \frac{ 1- \gamma\lambda}{1-\gamma}   \Const b
        = \frac{ (1-\lambda)\gamma }{1-\gamma} b
    \end{align*}
    we have $\mathrm{Bias}(\h+b,\lambda,\pi) - \mathrm{Bias}(\h,\lambda,\pi)=0$.
\end{proof}

\subsection{Proof of \cref{th:regret as action gap}}

\RegretAsActionGap*

Define the Bellman backup for the reshaped MDP: 
\begin{align*}
    (\widetilde{\BB} V)(s,a) \coloneqq \widetilde{r}(s,a) + \widetilde{\gamma} \E_{s'|s,a}[V(s')]
\end{align*}
Then by \cref{lm:online value difference} in \cref{app:tech}, we can rewrite the regret as 
    \begin{align*}
        \lambda \left( \widetilde{V}^*(d_0) - \widetilde{V}^\pi(d_0) \right)  + \frac{1-\lambda}{1-\gamma}  \left( \widetilde{V}^*(d^{\pi}) - \widetilde{V}^\pi(d^{\pi}) \right)
        = \E_{\rho^\pi(d_0)} \left[ \sum_{t=0}^\infty \gamma^t \left(  \widetilde{V}^*(s_t)  - (\widetilde{\BB} \widetilde{V}^*)(s_t,a_t) \right) \right]
\end{align*}
Notice the equivalence $ \widetilde{V}^*(s)  - (\widetilde{\BB} \widetilde{V}^*)(s,a) = - \widetilde{A}^*(s,a)$. This concludes the proof.

\subsection{Proof of \cref{th:preserved MDP properties}}

\PreservedProperties*

For the first statement, notice $\widetilde{r}(s,a) \in [0, 1 + \frac{(1-\lambda)\gamma}{1-\gamma}]$. Therefore, we have $\widetilde{V}^\pi(s) \geq 0$ as well as 
\begin{align*}
    \widetilde{V}^\pi(s) 
    &\leq \frac{1}{1-\lambda\gamma} \left( 1 + \frac{(1-\lambda)\gamma}{1-\gamma} \right)\\
    &= \frac{1}{1-\lambda\gamma}  \frac{1-\gamma + (1-\lambda)\gamma}{1-\gamma} = \frac{1}{1-\gamma}
\end{align*}
For the second statement, we just need to show the reshaped reward $\widetilde{r}(s,a)$ is linear in $\phi(s,a)$. This is straightforward because $\E_{s'|s,a}[\h(s')]$ is linear in $\phi(s,a)$.

\subsection{Proof of \cref{th:l-inf bias bound}}
\LinfBiasBound*

By \cref{th:performance decomposition}, we know that 
$\mathrm{Bias}(\h,\lambda,\pi) = \mathrm{Bias}(\h+b,\lambda,\pi)$ for any $b\in\R$. Now consider $b^*\in\R$ such that $\| \h +b^* - V^* \|_\infty \leq \epsilon$. Then by \cref{lm:value change},  we have also $
    \| \h + b^* - \widetilde{V}^{\pi^*} \|_\infty
    \leq \epsilon + \frac{(1-\lambda)\gamma \epsilon}{1-\lambda\gamma}
$.

Therefore, by \cref{th:bias bound}, we can derive with definition of the bias,            
    \begin{align*} 
    \mathrm{Bias}(\h,\lambda,\pi) &= 
    \mathrm{Bias}(\h+b^*,\lambda,\pi) \\
    &\leq
     (1-\lambda)\gamma \left(  \CC(\pi^*,V^*-\h-b^*, \lambda\gamma) + 
     \CC(\pi,\h+b^*-\widetilde{V}^*, \gamma)
     \right)
    \\
    &\leq
     (1-\lambda)\gamma \left(  \CC(\pi^*,V^*-\h-b^*, \lambda\gamma) + 
     \CC(\pi,\h+b^*-\widetilde{V}^{\pi^*}, \gamma)
     \right)
     \\
    &\leq
    (1-\lambda)\gamma
    \left(\frac{\epsilon}{1-\lambda\gamma} +  \frac{1}{1-\gamma}
    (\epsilon + 
   \frac{(1-\lambda)\gamma \epsilon}{1-\lambda\gamma}) \right)\\
   &\leq
    (1-\lambda)\gamma
    \left(\frac{\epsilon}{1-\gamma} +  \frac{1}{1-\gamma}
    (\epsilon + 
   \frac{(1-\lambda)\gamma \epsilon}{1-\gamma}) \right)\\
   &= \frac{2(1-\lambda)\gamma \epsilon}{1-\gamma}
   + \frac{(1-\lambda)^2\gamma^2 \epsilon}{(1-\gamma)^2}
   \leq \frac{(1-\lambda\gamma)^2}{(1-\gamma)^2} \epsilon
    \end{align*}
    
    
\subsection{Proof of \cref{th:bias bound}}
\BiasBound* 
Recall the definition of bias: 
\begin{align*}
 \mathrm{Bias}(\h,\lambda,\pi) =
        \left( V^*(d_0) - \widetilde{V}^*(d_0) \right) + \frac{\gamma(1-\lambda)}{1-\gamma} \E_{s,a\sim d^\pi}   \E_{s'|s,a} \left[ \h(s')  - \widetilde{V}^*(s')\right]  
\end{align*}
For the first term, we can derive by performance difference lemma (\cref{lm:pdl}) and \cref{lm:value difference}
\begin{align*}
        V^*(d_0) - \widetilde{V}^*(d_0)  &\leq V^*(d_0) - \widetilde{V}^{\pi^*}(d_0) \\
        &= (1-\lambda)\gamma  \E_{\rho^{\pi^*}(d_0)} \left[ \sum_{t=1}^\infty (\lambda\gamma)^{t-1}  ( V^*(s_t) -\h(s_t)) \right] 
        = (1-\lambda)\gamma  \CC(\pi, V^*-f, \lambda\gamma)
\end{align*}
For the second term, we can rewrite it as 
\begin{align*}
    \frac{\gamma(1-\lambda)}{1-\gamma} \E_{s,a\sim d^\pi}   \E_{s'|s,a} \left[ \h(s')  - \widetilde{V}^*(s')\right] 
    &=   \gamma(1-\lambda)  \E_{\rho^{\pi}(d_0)} \left[ \sum_{t=1}^\infty \gamma^{t-1}  ( \h(s_t) - \widetilde{V}^*(s_t) ) \right] \\
    &=  (1-\lambda)\gamma  \CC(\pi^*, f-\widetilde{V}^*, \gamma)
\end{align*}

\subsection{Proof of \cref{th:improvable heuristic}}

\ImprovableHeuristic*

Let $d_t^\pi(s;s_0)$  denote the state distribution at the $t$th step after running $\pi$ starting from $s_0\in\SS$ in $\MM$ (i.e. $d_0^\pi(s;s_0) = \one\{s=s_0\}$). Define the mixture
\begin{align} \label{eq:average state distribution in reshaped mdp}
\widetilde{d}_{s_0}^{\pi}(s) \coloneqq (1-\widetilde{\gamma})\sum_{t=0}^\infty \widetilde{\gamma}^t d_t^\pi(s ; s_0)
\end{align}
where we recall the new discount $\widetilde{\gamma}=\gamma\lambda$
By performance difference lemma (\cref{lm:pdl}), we can write for any policy $\pi$ and any $s_0 \in \smash{}$
\begin{align*}
    \widetilde{V}^\pi(s_0) - \h(s_0) = \frac{1}{1-\lambda\gamma} \E_{\widetilde{d}_{s_0}^{\pi}}[ (\widetilde{\BB} \h)(s,a) - \h(s)]
\end{align*}
Notice that
\begin{align*}
    (\widetilde{\BB} \h)(s,a) 
    &=\widetilde{r}(s,a)  + \widetilde{\gamma} \E_{s'|s,a} [\h(s')] \\
    &= r(s,a) + (1-\lambda) \gamma \E_{s'|s,a}[\h(s')] + \lambda\gamma \E_{s'|s,a} [\h(s')] \\
    &= r(s,a) + \gamma \E_{s'|s,a}[\h(s')] = (\BB\h)(s,a)
\end{align*}
Let $\pi$ denote the greedy policy of $\argmax_a (\BB \h) (s,a)$.
Then we have, by the improvability assumption we have $(\BB\h)(s,\pi) - \h(s)\geq 0$ and therefore, 
\begin{align*}
    \widetilde{V}^*(s_0) \geq \widetilde{V}^\pi(s_0) 
    &= \h(s_0) + \frac{1}{1-\lambda\gamma} \E_{\widetilde{d}_{s_0}^{\pi}}[ (\widetilde{\BB} \h)(s,a) - \h(s)] \\
    &= \h(s_0) + \frac{1}{1-\lambda\gamma} \E_{\widetilde{d}_{s_0}^{\pi}}[ (\BB\h)(s,a) - \h(s)] \\
    &\geq \h(s_0)
\end{align*}
Since $s_0$ is arbitrary above, we have the desired statement.

\subsection{Proof of \cref{th:bellman pessimism and improvable heuristic}}

\BellmanPessimismAndImprovableHeuristic*

The proof  is straightforward: 
We have $\max_a (\BB\h)(s,a) \geq (\BB\h)(s,\pi) \geq Q(s,\pi) = \h(s)$, which is the definition of $\h$ being improvable. For the argument of uniform lower bound, we chain the assumption $Q(s,a)\leq (\BB \h)(s,a)$: 
\begin{align*}
    \h(s) = Q(s,\pi') &= 
     r(s,\pi') + \gamma \E_{s'|s,\pi'}[\h(s')] \\
    &\leq r(s,\pi') + \gamma \left(r(s',\pi'),  + \gamma \E_{s''|s',\pi'}[\h(s'')] \right) \\
    &\leq V^{\pi'}(s)
\end{align*}

\section{Technical Lemmas} \label{app:tech}

    \subsection{Lemmas of Performance Difference}

    Here we prove a general performance difference for the $\lambda$-weighting used in the reshaped MDPs.
    \GPDL*
    Our new lemma includes the two below performance difference lemmas in the literature as special cases. 
    \cref{pr:lambda weighted pdf} can be obtained by setting $V=f$;  \cref{lm:pdl} can be obtained by further setting $\lambda=0$ (that is, \cref{lm:pdl} is a special case of \cref{pr:lambda weighted pdf} with $\lambda=0$; and \cref{lm:gpdl} generalizes both). The proofs of these existing performance difference lemmas do not depend on the new generalization in \cref{lm:gpdl}, please refer to~\citep{kakade2002approximately,cheng2020policy} for details. 
    \begin{lemma}[Performance Difference Lemma~\citep{kakade2002approximately,cheng2020policy} ]\label{lm:pdl}
        For any policy $\pi$, any state distribution $d_0$ and any $V:S\to\R$, it holds that 
        \begin{align*}
            V(d_0) - V^\pi(d_0) = \frac{1}{1-\gamma} \E_{d^\pi}[V(s) - (\BB V) (s,a)]
        \end{align*}
    \end{lemma}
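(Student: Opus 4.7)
The plan is to derive the identity by combining the standard performance difference lemma (Lemma A.2) applied in both the original MDP $\MM$ and the reshaped MDP $\widetilde{\MM}$, connected through an algebraic decomposition of the visitation measure $d^\pi$. The key observation is that the reshaped Bellman operator $\widetilde{\BB}$ differs from $\BB$ in a controlled way: since $\widetilde{r}(s,a) = r(s,a) + (1-\lambda)\gamma\E_{s'|s,a}[\h(s')]$ and $\widetilde{\gamma} = \lambda\gamma$, direct expansion gives
\begin{equation*}
(\BB V)(s,a) - (\widetilde{\BB} V)(s,a) = (1-\lambda)\gamma\,\E_{s'|s,a}[V(s') - \h(s')].
\end{equation*}
Applying \cref{lm:pdl} in $\MM$ to the pair $(V, V^\pi)$ and plugging in this identity immediately produces the first RHS term $\frac{\gamma(1-\lambda)}{1-\gamma}\E_{s,a\sim d^\pi}\E_{s'|s,a}[\h(s')-V(s')]$, leaving the residual
\begin{equation*}
R \coloneqq \tfrac{1}{1-\gamma}\E_{d^\pi}\bigl[V(s) - (\widetilde{\BB}V)(s,a)\bigr]
\end{equation*}
to be matched against the remaining two RHS terms.

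Next, I would apply \cref{lm:pdl} instantiated inside the reshaped MDP $\widetilde{\MM}$ (which is legitimate since \cref{lm:pdl} holds for \emph{any} MDP and \emph{any} $V$) with two different initial distributions, $d_0$ and $d^\pi$. Using the shorthand $\widetilde{d}^\pi_\rho \coloneqq (1-\lambda\gamma)\sum_{t\geq 0}(\lambda\gamma)^t d^\pi_t(\cdot;\rho)$ for the reshaped visitation measure starting from $\rho$, this gives
\begin{equation*}
V(d_0) - \widetilde{V}^\pi(d_0) = \tfrac{1}{1-\lambda\gamma}\E_{\widetilde{d}^\pi_{d_0}}[V - \widetilde{\BB}V], \quad V(d^\pi) - \widetilde{V}^\pi(d^\pi) = \tfrac{1}{1-\lambda\gamma}\E_{\widetilde{d}^\pi_{d^\pi}}[V - \widetilde{\BB}V].
\end{equation*}
So it suffices to establish the measure-level decomposition
\begin{equation*}
\tfrac{1}{1-\gamma}\,d^\pi(s) = \tfrac{\lambda}{1-\lambda\gamma}\,\widetilde{d}^\pi_{d_0}(s) + \tfrac{1-\lambda}{(1-\gamma)(1-\lambda\gamma)}\,\widetilde{d}^\pi_{d^\pi}(s),
\end{equation*}
from which $R = \lambda(V(d_0) - \widetilde{V}^\pi(d_0)) + \tfrac{1-\lambda}{1-\gamma}(V(d^\pi) - \widetilde{V}^\pi(d^\pi))$ follows by linearity.

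The main obstacle, and the technical heart of the proof, is verifying this measure identity. The plan is to expand both sides in the basis of step distributions $d^\pi_k(s; d_0)$. The left side equals $\sum_{k\geq 0}\gamma^k d^\pi_k(s;d_0)$. Splitting $\gamma^k = \lambda^{k+1}\gamma^k + (1-\lambda^{k+1})\gamma^k$ yields two pieces. The first piece sums to $\tfrac{\lambda}{1-\lambda\gamma}\widetilde{d}^\pi_{d_0}(s)$ directly. For the second piece, I would use the geometric identity $1-\lambda^{k+1} = (1-\lambda)\sum_{t=0}^{k}\lambda^t$, swap the order of summation, and apply the Markov property in the form $\sum_{j\geq 0}\gamma^j d^\pi_{t+j}(s;d_0) = \tfrac{1}{1-\gamma}d^\pi_t(s;d^\pi)$, which telescopes the inner sum into $\tfrac{1-\lambda}{(1-\gamma)(1-\lambda\gamma)}\widetilde{d}^\pi_{d^\pi}(s)$. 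Combining the two pieces closes the identity and completes the proof.
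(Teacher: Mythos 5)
Your proposal does not prove the statement it was asked to prove. The statement is the standard Performance Difference Lemma (\cref{lm:pdl}), i.e. $V(d_0) - V^\pi(d_0) = \frac{1}{1-\gamma}\E_{d^\pi}[V(s) - (\BB V)(s,a)]$; what you have written is a proof of the paper's \emph{General} Performance Difference Lemma (\cref{lm:gpdl}), the three-term identity involving $\widetilde{V}^\pi$, $\lambda$, and $\h$. Worse, read as a proof of \cref{lm:pdl} your argument is circular: you invoke \cref{lm:pdl} itself twice as a known tool, once in $\MM$ and once in $\widetilde{\MM}$, and nowhere derive the identity from first principles. For the record, the paper does not prove this lemma either; it defers to \citet{kakade2002approximately} and \citet{cheng2020policy}. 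A self-contained proof is the usual telescoping argument: write $V(d_0)=\E_{\rho^\pi(d_0)}\left[\sum_{t\ge 0}\gamma^t\left(V(s_t)-\gamma V(s_{t+1})\right)\right]$, subtract $V^\pi(d_0)=\E_{\rho^\pi(d_0)}\left[\sum_{t\ge 0}\gamma^t r(s_t,a_t)\right]$, note that $r(s_t,a_t)+\gamma V(s_{t+1})$ is an unbiased sample of $(\BB V)(s_t,a_t)$ given $(s_t,a_t)$, and convert the discounted sum over time into $\frac{1}{1-\gamma}\E_{d^\pi}[\,\cdot\,]$ using the definition of $d^\pi$.

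For what it is worth, as a proof of \cref{lm:gpdl} your argument appears sound and is genuinely different from the paper's. The paper splits the Bellman residual as $(V-\widetilde{\BB}V)+(\widetilde{\BB}V-\BB V)$ and then handles $\frac{1}{1-\gamma}\E_{d^\pi}[V-\widetilde{\BB}V]$ by a trajectory-level telescoping manipulation (\cref{lm:online value difference}). You instead establish the visitation-measure identity $\frac{1}{1-\gamma}d^\pi = \frac{\lambda}{1-\lambda\gamma}\widetilde{d}^\pi_{d_0} + \frac{1-\lambda}{(1-\gamma)(1-\lambda\gamma)}\widetilde{d}^\pi_{d^\pi}$ and apply the standard lemma in $\widetilde{\MM}$ under each of the two measures; your geometric-series and Markov-property computations check out, and this route gives a cleaner structural explanation of where the weights $\lambda$ and $\frac{1-\lambda}{1-\gamma}$ come from. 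But it answers a different question than the one posed, and it cannot stand in for a proof of \cref{lm:pdl}.
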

	\begin{restatable}[$\lambda$-weighted Performance Difference Lemma~\citep{cheng2020policy}]{lemma}{LambdaWeightedPDL} \label{pr:lambda weighted pdf}
		For any policy $\pi$, $\lambda\in[0,1]$, and $f:\SS\to\R$, it holds that 
       \begin{align*}
        f(d_0) - V^\pi(d_0)  
        &= \lambda \left( f(d_0) - \widetilde{V}^\pi(d_0) \right)  + \frac{1-\lambda}{1-\gamma}  \left( f(d^{\pi}) - \widetilde{V}^\pi(d^{\pi}) \right)
    \end{align*}
	\end{restatable}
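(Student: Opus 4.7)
The plan is to recognize this lemma as an immediate corollary of the General Performance Difference Lemma (\cref{lm:gpdl}) stated just above. Setting the auxiliary function $V$ in \cref{lm:gpdl} equal to the heuristic $f$ itself makes the expectation $\E_{s,a\sim d^\pi}\E_{s'|s,a}[f(s') - V(s')]$ vanish identically, killing the first term on the right-hand side of \cref{lm:gpdl} and leaving exactly the claimed identity. If I may freely invoke \cref{lm:gpdl}, the proof is a one-line substitution.

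For a self-contained direct argument, I would apply the standard Performance Difference Lemma (\cref{lm:pdl}) three times: to $(f, V^\pi)$ in the original MDP $\MM$ starting from $d_0$, and to $(f, \widetilde{V}^\pi)$ in the reshaped MDP $\widetilde{\MM}$ starting from both $d_0$ and $d^\pi$. The crucial observation making these applications line up is that the Bellman backup of $f$ coincides in the two MDPs: $(\widetilde{\BB}f)(s,a) = r(s,a) + (1-\lambda)\gamma\,\E_{s'|s,a}[f(s')] + \lambda\gamma\,\E_{s'|s,a}[f(s')] = (\BB f)(s,a)$, because the shaping term in $\widetilde{r}$ exactly compensates for the discount reduction $\gamma - \widetilde{\gamma} = (1-\lambda)\gamma$ when evaluated on $f$. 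Consequently, each of the three invocations of \cref{lm:pdl} can be written as an expectation of the common Bellman residual $g(s,a) \coloneqq f(s) - (\BB f)(s,a)$ under an appropriate average-state occupancy measure, scaled by $\frac{1}{1-\gamma}$, $\frac{1}{1-\lambda\gamma}$, and $\frac{1}{1-\lambda\gamma}$ respectively.

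What remains is to verify the linear identity $\frac{d^\pi}{1-\gamma} = \frac{\lambda}{1-\lambda\gamma}\widetilde{d}^\pi_{d_0} + \frac{1-\lambda}{(1-\gamma)(1-\lambda\gamma)}\widetilde{d}^\pi_{d^\pi}$ between the discounted occupancy measures that will reweight $g(s,a)$. This is the main obstacle in the direct approach: the nested distribution $\widetilde{d}^\pi_{d^\pi}$ unrolls into a double geometric sum because $d^\pi$ is itself a mixture of $\{d_t^\pi\}$. After reindexing by total time $n = t + k$, the coefficient on $\gamma^n d_n^\pi(\cdot;d_0)$ collapses via $\lambda \cdot \lambda^n + (1 - \lambda^{n+1}) = 1$, and the identity falls out. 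Since \cref{lm:gpdl} short-circuits all of this bookkeeping, I would present the one-line corollary as the primary proof and mention the direct derivation as a remark for readers who prefer an argument from first principles.
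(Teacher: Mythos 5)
Your primary argument is exactly the paper's: the appendix explicitly notes that \cref{pr:lambda weighted pdf} is obtained from \cref{lm:gpdl} by setting $V=f$, which kills the $\E_{s,a\sim d^\pi}\E_{s'|s,a}[\h(s')-V(s')]$ term, and this is non-circular since the proof of \cref{lm:gpdl} relies only on \cref{lm:pdl} and the auxiliary backup/occupancy lemmas. Your sketched direct derivation (via $(\widetilde{\BB}f)=(\BB f)$ and the occupancy-measure identity) is a correct but unnecessary supplement.
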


    \subsubsection{Proof of \cref{lm:gpdl}}
        First, we use the standard performance difference lemma (\cref{lm:pdl}) in the original MDP and \cref{lm:Bellman backup difference} for the first and the last steps below,
        \begin{align*}
            V(d_0) - V^\pi(d_0)
            &= \frac{1}{1-\gamma} \E_{d^{\pi}} \left[ V(s)  - (\BB V)(s,a)  \right]\\
            &= \frac{1}{1-\gamma} \E_{d^{\pi}} \left[ (\widetilde{\BB} V)(s,a) - (\BB V) (s,a)  \right] + \frac{1}{1-\gamma} \E_{d^{\pi}} \left[ V(s)  - (\widetilde{\BB} V)(s,a)  \right]  \\
            &= \frac{\gamma(1-\lambda)}{1-\gamma} \E_{s,a\sim d^\pi}   \E_{s'|s,a} \left[ \h(s')  - V(s')\right]
            + \frac{1}{1-\gamma} \E_{s,a\sim d^{\pi}} \left[ V(s)  - (\widetilde{\BB} V)(s,a)  \right]  
        \end{align*}
        
       Finally, substituting the equality in \cref{lm:online value difference} into the above equality concludes the proof.

\subsection{Properties of reshaped MDP}

The first lemma is the difference of Bellman backups.
\begin{lemma} \label{lm:Bellman backup difference}
    For any  $V:\SS\to\R$,
    \begin{align*}
        (\BB V)(s,a) - (\widetilde{\BB} V) (s,a)
        &= (1-\lambda) \gamma \E_{s'|s,a}[ V(s') -\h(s')]
    \end{align*}
\end{lemma}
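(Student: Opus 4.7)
The plan is to prove this by direct substitution, unpacking both Bellman backups using their definitions and then collecting like terms. The lemma is essentially a bookkeeping identity: the difference between the two backups arises from exactly two modifications that \algo makes to the MDP, namely the reward augmentation by $(1-\lambda)\gamma\, \E_{s'|s,a}[\h(s')]$ and the discount reduction from $\gamma$ to $\lambda\gamma$, and the identity shows these two modifications combine cleanly into a single expression.

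Concretely, I would first write out $(\BB V)(s,a) = r(s,a) + \gamma\,\E_{s'|s,a}[V(s')]$ and $(\widetilde{\BB} V)(s,a) = \widetilde{r}(s,a) + \widetilde{\gamma}\,\E_{s'|s,a}[V(s')]$, and then substitute the definitions $\widetilde{r}(s,a) = r(s,a) + (1-\lambda)\gamma\,\E_{s'|s,a}[\h(s')]$ and $\widetilde{\gamma} = \lambda\gamma$ from \eqref{eq:reshaped MDP}. Taking the difference, the $r(s,a)$ terms cancel, and the $V(s')$ terms combine to yield $\gamma - \lambda\gamma = (1-\lambda)\gamma$ multiplied by $\E_{s'|s,a}[V(s')]$. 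The remaining contribution is $-(1-\lambda)\gamma\,\E_{s'|s,a}[\h(s')]$, which combines with the $V(s')$ term to give exactly $(1-\lambda)\gamma\,\E_{s'|s,a}[V(s') - \h(s')]$, as claimed.

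There is no substantive obstacle here, as this is a one-line algebraic identity once the definitions are expanded; the only thing to be careful about is linearity of expectation so that the two $\E_{s'|s,a}$ terms can be combined into a single expectation of $V(s') - \h(s')$. This lemma is the elementary building block that then feeds into \cref{lm:gpdl} via the decomposition $V(s) - (\BB V)(s,a) = [(\widetilde{\BB} V)(s,a) - (\BB V)(s,a)] + [V(s) - (\widetilde{\BB} V)(s,a)]$, which is how it is actually used in the text above.
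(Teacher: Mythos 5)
Your proposal is correct and follows exactly the same route as the paper: expand both backups via their definitions, cancel $r(s,a)$, and combine the $\gamma - \lambda\gamma$ coefficient on $\E_{s'|s,a}[V(s')]$ with the $-(1-\lambda)\gamma\,\E_{s'|s,a}[\h(s')]$ term by linearity of expectation. Nothing is missing.
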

\begin{proof}
    The proof follows from the definition of the reshaped MDP:
    \begin{align*}
        &(\BB V)(s,a) - (\widetilde{\BB} V) (s,a)\\
        &= r(s,a) + \gamma \E_{s'|s,a}[ V(s') ] - r(s,a) - (1-\lambda)\gamma \mathbb{E}_{s'|s,a}[\h(s')] - \gamma \lambda \E_{s'|s,a}[ V(s') ] \\
        &= (1-\lambda) \gamma \E_{s'|s,a}[ V(s') -\h(s')]
    \end{align*}
\end{proof}

This lemma characterizes, for a policy, the difference in returns. 
\begin{lemma} \label{lm:value difference}
    For any policy $\pi$ and $\h:\SS\to\R$,
    \begin{align*}
        V^\pi(s) - \widetilde{V}^{\pi}(s)
        &=(1-\lambda)\gamma \E_{\rho^{\pi}(s)} \left[ \sum_{t=1}^\infty (\lambda\gamma)^{t-1}  ( V^\pi(s_t) -\h(s_t)) \right]
    \end{align*}
\end{lemma}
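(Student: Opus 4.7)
The idea is to derive a one-step recursion for the difference $D(s) \coloneqq V^\pi(s) - \widetilde{V}^\pi(s)$ and then unroll it. Starting from the Bellman fixed-point equations $V^\pi(s) = (\BB V^\pi)(s,\pi)$ in $\MM$ and $\widetilde{V}^\pi(s) = (\widetilde{\BB} \widetilde{V}^\pi)(s,\pi)$ in $\widetilde{\MM}$, I would subtract and insert $\pm(\widetilde{\BB} V^\pi)(s,\pi)$ in the middle to get
\begin{align*}
    D(s) &= (\BB V^\pi)(s,\pi) - (\widetilde{\BB} V^\pi)(s,\pi) + (\widetilde{\BB} V^\pi)(s,\pi) - (\widetilde{\BB} \widetilde{V}^\pi)(s,\pi).
\end{align*}
The first difference is handled by \cref{lm:Bellman backup difference} with $V = V^\pi$, which gives $(1-\lambda)\gamma\,\E_{s'|s,\pi}[V^\pi(s') - \h(s')]$. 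The second difference is simply $\widetilde{\gamma}\,\E_{s'|s,\pi}[D(s')]$ by the definition of $\widetilde{\BB}$.

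Combining, I would obtain the recursion
\begin{align*}
    D(s) = \widetilde{\gamma}\,\E_{s'|s,\pi}[D(s')] + (1-\lambda)\gamma\,\E_{s'|s,\pi}[V^\pi(s') - \h(s')].
\end{align*}
Since $\widetilde{\gamma} = \lambda\gamma < 1$ and all quantities are bounded (rewards lie in $[0,1]$ and $\h$ is fixed), this recursion has a unique bounded solution, which can be expanded by repeatedly substituting $D(s')$ on the right-hand side. Writing $\rho^\pi(s)$ for the trajectory distribution starting at $s_0 = s$ and matching powers yields
\begin{align*}
    D(s) = (1-\lambda)\gamma\,\E_{\rho^\pi(s)}\!\left[\sum_{t=1}^\infty (\lambda\gamma)^{t-1}\bigl(V^\pi(s_t) - \h(s_t)\bigr)\right],
\end{align*}
which is the claimed identity.

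There is no real obstacle here; the only point requiring a little care is the justification for unrolling the recursion to infinity, which follows from boundedness of $V^\pi$, $\widetilde{V}^\pi$, and $\h$ together with $\lambda\gamma < 1$ (so the remainder term $\widetilde{\gamma}^T\,\E[D(s_T)]$ vanishes as $T\to\infty$). The rest is mechanical algebra relying entirely on \cref{lm:Bellman backup difference} and the two Bellman equations.
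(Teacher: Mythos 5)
Your proposal is correct and matches the paper's proof in substance: the paper applies the performance difference lemma (\cref{lm:pdl}) in the reshaped MDP and then invokes \cref{lm:Bellman backup difference}, which is exactly the telescoping you carry out by hand via the recursion for $D(s) = V^\pi(s) - \widetilde{V}^\pi(s)$. The only cosmetic difference is that the paper packages the unrolling through the discounted occupancy measure $\widetilde{d}_{s_0}^{\pi}$ of \eqref{eq:average state distribution in reshaped mdp} rather than deriving and unrolling the fixed-point recursion explicitly.
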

\begin{proof}
    The proof is based on performance difference lemma (\cref{lm:pdl}) applied in the reshaped MDP and \cref{lm:Bellman backup difference}.
    Recall the definition $\widetilde{d}_{s_0}^\pi(s)$ in \eqref{eq:average state distribution in reshaped mdp} and define $\widetilde{d}_{s_0}^\pi(s,a) = \widetilde{d}_{s_0}^\pi(s)\pi(a|s)$.
    For any $s_0\in\SS$,
    \begin{align*}
        V^\pi(s_0) - \widetilde{V}^{\pi}(s_0)
        &= \frac{1}{1-\gamma\lambda} \E_{s,a\sim\widetilde{d}_{s_0}^{\pi}} [  V^\pi(s) - \widetilde{\BB} V^\pi (s,a)   ]\\
        &= \frac{1}{1-\gamma\lambda} \E_{s,a\sim\widetilde{d}_{s_0}^{\pi}} [  (\BB V^\pi)(s,a) - (\widetilde{\BB} V^\pi) (s, a) ] \\
        &= \frac{(1-\lambda)\gamma}{1-\gamma\lambda} \E_{s,a\sim\widetilde{d}_{s_0}^{\pi}}  \E_{s'|s,a}[ V^\pi(s') -\h(s')]
    \end{align*}
    Finally, substituting the definition of $\widetilde{d}_{s_0}^{\pi}$ finishes the proof.
\end{proof}

A consequent lemma shows that $ \h$ and $\widetilde{V}^{\pi}$ are close, when  $\h$ and  $V^\pi$ are.
\begin{lemma} \label{lm:value change}
    For a policy $\pi$, suppose $-\epsilon_l \leq \h(s) - V^\pi(s) \leq \epsilon_u$. It holds
    \begin{align*}
        -\epsilon_l - \frac{(1-\lambda)\gamma \epsilon_u}{1-\lambda\gamma} \leq \h(s) - \widetilde{V}^{\pi}(s)
        \leq \epsilon_u + \frac{(1-\lambda)\gamma \epsilon_l}{1-\lambda\gamma}
    \end{align*}
\end{lemma}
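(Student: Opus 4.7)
The plan is to use the decomposition $\h(s) - \widetilde{V}^\pi(s) = (\h(s) - V^\pi(s)) + (V^\pi(s) - \widetilde{V}^\pi(s))$ and bound each piece separately using the given $[-\epsilon_l, \epsilon_u]$ envelope on $\h - V^\pi$ together with Lemma~\ref{lm:value difference}.

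First I would handle the easy piece: by assumption, $\h(s) - V^\pi(s) \in [-\epsilon_l, \epsilon_u]$ pointwise. For the second piece, I would invoke Lemma~\ref{lm:value difference}, which gives the exact identity
\begin{align*}
V^\pi(s) - \widetilde{V}^\pi(s) = (1-\lambda)\gamma \, \E_{\rho^{\pi}(s)} \left[ \sum_{t=1}^\infty (\lambda\gamma)^{t-1} (V^\pi(s_t) - \h(s_t)) \right].
\end{align*}
Since the pointwise hypothesis $-\epsilon_l \leq \h - V^\pi \leq \epsilon_u$ is equivalent to $-\epsilon_u \leq V^\pi - \h \leq \epsilon_l$, the integrand inside the expectation lies in $[-\epsilon_u, \epsilon_l]$ for every trajectory and every $t$. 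Summing the geometric series $\sum_{t=1}^\infty (\lambda\gamma)^{t-1} = 1/(1-\lambda\gamma)$ then gives
\begin{align*}
-\frac{(1-\lambda)\gamma \epsilon_u}{1-\lambda\gamma} \;\leq\; V^\pi(s) - \widetilde{V}^\pi(s) \;\leq\; \frac{(1-\lambda)\gamma \epsilon_l}{1-\lambda\gamma}.
\end{align*}

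Adding the two envelopes yields the claimed two-sided bound on $\h(s) - \widetilde{V}^\pi(s)$. There is essentially no obstacle here: this is a routine consequence of Lemma~\ref{lm:value difference} combined with pointwise reversal of the inequality and a geometric sum. The only care needed is to keep track of which side of the inequality flips when passing from $\h - V^\pi$ to $V^\pi - \h$, so that the $\epsilon_l$ and $\epsilon_u$ ends up paired with the correct direction (upper bound of $\h - \widetilde{V}^\pi$ picks up $\epsilon_l$ from the second piece, lower bound picks up $\epsilon_u$).
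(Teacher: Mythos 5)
Your proposal is correct and follows essentially the same route as the paper: both bound $\h - V^\pi$ by the hypothesis, apply Lemma~\ref{lm:value difference} to $V^\pi - \widetilde{V}^\pi$, and sum the geometric series $\sum_{t\ge 1}(\lambda\gamma)^{t-1} = 1/(1-\lambda\gamma)$, with the $\epsilon_l$/$\epsilon_u$ pairing handled exactly as you describe. The only cosmetic difference is that the paper writes out the upper bound and invokes symmetry for the lower one, whereas you treat both sides at once.
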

\begin{proof}
    We prove the upper bound by \cref{lm:value difference}; the lower bound can be shown by symmetry.
    \begin{align*}
        \h(s) - \widetilde{V}^{\pi}(s)
        &\leq \epsilon_u +  V^\pi(s) - \widetilde{V}^{\pi}(s) \\
        &= \epsilon_u + (1-\lambda)\gamma \E_{\rho^{\pi}(s)} \left[ \sum_{t=1}^\infty (\lambda\gamma)^{t-1}  ( V^\pi(s_t) -\h(s_t)) \right]\\
        &\leq \epsilon_u + \frac{(1-\lambda)\gamma \epsilon_l}{1-\lambda\gamma}
    \end{align*}
\end{proof}

    The next lemma relates online Bellman error to value gaps.
   \begin{lemma} \label{lm:online value difference}
        For any $\pi$ and $V:\SS\to\R$,
        \begin{align*}
            \frac{1}{1-\gamma} \left( \E_{d^{\pi}} \left[ V(s)  - (\widetilde{\BB} V)(s,a)  \right] \right) = \lambda \left( V(d_0) - \widetilde{V}^\pi(d_0) \right)  + \frac{1-\lambda}{1-\gamma}  \left( V(d^{\pi}) - \widetilde{V}^\pi(d^{\pi}) \right)
        \end{align*}
    \end{lemma}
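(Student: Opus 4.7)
The plan is to reduce both sides to geometric series in the quantity $G_t := \E_{s\sim d_t^\pi, a\sim\pi(\cdot|s)}[V(s) - (\widetilde{\BB}V)(s,a)]$, where $d_t^\pi$ denotes the time-$t$ state marginal of $\pi$ in $\MM$ starting from $d_0$, and then verify that the coefficients match.

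First, I would rewrite the LHS. Since $d^\pi = (1-\gamma)\sum_{t=0}^\infty \gamma^t d_t^\pi$, the LHS equals $\sum_{t=0}^\infty \gamma^t G_t$. Next, applying the standard performance difference lemma (\cref{lm:pdl}) to the \emph{reshaped} MDP $\widetilde{\MM}$, which has discount $\widetilde{\gamma} = \lambda\gamma$ and Bellman operator $\widetilde{\BB}$, with initial distribution $d_0$ yields
\begin{align*}
V(d_0) - \widetilde{V}^\pi(d_0) = \sum_{t=0}^\infty (\lambda\gamma)^t G_t.
\end{align*}
Applying the same lemma with initial distribution $d^\pi$ and then expanding $d^\pi$ as the mixture of time marginals gives, after swapping the order of summation (setting $n = k+t$),
\begin{align*}
V(d^\pi) - \widetilde{V}^\pi(d^\pi) = (1-\gamma)\sum_{t=0}^\infty (\lambda\gamma)^t \sum_{k=0}^\infty \gamma^k G_{k+t} = (1-\gamma)\sum_{n=0}^\infty \gamma^n G_n \sum_{t=0}^n \lambda^t.
\end{align*}

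Using $\sum_{t=0}^n \lambda^t = (1-\lambda^{n+1})/(1-\lambda)$ (or $n+1$ at $\lambda=1$, handled by continuity), the second term of the RHS becomes
\begin{align*}
\frac{1-\lambda}{1-\gamma}\bigl(V(d^\pi) - \widetilde{V}^\pi(d^\pi)\bigr) = \sum_{n=0}^\infty \gamma^n G_n - \lambda \sum_{n=0}^\infty (\lambda\gamma)^n G_n.
\end{align*}
Adding the first term $\lambda\sum_{n=0}^\infty(\lambda\gamma)^n G_n$ cancels the second sum exactly, leaving $\sum_{n=0}^\infty \gamma^n G_n$, which matches the LHS.

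The only subtlety is the swap of the double sum and the bookkeeping of the geometric-series coefficient $\sum_{t=0}^n \lambda^t$; absolute convergence holds under the standard boundedness assumption on $V$, so the exchange is routine. Everything else reduces to the standard PDL applied to $\widetilde{\MM}$ and the definition of $d^\pi$ as a discounted mixture of time marginals.
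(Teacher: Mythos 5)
Your proof is correct, but it takes a different route from the paper's. The paper proves this lemma directly, LHS to RHS: it splits $V - \widetilde{\BB}V$ into $(V - \widetilde{\BB}\widetilde{V}^\pi) + (\widetilde{\BB}\widetilde{V}^\pi - \widetilde{\BB}V)$, uses Bellman consistency of $\widetilde{V}^\pi$ in $\widetilde{\MM}$ to turn the first piece into the value gap $V(s_t)-\widetilde{V}^\pi(s_t)$ and linearity of $\widetilde{\BB}$ to turn the second into a $\lambda\gamma$-weighted one-step-shifted copy of that gap, and then a single index shift in the resulting series produces the $\lambda$ / $(1-\lambda)$ split. You instead work with the Bellman residual $G_t$ as the common currency: you invoke the standard PDL \emph{in the reshaped MDP} twice (from $d_0$ and from $d^\pi$), use the fact that $d_t^\pi(\cdot\,; d^\pi) = (1-\gamma)\sum_k \gamma^k d_{k+t}^\pi$ by linearity of the Markov evolution, and verify that the geometric-series coefficients collapse to $\sum_n \gamma^n G_n$. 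Both arguments are sound; yours is RHS-to-LHS and treats the PDL as a black box, which makes it transparent \emph{why} the weights $\lambda$ and $\tfrac{1-\lambda}{1-\gamma}$ are exactly the ones that make the double sum telescope, at the cost of the extra double-sum bookkeeping and the mixture-of-marginals identity. The paper's version is shorter and avoids the interchange of summation entirely, needing only boundedness for the single series to converge. One small remark: since this lemma is itself used in the paper to prove the general performance difference lemma (\cref{lm:gpdl}), your reliance on the standard PDL (\cref{lm:pdl}) applied to $\widetilde{\MM}$ is fine and introduces no circularity, as \cref{lm:pdl} is independent of \cref{lm:online value difference}.
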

    \begin{proof}
        We use \cref{lm:Bellman backup difference} in the third step below.
        \begin{align*}
        &\E_{d^{\pi}} \left[ V(s)  - (\widetilde{\BB} V)(s,a)  \right]\\
            &=
            \E_{d^{\pi}} \left[ V(s)  - (\widetilde{\BB} \widetilde{V}^\pi)(s,a)  \right]  + \E_{d^{\pi}} \left[ \widetilde{\BB} \widetilde{V}^\pi(s,a)  - (\widetilde{\BB} V)(s,a)  \right] \\
            &= \E_{d^{\pi}} \left[ V(s)  - \widetilde{V}^\pi(s)  \right]  + \E_{d^{\pi}} \left[ (\widetilde{\BB} \widetilde{V}^\pi)(s,a)  - (\widetilde{\BB} V)(s)  \right] \\
            &= \E_{d^{\pi}} \left[ V(s)  - \widetilde{V}^\pi(s)  \right]  - \lambda\gamma \E_{s,a\sim d^{\pi}} \E_{s'|s,a} \left[  \widetilde{V}^\pi(s')  -  V(s')  \right]\\
            &= (1-\gamma) \E_{\rho^\pi(d_0)}\left[  \sum_{t=0}^\infty \gamma^t (V(s_t)  - \widetilde{V}^\pi(s_t)) - \lambda \gamma^{t+1} ( \widetilde{V}^\pi(s_{t+1})  -  V(s_{t+1}) )  \right]\\
            &= (1-\gamma)\lambda( V(d_0)  - \widetilde{V}^\pi(d_0) ) + (1-\gamma)(1-\lambda)  \E_{\rho_\pi(d_0)}\left[ \sum_{t=0}^\infty \gamma^t (V(s_t)  - \widetilde{V}^\pi(s_t)) \right]
        \end{align*}
\end{proof}

\section{Experiments} \label{app:experiments}

\subsection{Details of the MuJoCo Experiments} \label{app:details}

We consider four dense reward MuJoCo environments (Hopper-v2, HalfCheetah-v2, Humanoid-v2, and Swimmer-v2) and a sparse reward version of Reacher-v2.

\rev{
In the sparse reward Reacher-v2, the agent receives a reward of $0$ at the goal state (defined as $\norm{g(s)-e(s)}\leq 0.01$ and $-1$ elsewhere, where $g(s)$ and $e(s)$ denote the goal state and the robot's end-effector positions, respectively. We designed a heuristic $\h(s) = r(s,a) - 100 \norm{e(s)-g(s)} $, as this is a goal reaching task. Here the policy is randomly initialized, as no prior batch data is available before interactions.
}

In the dense reward experiments, we suppose that a batch of data collected by multiple behavioral policies are available before learning, and a heuristic is constructed by an offline policy evaluation algorithm from the batch data; in the experiments, we generated these behavioral policies by running \sac from scratch and saved the intermediate policies generated in training.
We designed this heuristic generation experiment to simulate the typical scenario where offline data collected by multiple policies of various qualities is available before learning. In this case, a common method for inferring what values a good policy could get is to inspect the realized accumulated rewards in the dataset.
For simplicity, we use basic Monte Carlo regression to construct heuristics, where a least squares regression problem was used to fit a fully connected neural network to predict the empirical returns on the trajectories in the sampled batch of data.

\rev{Specifically, for each dense reward Mujoco experiment, we ran SAC for 200 iterations and logged the intermediate policies for every 4 iterations, resulting in a total of 50 behavior policies. In each random seed of the experiment, we performed the following: We used each behavior policy to collect trajectories of at most 10,000 transition tuples, which gave about 500,000 offline data points over these 50 policies. These data were used to construct the Monte-Carlo regression data, which was done by computing the accumulated discounted rewards along sampled trajectories. Then we generated the heuristic used in the experiment by fitting a fully connected NN with (256,256)-hidden layers using default ADAM with step size 0.001 and minibatch size 128 for 30 epochs over this randomly generated dataset of 50 behavior policies.}

For the dense reward Mujoco experiments, we also use behavior cloning (\bc) with $\ell_2$ loss to warm start RL agents based on the same batch dataset of 500,000 offline data points.
The base RL algorithm here is SAC, which is based on the standard implementation of Garage (MIT License)~\cite{garage}.
The policy and the value networks are fully connected neural networks, independent of each other. The policy is Tanh-Gaussian and the value network has a linear head. 

\paragraph{Algorithms.}
We compare the performance of different algorithms below.
\begin{enumerate*}[label=\textit{\arabic*)}]    
\item \bc
\item SAC 
\item SAC with \bc warm start (\sacw)
\item \algo with a zero heuristic and \bc warm start (\zeroalgo)
\item \algo with the Monte-Carlo heuristic and \bc warm start (\mcalgo).
\end{enumerate*}
For the \algo algorithms, the mixing coefficient $\lambda_n$ is scheduled as 
\begin{align*}
    \lambda_n &= \lambda_0 + (1-\lambda_0)\tanh\left( \frac{n-1}{\alpha N-1}  \times \arctan(0.99)\right) / 0.99 \\
    &\eqqcolon \lambda_0 + (1-\lambda_0) c_\omega
\tanh(\omega (n-1) )
\end{align*}
for $n=1,\dots,N$, where $\lambda_0 \in [0,1]$ is the initial $\lambda$ and $\alpha>0$ controls the increasing rate.
This schedule ensures that $\lambda_N=1$ when $\alpha=1$. Increasing $\alpha$ from $1$ makes $\lambda_n$ converge to $1$ slower.

We chose these algorithms to illustrate the effect of each additional warm-start component (\bc and heuristics) added on top of the base algorithm \sac.
\zeroalgo is \sacw but with an extra $\lambda$ schedule described above that further lowers the discount, whereas \sac and \sacw keep a constant discount factor.

\paragraph{Evaluation and Hyperparameters.}
In each iteration, the RL agent has a fixed sample budget for environment interactions, and its performance is measured in terms of the undiscounted accumulated rewards (estimated by 10 rollouts) of the deterministic mean policy extracted from \sac.
The hyperparameters used in the algorithms above were selected as follows. The selection was done by uniformly random grid search\footnote{We ran 300 and 120 randomly chosen configurations from \cref{tb:hp grid search} with different random seeds to tune the base algorithm and the $\lambda$-scheduler, respectively. Then the best configuration was used in the following experiments.} over the range of hyperparameters in \cref{tb:hp grid search} to maximize the AUC of the training curve. 

\begin{table}[ht]
    \begin{center}
        \begin{tabular}{ c|  c }
        Polcy step size  &   [0.00025, 0.0005, 0.001, 0.002] \\
        Value step size  &  [0.00025, 0.0005, 0.001, 0.002] \\
        Target step size & [0.005, 0.01, 0.02, 0.04]  \\
        $\gamma$ & [0.9, 0.99, 0.999] \\
        $\lambda_0$ &[0.90, 0.95, 0.98, 0.99] \\
        $\alpha$ & [$10^{-5}$, 1.0, $10^{5}$] \\
        \end{tabular}
    \end{center}
    \caption{\algo's hyperparameter value grid for the MuJoCo experiments.}
    \label{tb:hp grid search}
\end{table}

First, the learning rates (policy step size, value step size, target step size) and the discount factor of the base RL algorithm, \sac, were tuned for each environment to maximize the performance. This tuned discount factor is used as the de facto discount factor $\gamma$ of the original MDP $\MM$.
Fixing the hyperparameters above, $\lambda_0$ and $\alpha$ for the $\lambda$ schedule of \algo were tuned for each environment and each heuristic.
The tuned hyperparameters and the environment specifications are given in \cref{tb:sparse mujoco exp configs,tb:mujoco exp configs} below. (The other hyperparameters, in addition to the ones tuned above, were selected manually and fixed throughout all the experiments).
\begin{table}[ht]
    \begin{center}
        \begin{tabular}{ c|  c  } 
        Environment & Sparse-Reacher-v2  \\\hline
        Obs. Dim & 11 \\
        Action Dim & 2 \\
        Evaluation horizon & 500 \\
        $\gamma$ & 0.9 \\
        Batch Size & 10000  \\
        Policy NN Architecture & (64,64)  \\
        Value NN Architecture &  (256,256)  \\
        Polcy step size  &  0.00025 \\
        Value step size  & 0.00025\\
        Target step size & 0.02\\
        Minibatch Size & 128 \\
        Num. of Grad. Step per Iter. & 1024  \\ \hline
        \algo $\lambda_0$ & 0.5 \\
        \algo-MC $\alpha$ & $10^5$ \\
        \end{tabular}
    \end{center}
    \caption{Sparse reward MuJoCo experiment configuration details. All the values other than $\lambda$-scheduler's (i.e. those used in \sac) are shared across different algorithms in the comparison. All the neural networks here fully connected and have $\tanh$ activation; the numbers of hidden nodes are documented above.
    Note that when $\alpha=10^5$, effectively $\lambda_n=\lambda_0$ in the training iterations;  when $\alpha=10^{-5}$,  $\lambda_n \approx 1 $ throughout. 
    }
    \label{tb:sparse mujoco exp configs}
\end{table}

\begin{table}[ht]
    \begin{center}
        \begin{tabular}{ c|  c | c|  c|  c  } 
        Environment & Hopper-v2 & HalfCheetah-v2 &  Swimmer-v2 & Humanoid-v2 \\\hline
        Obs. Dim & 11 & 17 & 8 &376\\
        Action Dim & 3 & 6 & 2 & 17\\
        Evaluation horizon & 1000 & 1000 & 1000 & 1000 \\
        $\gamma$ & 0.999 & 0.99 & 0.999 & 0.99 \\
        Batch Size & 4000 & 4000 &  4000 & 10000 \\
        Policy NN Architecture & (64,64) & (64,64) & (64,64) & (256,256)  \\
        Value NN Architecture &  (256,256) & (256,256) & (256,256) &  (256,256) \\
        Polcy step size  &  0.00025 & 0.00025 & 0.0005 & 0.002 \\
        Value step size  & 0.0005 &  0.0005 & 0.0005 & 0.00025\\
        Target step size & 0.02 & 0.04 & 0.0100 & 0.02\\
        Num. of Behavioral Policies &  50 & 50 & 50 & 50 \\
        Minibatch Size & 128 & 128 & 128 & 128 \\
        Num. of Grad. Step per Iter. & 1024 & 1024 & 1024 & 1024 \\ \hline
        \algo-MC $\lambda_0$ & 0.95 & 0.99 & 0.95 & 0.9 \\
        \algo-MC $\alpha$ & $10^5$ & $10^5$ & 1.0 & 1.0 \\
        \algo-zero $\lambda_0$ & 0.98 & 0.99 & 0.99 & 0.95\\
        \algo-zero $\alpha$ & $10^{-5}$ & $10^5$ & 1.0 & $10^{-5}$\\
        \end{tabular}
    \end{center}
    \caption{Dense reward MuJoCo experiment configuration details. All the values other than $\lambda$-scheduler's (i.e. those used in \sac) are shared across different algorithms in the comparison. All the neural networks here fully connected and have $\tanh$ activation; the numbers of hidden nodes are documented above.
    Note that when $\alpha=10^5$, effectively $\lambda_n=\lambda_0$ in the training iterations;  when $\alpha=10^{-5}$,  $\lambda_n \approx 1 $ throughout. 
    }
    \label{tb:mujoco exp configs}
\end{table}

Finally, after all these hyperparameters were decided, we conducted additional testing runs with 30 different random seeds and report their statistics here. The randomness include the data collection process of the behavioral policies, training the heuristics from batch data, BC, and online RL, but the behavioral policies are fixed.

\rev{
While this procedure takes more compute (the computation resources are reported below; tuning the base SAC takes the most compute), 
it produces more reliable results 
without (luckily or unluckily) using some hand-specified hyperparameters or a particular way of aggregating scores when tuning hyperparameters across environments.
Empirically, we also found using constant $\lambda$ around $0.95 \sim 0.98$ leads to 
good performance, though it may not be the best environment-specific choice.
}

\paragraph{Resources.}
Each run of the experiment was done using an Azure Standard\_H8 machine (8 Intel Xeon E5 2667 CPUs; memory 56 GB; base frequency 3.2 GHz; all cores peak frequency 3.3 GHz; single core peak frequency 3.6 GHz). The Hopper-v2, HalfCheetah-v2, Swimmer-v2 experiments took about an hour per run. The Humanoid-v2 experiments took about 4 hours.
No GPU was used.

\paragraph{Extra Experiments with VAE-based Heuristics.}
We conduct additional experiments of \algo using a VAE-filtered  pessimistic heuristic. This heuristic is essentially the same as the Monte-Carlo regression-based heuristic we discussed, except that an extra VAE (variational auto-encoder) is used to classify states into known and unknown states in view of the batch dataset, and then the predicted values of unknown states are set to be the lowest empirical return seen in the dataset. In implementation, this is done by training a state VAE (with a latent dimension of 32) to model the states in the batch data, and then a new state classified as unknown if its VAE loss is higher than 99-th percentile of the VAE losses seen on the batch data. The implementation and hyperparameters are based on the code from \citet{liu2020provably}. We note, however, that this basic VAE-based heuristic does not satisfy the assumption of \cref{th:bellman pessimism and improvable heuristic}. 

These results are shown in \cref{fig:extra results}, where \algo-VAEMC denotes \algo using this VAE-based heuristic. Overall, we see that such a basic pessimistic estimate does not improve the performance from the pure Monte-Carlo version (\mcalgo); while it does improve the results slightly in HalfCheetah-v2, it gets worse results in Humanoid-v2 and Swimmer-v2 compared with \mcalgo. Nonetheless, \algo-VAEMC is still better than the base \sac.

\begin{figure*}[th]
	\centering
    \begin{subfigure}{0.245\textwidth}
		\includegraphics[width=\textwidth]{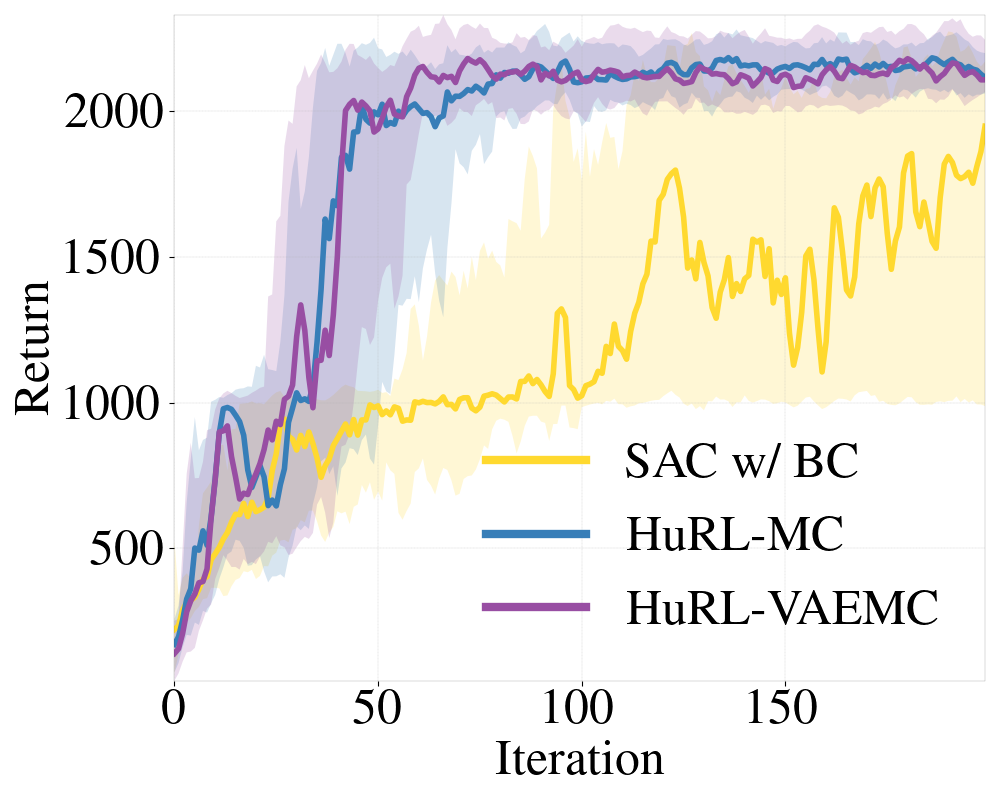}
		\caption{Hopper-v2}
		\label{fig:extra hopper}
	\end{subfigure}
	\begin{subfigure}{0.245\textwidth}
		\includegraphics[width=\textwidth]{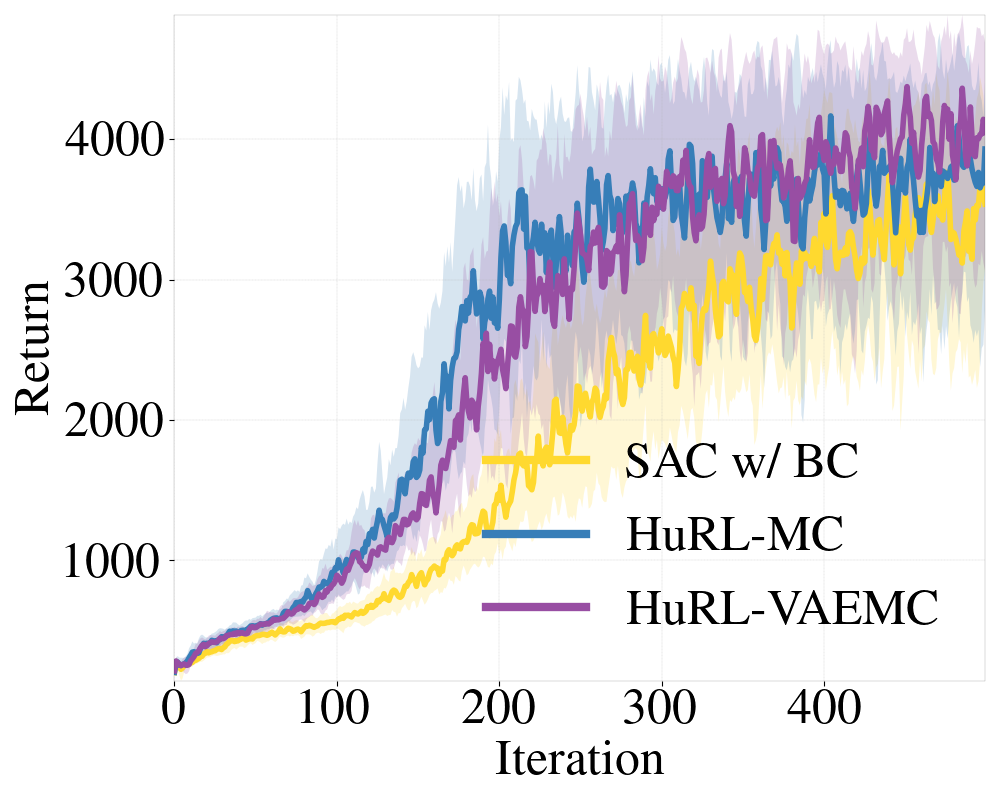}
		\caption{Humanoid-v2}
		\label{fig:extra humanoid}
	\end{subfigure}
	\begin{subfigure}{0.245\textwidth}
		\includegraphics[width=\textwidth]{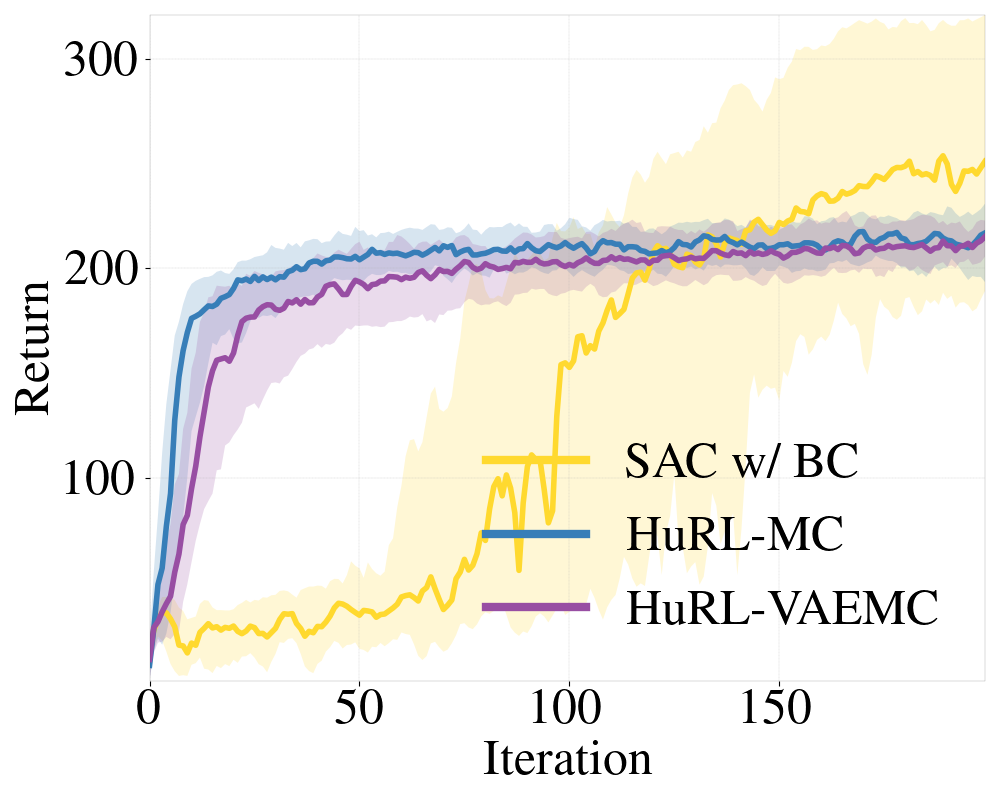}
		\caption{Swimmer-v2}
		\label{fig:extra swimmer}
	\end{subfigure}
	\begin{subfigure}{0.245\textwidth}
		\includegraphics[width=\textwidth]{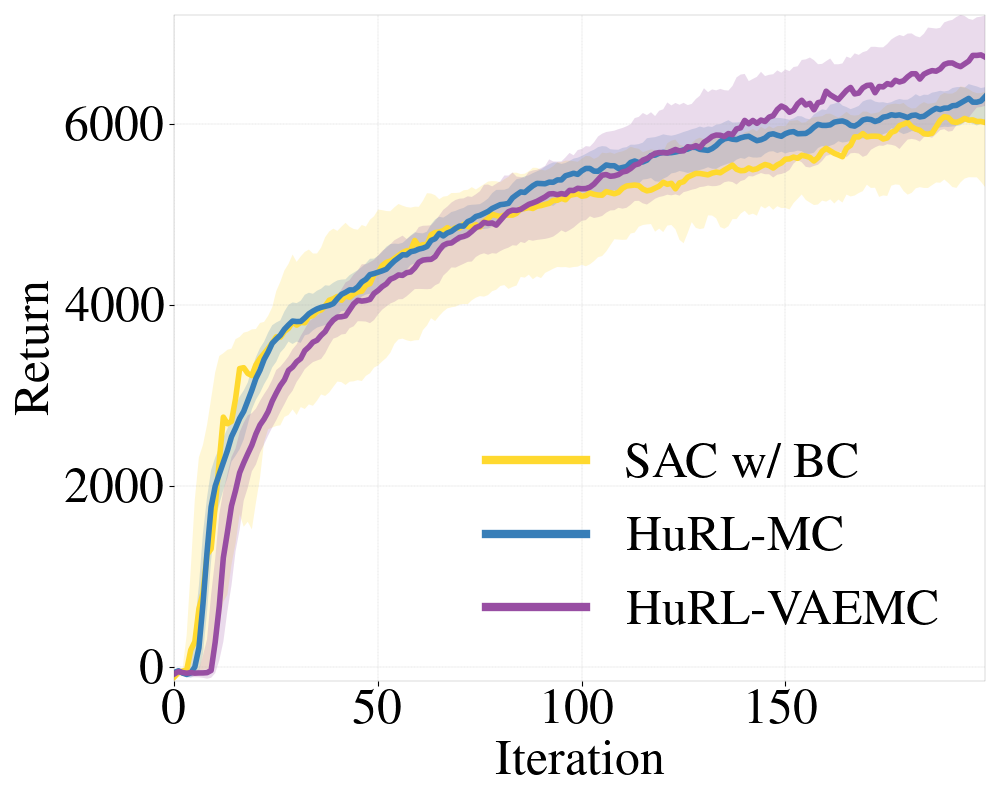}
		\caption{HalfCheetah-v2}
		\label{fig:extra halfcheetah}
	\end{subfigure}

\caption{\small{Extra experimental results of different MuJoCo environments. The plots show the $25$th, $50$th, $75$th percentiles of each algorithm's performance over 30 random seeds.}}
\label{fig:extra results}
\end{figure*}

\subsection{Procgen Experiments}

In addition to MuJoCo environments, where the agent has direct access to the true low-dimensional system state, we conducted experiments on the Procgen benchmark suite \cite{cobbe2020leveraging, procgen}. The Procgen suite consists of 16 procedurally generated Atari-like game environments, whose main conceptual differences from MuJoCo environments are partial observability and much higher dimensionality of agents' observations (RGB images). The 16 games are very distinct structurally, but each game has an unlimited number of levels\footnote{In Procgen, levels aren't ordered by difficulty. They are merely game variations.} that share common characteristics. All levels of a given game are situated in the same underlying state space and have the same transition function but differ in terms of the regions of the state space reachable within each level and in their observation spaces. We focus on the \emph{sample efficiency} Procgen mode \cite{procgen}: in each RL episode the agent faces a new game level, and is expected to eventually learn a single policy that performs well across all levels of the given game.

Besides the differences in environment characteristics between MuJoCo and Procgen, the Procgen experiments are also dissimilar in their design:

\begin{itemize}
    \item In contrast to the MuJoCo experiments, where we assumed to be given a batch of data from which we constructed a heuristic and a warm-start policy, in the Procgen experiments we simulate a scenario where we are given \emph{only} the heuristic function itself. Thus, we don't warm-start the base algorithm with a BC policy when running \algo.
    
    \item In the Procgen experiments, we share a single set of all hyperparameters' values -- those of the base algorithm, those of \algo's $\lambda$-scheduling, and those used for generating heuristics -- across all 16 games. This is meant to simulate a scenario where \algo is applied across a diverse set of problems using good but problem-independent hyperparameters.  
\end{itemize}

\paragraph{Algorithms.}
We used PPO \cite{schulman2017proximal} implemented in RLlib (Apache License 2.0) \cite{pmlr-v80-liang18b} as the base algorithm. We generated a heuristic for each game as follows:
\begin{itemize}
    \item We ran PPO for $8M$ environment interaction steps and saved the policy after every $500K$ steps, for a total of 16 checkpoint policies.
    \item We ran the policies in a random order by executing 12000 environment interaction steps using each policy. For each rollout trajectory, we computed the discounted return for each observation in that trajectory, forming $\langle observation, return \rangle$ training pairs.
    \item We used this data to learn a heuristic via regression. We mixed the data, divided it into batches of 5000 training pairs and took a gradient step w.r.t. MSE computed over each batch. The learning rate was $10^{-4}$.
\end{itemize}

 Our main algorithm, a \algo flavor denoted as PPO-\algo, is identical to the base PPO but uses the Monte-Carlo heuristic computed as above.

%
\paragraph{Hyperparameters and evaluation} The base PPO's hyperparameters in RLlib were chosen to match PPO's performance reported in the original Procgen paper~\cite{cobbe2020leveraging} for the "easy" mode as closely as possible across all 16 games (\citet{cobbe2020leveraging} used a different PPO implementation with a different set of hyperparameters). As in that work, our agent used the IMPALA-CNN$ \times 4$ network architecture~\cite{pmlr-v80-espeholt18a,cobbe2020leveraging} without the LSTM. The heuristics employed the same architecture as well. We used a single set of hyperparameter values, listed in \cref{tb:ppo exp configs}, for all Procgen games, both for policy learning and for generating the checkpoints for computing the heuristics.

\begin{table}[h]
    \begin{center}
\begin{tabular}{c|c}
\hline
Impala layer sizes                         & 16, 32, 32    \\ 
Rollout fragment length                    & 256             \\ 
Number of workers                          & 0 \emph{(in RLlib, this means 1 rollout worker)}  \\ 
Number of environments per worker                  & 64             \\ 
Number of CPUs per worker                  & 5             \\ 
Number of GPUs per worker                  & 0             \\ 
Number of training GPUs                 & 1             \\ 
$\gamma$                                & 0.99          \\ 
SGD minibatch size                             & 2048           \\ 
Train batch size                             & 4000           \\ 
Number of SGD iterations                & 3            \\ 
SGD learning\ rate                          & 0.0005        \\ 
Framestacking                           & off           \\ 
Batch mode                   & truncate\_episodes           \\ 
Value function clip parameter           &10.0            \\ 
Value function loss coefficient           &0.5         \\ 
Value function share layers             &true           \\ 
KL coefficient                    &0.2              \\ 
KL target                    &0.01               \\ 
Entropy coefficient      &0.01          \\ 
Clip parameter                          &0.1           \\
Gradient clip                            &null        \\     
Soft horizon         & False \\        
No done at end: & False \\ 
Normalize actions & False \\     
Simple optimizer & False \\ 
Clip rewards & False \\ 
GAE  $\lambda$                           &0.95      \\ \hline
PPO-\algo\  $\lambda_0$ & 0.99 \\
        PPO-\algo\  $\alpha$ & 0.5 \\
\end{tabular}
\end{center}
    \caption{Procgen experiment configuration details: RLlib PPO's and \algo's hyperparameter values. All the values were shared across all 16 Procgen games.
    }
    \label{tb:ppo exp configs}
\end{table}

\begin{table}[h]
    \begin{center}
        \begin{tabular}{ c|  c }
        $\lambda_0$ &[0.95, 0.97, 0.985, 0.98, 0.99] \\
        $\alpha$ & [0.5, 0.75, 1.0, 3.0, 5.0] \\
        \end{tabular}
    \end{center}
    \caption{\algo's hyperparameter value grid for the Procgen experiments.}
    \label{tb:ppo_hurl}
\end{table}

In order to choose values for PPO-\algo's hyperparameters $\alpha$ and $\lambda_0$, we fixed all of PPO's hyperparameters, took the pre-computed heuristic for each game, and did a grid search over $\alpha$ and $\lambda_0$'s values listed in \cref{tb:ppo_hurl} to maximize the normalized average AUC across all games. To evaluate each hyperparameter value combination, we used 4 runs per game, each run using a random seed and lasting 8M environment interaction steps. The resulting values are listed in \cref{tb:ppo exp configs}. Like PPO's hyperparameters, they were kept fixed for all Procgen environments.

To obtain experimental results, we ran PPO and PPO-\algo\  with the aforementioned hyperparameters on each of 16 games 20 times, each run using a random seed and lasting 8M steps as in \citet{procgen}. We report the 25th, 50th, and 75th-percentile training curves. Each of the reported training curves was computed by smoothing policy performance in terms of unnormalized game scores over the preceding 100 episodes.

\paragraph{Resources.}
Each policy learning run used a single Azure ND6s machine (6 Intel Xeon E5-2690 v4 CPUs with 112 GB memory and base core frequency of 2.6 GHz; 1 P40 GPU with 24 GB memory). A single PPO run took approximately 1.5 hours on average. A single PPO-\algo\  run took approximately 1.75 hours.

\paragraph{Results.} The results are shown in \cref{fig:procgen}. They indicate that, \algo helps despite the highly challenging setup of this experiment: a) environments with a high-dimensional observation space; a) the chosen hyperparameter values being likely suboptimal for individual environments; c) the heuristics naively generated using Monte-Carlo samples from a mixture of policies of wildly varying quality; and d) the lack of policy warm-starting. 
We hypothesize that PPO-\algo's performance can be improved further with environment-specific hyperparameter tuning and a scheme for heuristic-quality-dependent adjustment of \algo's $\lambda$-schedules on the fly.

%
\begin{figure*}[ht!]
 	\centering
    \makebox[\textwidth][c]{
    	\includegraphics[width=1.2\textwidth]{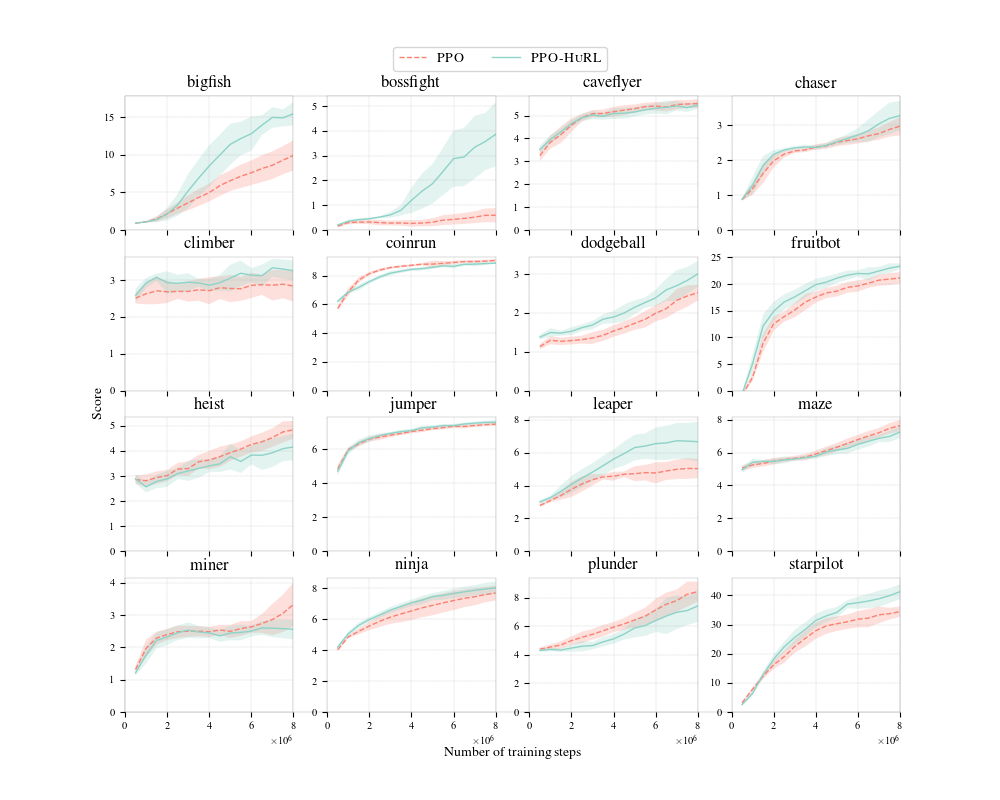}
    }
\caption{\small{PPO-\algo's results on Procgen games. PPO-\algo\ yields gains on half of the games and performs at par with PPO on most of the rest. Thus, on balance, PPO-\algo\ helps despite the highly challenging setup of this experiment, but tuning \algo's  $\lambda$-schedule on the fly depending on the quality of the heuristic can potentially make \algo's performance more robust in settings like this. }}
\label{fig:procgen}
\end{figure*}

\end{document}